%% file: manuscript_v1.tex
\documentclass[11pt]{article}

\usepackage[margin=1in]{geometry}
\usepackage[T1]{fontenc}
\usepackage[utf8]{inputenc}
\usepackage{lmodern}
\usepackage{microtype}
\usepackage{amsmath,amssymb,amsthm,mathtools,bm,mathrsfs}
\usepackage{booktabs,array,enumitem}
\usepackage{longtable}
\usepackage{graphicx}
\usepackage{float}
\usepackage{xcolor}
\usepackage[round,authoryear]{natbib}
\usepackage[hidelinks,hypertexnames=false]{hyperref}
\usepackage[nameinlink,noabbrev]{cleveref}
\hypersetup{
  pdftitle={Honest Physical-Support Inference after Latent Dictionary Learning: Collision Singularities and Minimax Resolution},
  pdfauthor={Guan-Ju Peng},
  pdfsubject={Theory-first arXiv manuscript on honest sparse-support inference with a latent learned dictionary},
  pdfkeywords={dictionary learning, sparse representation, uncertainty quantification, confidence sets, minimax inference}
}

\newtheorem{theorem}{Theorem}[section]
\newtheorem{proposition}[theorem]{Proposition}
\newtheorem{lemma}[theorem]{Lemma}
\newtheorem{corollary}[theorem]{Corollary}
\theoremstyle{definition}

\newtheorem{assumption}[theorem]{Assumption}
\theoremstyle{remark}

\newcommand{\R}{\mathbb R}
\newcommand{\Pp}{\mathbb P}
\newcommand{\E}{\mathbb E}
\newcommand{\1}{\mathbf 1}
\newcommand{\KL}{\operatorname{KL}}
\newcommand{\diam}{\operatorname{diam}}
\newcommand{\tr}{\operatorname{tr}}

\newcommand{\pr}{\mathrm{pr}}
\newcommand{\prof}{\mathrm{prof}}

\newcommand{\tightlist}{%
  \setlength{\itemsep}{0pt}\setlength{\parskip}{0pt}}
\allowdisplaybreaks

\title{Honest Physical-Support Inference after Latent Dictionary Learning:\\
Collision Singularities and Minimax Resolution}
\author{Guan-Ju Peng\thanks{Institute of Data Science and Information Computing,
National Chung Hsing University. Email:
\href{mailto:gjpeng@email.nchu.edu.tw}{\texttt{gjpeng@email.nchu.edu.tw}}.
ORCID: \href{https://orcid.org/0000-0001-5508-9485}{0000-0001-5508-9485}.}}
\date{July 2026}

\begin{document}
\maketitle

\begin{abstract}
\input{sections/00_abstract}
\end{abstract}

\input{sections/01_introduction}
\input{sections/02_collision}

\input{sections/02_related_work}
\input{sections/03_problem}
\input{sections/04_method}
\input{sections/05_main_results}
\input{sections/06_proof_ideas}

\input{sections/08_discussion}

\clearpage
\appendix
\section{Complete proofs of the headline results}
\label{app:main-proofs}
\input{appendices/A_main_proofs}

\clearpage
\part*{Technical appendices}
\input{appendices/B_technical_details}

\bibliographystyle{plainnat}
\bibliography{references}

\end{document}

%% file: sections/00_abstract.tex
Sparse-support uncertainty is usually quantified after treating the
dictionary as known.  This conditioning can be misleading when the
dictionary was itself learned from latent sparse mixtures: near a collision
of coherent atoms, a new signal may reliably identify the active physical
group while the training experiment cannot identify which physical rays
inside that group generated it.  Freezing one learned dictionary converts
this unresolved calibration uncertainty into apparently precise but
label-dependent support statements.

We formulate inference for the active \emph{physical rays}---unit atoms
modulo sign---after latent dictionary learning.  In a fixed-dimensional
Gaussian train--test experiment, the block membership, hierarchy, and
collision shell are supplied, while the child rays, block shape, residual
orientation, and separated anchor are unknown.  We retain every dictionary compatible with
a robust training-moment region, profile the sparse test representation over
all retained dictionaries, and project the surviving configurations onto a
permutation-invariant physical-support space.  The output is therefore
allowed to be cross-sheet inconclusive, group-resolved but child-ambiguous,
or fine-support resolved, instead of being forced to name unstable atom
labels.

The analysis identifies the statistical price and the decision-theoretic
gain of this construction.  Residual block orientation first enters the
latent training density at cubic order, so its information is of order
\(s^6\), where \(s\) is the within-block collision scale.  The confidence
correspondence has high-probability-over-training conditional test coverage,
and its resolution is governed by three separate information budgets:
parent detectability, test-time support separation, and learned-dictionary
orientation.  In the resolved fixed-shell regime its projective Hausdorff
diameter contracts at the minimax-optimal rate
\(s\wedge(\sqrt N\,s^2)^{-1}\), up to constants.  A restricted task theorem
further shows when coefficient asymmetry lets test replication supplement
the training information and when no amount of additional test data can
remove the calibration bottleneck.  These results turn learned-dictionary
uncertainty into an honest, resolution-adaptive sparse statement and an
explicit guide for allocating training versus test measurements.

%% file: sections/01_introduction.tex
\section{Introduction}
\label{sec:introduction}

Sparse representations are useful only when their active atoms support a
scientifically meaningful statement.  In source separation, material
decomposition, and learned feature analysis, the support is interpreted as
the set of components present in a new signal.  Most support-inference
methods nevertheless condition on the dictionary.  That is appropriate
when the atoms are known physical templates.  It is a different statistical
problem when the dictionary has first been estimated from a separate sample
of mixtures whose supports and coefficients were never observed.

The distinction becomes consequential near a collision of coherent atoms.
Suppose that the physical rays inside a coherent block are separated at
scale \(s\), the dictionary-training sample has size \(N\), and the new
signal is observed \(T\) times.  If the dictionary were known, the test
experiment can distinguish child supports once its separation information
\begin{equation}
I_S=\frac{T\beta_-^2s^2}{\sigma_+^2}
\label{eq:intro-support-information}
\end{equation}
is large.  After latent dictionary learning, however, residual block
orientation is learned only through information of order
\begin{equation}
I_D=Ns^6.
\label{eq:intro-dictionary-information}
\end{equation}
There is therefore a nonempty regime \(I_S\gg1\) but \(I_D\lesssim1\).
In that regime, a known-dictionary analysis can distinguish coordinate
supports, while the training experiment does not identify the physical rays
to which those coordinates refer.  Freezing one learned dictionary hides
this mismatch: it can turn unresolved calibration uncertainty into a
precise but alignment-dependent atom label.  The honest conclusion is
coarser---the coherent block is active, but its active physical children are
not yet resolved.

This paper asks for the finest sparse statement justified by the \emph{joint}
train--test experiment.  Given unlabeled training data \(Y^N\) and an
independent replicated test sample \(Z^T\), we construct a random compact
correspondence \(\widehat{\mathfrak C}\) satisfying
\begin{equation}
\Pp_{\rm train}\!\left[
  \Pp_{\rm test}\!\left\{
    \vartheta_\star\in\widehat{\mathfrak C}\mid Y^N
  \right\}\ge1-\alpha_T
\right]\ge1-\alpha_D .
\label{eq:intro-conditional-coverage}
\end{equation}
Here \(\vartheta_\star\) is a marked set of active \emph{physical rays},
modulo simultaneous dictionary--support relabelling; it is not a coordinate
support attached to one arbitrary alignment.  The primary output is the
collection of all physical support statements compatible with both data
sources.  A lower/upper support pair is available as a summary, but would
discard which alternative branches coexist and is therefore not our
estimand.

Sparsity is not merely a regularizer in this problem.  It creates the
information that identifies the atoms.  At the centered collision core,
full activation of the child block preserves a continuous rotational
ambiguity.  In contrast, mixtures generated from proper subdictionaries
leave an orientation signature.  We prove that the observed training
density cancels its orientation dependence through second order and first
reveals it at cubic order.  The cubic coefficient is nonzero exactly when
the exchangeable code law assigns positive power to a proper nonempty
subdictionary.  Squaring the score produces the \(Ns^6\) orientation gate
in \cref{eq:intro-dictionary-information}.

Our procedure keeps that singular training uncertainty explicit.  First, a
simultaneous confidence region for observable training moments is inverted
into the set of all compatible dictionary orbits.  Second, every retained
dictionary, support, and positive coefficient vector is profiled against one
confidence ball for the replicated test mean.  Third, the surviving
dictionary--support pairs are mapped to physical-ray sets.  The truth is
retained on one joint event, so this construction needs neither a
support-labelled calibration sample nor a multiplicity correction over all
candidate supports.

The statistical gain is adaptive \emph{resolution}, not universal
narrowness.  The same output has three scientifically distinct operating
states.  It is cross-sheet inconclusive when even the coherent block and a
separated alternative cannot be distinguished; it identifies the coherent
parent while retaining several child explanations when the block is
detectable but the fine support is not; and it contracts to one transported
physical support branch once both the test and dictionary gates open.  The
corresponding information budgets are
\begin{equation}
I_G^{(r)}=\frac{Tr^2\beta_-^2}{\sigma_+^2},
\qquad I_S=\frac{T\beta_-^2s^2}{\sigma_+^2},
\qquad I_D=Ns^6.
\label{eq:intro-three-gates}
\end{equation}
They have a direct measurement-allocation interpretation.  A closed parent
gate calls for more test information to detect the correct sheet; a closed
support gate calls for more test replication or higher signal-to-noise
ratio; and a closed dictionary gate generally calls for more informative
dictionary training.  A separate task theorem identifies the exceptional
directions in which coefficient asymmetry lets the test experiment break a
training-side symmetry.

\paragraph{Contributions.}
Under the fixed-dimensional, fixed-shell experiment stated in
\cref{sec:problem}, we establish four claims.
\begin{enumerate}[leftmargin=*,label=\textbf{C\arabic*.}]
\item We derive a quotient-complete statistical geometry for latent sparse
training near atom collision.  The all-pair Hellinger/Kullback--Leibler
chord is regular in observable invariants, but its physical inverse loses
powers of \(s\); nuisance-profiled orientation information is of order
\(s^6\).

\item We construct a finite-sample cross-dictionary confidence correspondence with
the two-level honesty guarantee \cref{eq:intro-conditional-coverage}.  It
does not select or align a point dictionary before test inference and
requires no observed training codes.

\item We prove a three-gate diameter bound and matching fixed-shell minimax
lower bounds.  Once the parent and support gates are open, the projective
Hausdorff diameter is, up to constants,
\[
s\wedge\frac{1}{\sqrt N\,s^2},
\]
so the method attains the finest physical resolution permitted by the
joint experiment.

\item We characterize task-dependent symmetry breaking.  For arbitrary
fixed \(2\le r\le q-1\) we give the local coefficient-profiled test
information, and on a separately declared one-dimensional orbit we obtain
a finite matching train--test rate and an explicit two-atom amplitude
contrast crossover.
\end{enumerate}

Each ingredient has close predecessors: group inference under coherence,
honest sparse confidence sets, set-valued prediction, dictionary
identifiability, learned-estimator uncertainty quantification, and
invariant-based orbit recovery.  We do not claim priority over any one of
them.  The contribution is their conjunction in a latent train--test sparse
inverse problem, together with a sharp physical-support diameter through a
collision-singular learned representation.

The remainder of the paper begins with a concrete collision thought
experiment and a closest-work reduction, then defines the statistical target
and confidence construction.  The main theorems separate the obstruction,
the method, its optimality, and the downstream task effect.  A proof roadmap
exposes the mechanism, and the appendices give a complete proof chain,
including the finite-sample moment construction, quotient geometry,
measurability argument, and least-favourable pairs.

%% file: sections/02_collision.tex
\section{A collision thought experiment}
\label{sec:thought-experiment}

Before introducing the full affine-simplex model, consider four coherent
children \(d_1,\ldots,d_4\) around one parent ray and a test signal using two
of them.  Two dictionary orientations \(D\) and \(D'\) can induce nearly the
same distribution of latent sparse training mixtures even though the
physical child rays have moved by order \(s\).  A point learner must choose
one orientation.  Conditional on that choice, a high-SNR test signal may
strongly prefer the coordinate pair \(\{1,2\}\).  But under another
training-compatible orientation, the same coordinate pair denotes a
different set of physical rays.

This produces a gap between two questions:
\begin{center}
\begin{tabular}{p{0.25\linewidth}p{0.31\linewidth}p{0.31\linewidth}}
\toprule
& Known or frozen dictionary & Learned dictionary treated honestly \\
\midrule
Target & coordinate support in one supplied frame
       & active physical rays across all training-compatible frames \\
Uncertainty source & test noise
       & test noise and latent dictionary-training uncertainty \\
Fine answer requires & \(I_S\) large
       & \(I_S\) and \(I_D\) large \\
When only \(I_S\) is large & leaf support
       & coherent parent, children unresolved \\
\bottomrule
\end{tabular}
\end{center}

The distinction is not semantic.  The known-dictionary output can be
correct as a statement about the selected coordinate system while being too
precise as a statement about the physical components.  Our inferential
object therefore does not force a leaf label.  It returns every marked
physical support still compatible with training and test data.

The three possible outputs are summarized by
\begin{equation}
\underbrace{\text{coherent or separated?}}_{I_G^{(r)}}
\quad\longrightarrow\quad
\underbrace{\text{which child support?}}_{I_S}
\quad\longrightarrow\quad
\underbrace{\text{which physical child rays?}}_{I_D}.
\label{eq:decision-sequence}
\end{equation}
If the first gate is closed, the result is cross-sheet inconclusive.  If the
first is open but either of the last two is closed, the result can certify
the coherent parent without pretending to know its active children in
physical space.  If all three gates are open, the same construction resolves
one transported support branch.  The main results show both that this
sequence is attainable and that no uniformly honest method can bypass its
closed gates.

There is a second lesson.  More test data do not automatically compensate
for inadequate dictionary training.  Along a compensated orientation path,
the coefficients may change so that the entire test mean stays fixed while
the active rays move.  Test replication then contains no information about
that direction.  With asymmetric coefficients, however, some orientation
directions change the test mean transversely and the test sample can help.
\Cref{thm:task-symmetry} makes this distinction exact.

%% file: sections/02_related_work.tex
\section{Related work and problem boundary}
\label{sec:related}

The ingredients of our construction have substantial precedents.  Group-level
inference under collinearity, honest sparse confidence sets, set-valued
prediction, dictionary identifiability, learned-estimator uncertainty
quantification, and estimation modulo symmetries are all established topics.
The relevant boundary is therefore not whether any one of these tools has
appeared before.  It is whether they answer the following joint question: after
a dictionary has been learned from noisy mixtures with latent sparse codes,
what physical support statement about a separate fixed sparse signal is honest
as child atoms approach collision?  We organize the comparison around the
changes in design, target, and guarantee that this question requires.

\subsection{Known-design support inference versus a latent-trained design}

For an observed regression design, a coherent group can remain inferentially
detectable even when its individual variables cannot.  Group-bound provides
confidence statements for groups without requiring favorable design
conditions, while hierarchical testing moves from coarse to fine groups of
correlated variables with error control
\citep{Meinshausen2015,MandozziBuhlmann2016}.  Setwise variable-selection
procedures provide a further modern version of this principle
\citep{OrganKenneyGu2026}.  Thus neither ``infer the group when the members are
ambiguous'' nor traversal of a supplied hierarchy is itself new.

Honesty and adaptation also have a mature known-design theory.  Honest sparse
regression confidence sets require separation from statistically
indistinguishable alternatives \citep{NicklVandeGeer2013}.  Model confidence bounds and
confidence sets of models report multiple plausible supports rather than
forcing a single selected model \citep{LiEtAl2019,LewisBattey2025}.  These works
establish the general lesson that support uncertainty should be set-valued and
that adaptation has an information-theoretic price.

Our design is not observed.  It is inferred from an independent sample whose
codes and supports are latent, and an atom index has physical meaning only
together with the dictionary that carries it.  Near collision, two training-
compatible dictionaries can attach the same index to different rays.  The
resulting object is therefore a joint confidence correspondence of marked
dictionary--support configurations, projected onto physical ray sets, rather
than a confidence set for coefficients or model indices under one fixed design.
The block membership, hierarchy, and collision shell are supplied; the child
rays, block shape, and residual orientation are learned from latent mixtures.
What adapts is the finest physical support resolution justified jointly by
training and test information.  In particular, the additional $Ns^6$
orientation gate in our
upper and lower bounds has no analogue in a problem that conditions on the
design.

\subsection{Dictionary recovery versus uncertainty in a downstream sparse
statement}

Dictionary-learning theory supplies conditions for local identifiability of a
generating dictionary, including Bernoulli--Gaussian and exact-sparse models
\citep{GribonvalSchnass2010,WuYu2018}.  It also gives localization guarantees in
the presence of noise and outliers and minimax limits for dictionary
estimation \citep{GribonvalJenattonBach2015,JungEldarGoertz2016}.  Perturbation
analyses quantify the stability of sparse codes and downstream predictors when
the supplied dictionary changes \citep{MehtaGray2013}.  These results explain
when a dictionary or a code can be recovered; they do not turn uncertainty
from the blind training experiment into an honest statement about the active
rays of one new signal.

That distinction matters at a collision.  A small global dictionary error is
not by itself an adequate downstream certificate: the same perturbation can be
irrelevant to detecting the coherent parent yet decisive for naming its active
children.  Conversely, a deterministic coding-stability bound typically
presupposes a margin that is precisely what degenerates along the collision
shell.  Our estimand and loss are consequently support-aware and
permutation-invariant.  We do not claim a new general dictionary-identification
algorithm or a sharper global dictionary-estimation rate.  The contribution is
to propagate the specific nonregular training geometry through a downstream
physical-support decision and to match the resulting fixed-shell resolution by
a lower bound for that same decision.

\subsection{Learned-estimator UQ versus uncertainty in the forward dictionary}

Rigorous uncertainty quantification is available for learned sparse solvers
and for much broader learned predictors.  Confidence intervals have been
constructed for LISTA with a supplied forward operator
\citep{HoppeEtAl2023}, and a subsequent non-asymptotic framework calibrates a
large class of learned estimators using appropriate estimation and calibration
data \citep{HoppeEtAl2024}.  Hence neither UQ for LISTA nor generic UQ for a
learned map is a priority claim of this paper.

Here the uncertain learned object is the forward dictionary itself.  The
training data contain neither the latent dictionary codes nor labelled
test-support calibration pairs, and the target changes physical resolution as
atoms collide.  Residual block orientation is first visible in the latent
training law through a cubic invariant, yielding order-$s^6$ information rather
than a uniform root-$N$ variance correction.  We therefore invert the training
experiment and profile the test representation over every retained dictionary;
we do not freeze a learned operator and calibrate only the conditional error of
a solver.  The current paper analyzes exact compact profiling, not an unrolled
network, a learned stopping rule, or a scalable computational improvement.

\subsection{Set-valued and latent-label prediction versus a fixed multi-atom
physical support}

Set-valued prediction already provides coverage while allowing the output to
grow on ambiguous instances.  This includes least-ambiguous classification
sets, conformal sets with adaptive size, and finite-sample multiclass or
multilabel confidence sets \citep{SadinleLeiWasserman2019,RomanoSesiaCandes2020,
CauchoisGuptaDuchi2021}.  Confidence sets for cluster trees quantify uncertainty
in a learned hierarchy \citep{KimEtAl2016}.  More directly, conformal clustering
handles permutation-ambiguous latent cluster labels, and weighted conformal
clustering studies the additional distributional mismatch involved in moving
from synthetic to latent labels \citep{NathHurAllenCluster2026,NathHurAllen2026}.
These precedents rule out broad claims based merely on latent labels,
permutation alignment, hierarchical outputs, or singleton-to-set adaptation.

Our target is instead the active set of physical rays for one fixed
multi-atom inverse problem.  It is not a random categorical label, and no
sample of true support labels is available for conformal calibration.  The
guarantee is two-level: with high probability over blind dictionary training,
the conditional test-time coverage of the physical-support correspondence is
high.  Its size is controlled in projective Hausdorff distance and is compared
with a minimax lower bound along the same collision shell.  Thus set-valued
output is the reporting language, while the train--test collision geometry
determines which set sizes are statistically unavoidable.

\subsection{Orbit and singular geometry versus an honest train--test target}

Estimating an orbit rather than an arbitrary labelled representative is a
standard response to group symmetries, and low-degree separating invariants can
govern sample complexity under group actions \citep{BandeiraEtAl2023}.  Cubic
invariants and sixth-order information are therefore not new phenomena in
isolation.  Singular mixture models and weak-identification theory likewise
show how degenerating information invalidates ordinary regular asymptotics
\citep{HoNguyen2019,Kaji2021}.  In a regular square-ICA setting, contraction and
local asymptotic normality have also been formulated modulo source
ambiguities \citep{DattaGhoshPolson2025}.  Finally, projecting or profiling a
confidence region to obtain a set-valued estimand has general precedents in
inference for identified sets \citep{ChernozhukovHongTamer2007}.

Our use of this geometry is narrower and downstream-facing.  The child
permutation quotient is carried through a collision-local dictionary experiment
to a marked physical support; the same cubic obstruction then appears in both
the achievable diameter and a least-favorable support construction.  A separate
task-aligned theorem specifies when test coefficient asymmetry breaks that
training symmetry and when it does not.  The paper-level claim is therefore the
following conjunction, not ownership of any component:
\begin{equation*}
\begin{split}
&\text{latent exchangeable dictionary training}
+\text{ collision-singular quotient geometry}\\
&\quad+\text{ training-conditional physical-support profiling}\\
&\quad+\text{ matching fixed-shell resolution limits}.
\end{split}
\end{equation*}
To our knowledge, the existing results above do not establish this exact
conjunction.  This is a bounded closest-work statement, not an absolute
priority claim.  In particular, we do not claim the first learned-dictionary
UQ method, the first group-valued support set, or the first sixth-order orbit
law.

%% file: sections/03_problem.tex
\section{Problem formulation and physical target}
\label{sec:problem}

\subsection{Why the plug-in target is not stable}

The usual two-stage pipeline learns a dictionary \(\widehat D\) and then
solves a known-dictionary sparse-coding problem.  Conditional uncertainty
quantification for the second stage can be statistically valid for the
optimization problem defined by \(\widehat D\), yet fail to answer the
physical question.  If two admissible training dictionaries differ by a
permutation or a small rotation of a coherent child block, the same atom
index refers to different rays.  A narrow interval or a singleton support at
one selected alignment may therefore express confidence about a coordinate
system that the training data did not identify.

A two-ray thought experiment makes the issue visible.  For normalized atoms
with \(d_1^\top d_2=\rho\), distinguishing the two one-atom signals
\(\beta d_1\) and \(\beta d_2\) depends on
\(\beta^2(1-\rho)/\sigma^2\), whereas detecting that their common coherent
group is active depends primarily on \(\beta^2/\sigma^2\).  Hence there is a
regime in which the group is detectable but the child identity is not.  Our
formal model uses a higher-dimensional block because the learned dictionary
also has an orientation degree of freedom that is invisible to low-order
training information.  The inferential target must accommodate both
phenomena: test-time ambiguity among children and training-time uncertainty
about the physical child rays themselves.

\subsection{Independent training and test experiments}

Fix an ambient dimension \(q\ge4\), let \(n=q+1\), and fix a test support
cardinality \(2\le r\le q-1\).  The training sample consists of independent
latent sparse mixtures
\begin{equation}
Y_i=D_\star c_i+\xi_i,
\qquad
\xi_i\sim\mathcal N_q(0,\nu_\star I_q),
\qquad i=1,\ldots,N .
\tag{2.4}
\label{eq:training-model}
\end{equation}
Neither the support nor the nonzero entries of \(c_i\) are observed.  The
flagship unknown-law branch is Bernoulli--Gaussian,
\begin{equation}
c_{ij}=I_{ij}G_{ij},\qquad
I_{ij}\stackrel{\mathrm{iid}}{\sim}\operatorname{Bernoulli}(p_\star),
\qquad
G_{ij}\stackrel{\mathrm{iid}}{\sim}\mathcal N(0,1),
\tag{2.7}
\label{eq:bg-code}
\end{equation}
with \(p_\star\in[p_-,p_+]\Subset(0,1/2)\) and
\(\nu_\star\in[\nu_-,\nu_+]\Subset(0,\infty)\).  We also treat a separately
typed fixed, known exchangeable masked-Gaussian code law; the required
nondegeneracy condition is stated in \cref{sec:results}.

Independently of training, the test experiment observes replicated
measurements of one fixed sparse signal,
\begin{equation}
Z_\ell\stackrel{\mathrm{iid}}{\sim}
\mathcal N_q(\mu_\star,\sigma_\star^2I_q),
\qquad \ell=1,\ldots,T,\qquad
\sigma_\star\in[\sigma_-,\sigma_+] .
\tag{2.8}
\label{eq:test-model}
\end{equation}
On the coherent, or fine-support, sheet,
\begin{equation}
\mu_\star
=\sum_{j\in S_\star}x_{\star j}d_{\star j},
\qquad
S_\star\in\binom{[q]}r,\qquad
x_{\star j}\in[\beta_-,\beta_+],\quad \beta_->0 .
\tag{2.9}
\label{eq:fine-mean}
\end{equation}
The competing separated sheet is
\begin{equation}
\mu_\star=\gamma_\star a_\star,
\qquad
\gamma_\star\in\Gamma_A=[\gamma_-,\gamma_+]\Subset(0,\infty),
\tag{2.10}
\label{eq:anchor-mean}
\end{equation}
where \(a_\star\) is a separated anchor atom.  The symbol \(\partial\)
denotes this sheet; it is not a no-signal null.

\subsection{A supplied coherent block near collision}

Let \(\R^q=U\oplus\operatorname{span}\{u\}\), with \(\dim U=q-1\), and
choose regular-simplex reference vertices \(v_1,\ldots,v_q\in U\) satisfying
\begin{equation}
\sum_{j=1}^qv_j=0,\qquad
v_i^\top v_j=-\frac1{q-1}\ (i\ne j),\qquad
\sum_{j=1}^qv_jv_j^\top=\frac q{q-1}I_U .
\label{eq:simplex-reference}
\end{equation}
For \(z\in U\), define the unit atom
\begin{equation}
d(z)=\sqrt{1-\|z\|^2}\,u+z .
\label{eq:atom-chart}
\end{equation}
The coherent children are
\begin{equation}
z_j=b+Lv_j,\qquad d_j=d(z_j),\qquad j=1,\ldots,q .
\label{eq:affine-simplex}
\end{equation}
For one externally supplied collision scale \(s\in(0,s_0]\), the compact
shell obeys
\begin{equation}
\max_j\|b+Lv_j\|\le C_0s,\qquad
\kappa_-s\le\sigma_{\min}(L)\le\sigma_{\max}(L)\le\kappa_+s .
\tag{2.1}
\label{eq:shell}
\end{equation}
The anchor projector and training-noise variance obey
\begin{equation}
\|P_a-P_{a_0}\|_F\le C_as,
\qquad 0<\nu_-\le\nu\le\nu_+<\infty.
\tag{2.2}
\label{eq:anchor-noise-shell}
\end{equation}
Canonical positive-cap signs remove local
sign ambiguity; the remaining dictionary nonidentifiability is simultaneous
permutation of the \(q\) children.

\begin{lemma}[Affine-simplex chart]
\label{lem:affine-simplex}
Let \(\widetilde z_1,\ldots,\widetilde z_q\in U\) be affinely independent and
put \(b=q^{-1}\sum_j\widetilde z_j\).  There is a unique invertible linear
map \(L:U\to U\) such that
\(\widetilde z_j=b+Lv_j\) for every \(j\).  If
\(\|b\|\le C_bs\), the centered diameter is of order \(s\), and the
normalized affine condition number is bounded, then \cref{eq:shell} holds
after changing fixed shell constants.
\end{lemma}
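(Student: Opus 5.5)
The plan is to build \(L\) directly from the simplex reference frame and then read off the shell constants from singular values. Since \(v_1,\ldots,v_q\) lie in the \((q-1)\)-dimensional space \(U\) and sum to zero, any \(q-1\) of them, say \(v_1,\ldots,v_{q-1}\), form a basis of \(U\). This holds because the Gram matrix \(\frac{q}{q-1}I-\frac1{q-1}\1\1^\top\) restricted to \(q-1\) indices is positive definite. Define \(L\) on this basis by \(Lv_j=\widetilde z_j-b\) for \(j\le q-1\). For the last index, \(Lv_q=-\sum_{j<q}Lv_j=-\sum_{j<q}(\widetilde z_j-b)=\widetilde z_q-b\), using \(\sum_j(\widetilde z_j-b)=0\) by the choice of \(b\). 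Uniqueness is immediate, because a linear map is determined by its values on a basis. Invertibility follows from affine independence: the vectors \(\widetilde z_j-\widetilde z_q=L(v_j-v_q)\), \(j<q\), are linearly independent, so \(L\) has rank \(q-1\).

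For the shell bounds I would use the frame identity in \cref{eq:simplex-reference}. Writing \(w_j=\widetilde z_j-b=Lv_j\), we get
\[
\sum_j w_jw_j^\top=L\Big(\sum_j v_jv_j^\top\Big)L^\top=\tfrac{q}{q-1}LL^\top,
\]
so the singular values of \(L\) are \(\sqrt{(q-1)/q}\) times the square roots of the eigenvalues of the centered scatter matrix \(W=\sum_jw_jw_j^\top\). The upper bound then comes from \(\|W\|\le\sum_j\|w_j\|^2\le q\,\diam^2\), where \(\diam\) is the centered diameter. This gives \(\sigma_{\max}(L)\le C\diam\le\kappa_+s\).

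For the lower bound I would interpret the ``normalized affine condition number'' as the ratio \(\sigma_{\max}/\sigma_{\min}\) of the centered configuration \(W^{1/2}\). This is equivalently \(\sigma_{\max}(L)/\sigma_{\min}(L)\), since the two differ only by the fixed factor \(\sqrt{q/(q-1)}\). Then \(\sigma_{\min}(L)\ge\sigma_{\max}(L)/K\). We also need \(\sigma_{\max}(L)\gtrsim s\), and this comes from the diameter being of order \(s\) from below: some \(\|w_j\|\ge c\,s\), and \(\|w_j\|=\|Lv_j\|\le\sigma_{\max}(L)\|v_j\|\), with \(\|v_j\|=1\) forced by the Gram identity. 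Together these give \(\kappa_-s\le\sigma_{\min}(L)\). Finally, \(\max_j\|b+Lv_j\|\le\|b\|+\sigma_{\max}(L)\le(C_b+\kappa_+)s\), which yields \(C_0\).

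The main obstacle I expect is definitional rather than technical. The statement leaves ``centered diameter'' and ``normalized affine condition number'' informal, so the proof has to fix conventions for them, namely \(\max_j\|w_j\|\) and \(\operatorname{cond}(W^{1/2})\). It must then note that any other reasonable choice, for example pairwise diameter or the condition number of \([\widetilde z_j-\widetilde z_q]_{j<q}\), is equivalent up to constants depending only on \(q\), using the fixed reference simplex. That equivalence is exactly what ``after changing fixed shell constants'' absorbs.
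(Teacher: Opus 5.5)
Your proof is correct and follows essentially the same route as the paper: you define \(L\) by \(v_j\mapsto\widetilde z_j-b\) (you via the basis \(v_1,\ldots,v_{q-1}\), the paper via the observation that \(\sum_jv_j=0\) is the only linear relation among the vertices), deduce invertibility from affine independence, and use the tight-frame identity to compare the singular values of \(L\) with the centered diameter and the condition number. Your explicit identity \(\sum_jw_jw_j^\top=\tfrac{q}{q-1}LL^\top\) and your fixed conventions for ``centered diameter'' and ``condition number'' make precise what the paper leaves as ``comparable''.
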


\begin{proof}
Let \(w_j=\widetilde z_j-b\).  Then \(\sum_jw_j=0\).  The simplex
vertices span \(U\), and their only linear relation is
\(\sum_jv_j=0\).  Therefore the assignment \(v_j\mapsto w_j\) extends to
a unique linear map \(L:U\to U\): it is well defined because the sole
relation is preserved, and it is unique because the \(v_j\) span.
Affine independence of the \(\widetilde z_j\) implies that the \(w_j\)
span \(U\), so \(L\) is invertible.  Finally, the tight-frame identity for
the \(v_j\) makes the centered diameter comparable to
\(\|L\|_{\mathrm{op}}\).  If that diameter is comparable to \(s\) and
the condition number of \(L\) is bounded, then all singular values are
bounded above and below by fixed multiples of \(s\).  Together with
\(\|b\|\le C_bs\) and the fixed unit norms of the \(v_j\), this also gives
\(\max_j\|b+Lv_j\|\le C_0s\), proving \cref{eq:shell} after changing the
fixed constants.
\end{proof}

Thus the observed child block need not be a regular simplex.  The regular
simplex supplies quotient-complete coordinates for every locally affinely
independent collision in the declared shell.

\subsection{The estimand: marked sets of physical rays}

Let \([D]\) denote the dictionary orbit under child permutation, and let
\(\mathcal Q^D_{q,s}\) be the resulting compact quotient shell.  Its physical
transport metric is
\begin{equation}
d_{\mathcal Q}^{D}([D],[D'])
=\min_{\pi\in\mathfrak S_q}
\max\left\{
\max_{1\le j\le q}\|z_j-z'_{\pi(j)}\|,
\|P_a-P_a'\|_F
\right\}.
\tag{2.10b}
\label{eq:physical-dictionary-metric}
\end{equation}
For unit
vectors define the projective metric
\begin{equation}
d_{\pr}([v],[w])=\sqrt{1-(v^\top w)^2},
\label{eq:projective-metric}
\end{equation}
with induced Hausdorff metric \(d_H^{\pr}\).  Write \(\mathcal K(E)\) for
the nonempty compact subsets of a metric space \(E\).  The physical target is
\begin{equation}
\vartheta(D,S)=\bigl(1,\{[d_j]:j\in S\}\bigr),\qquad
\vartheta(D,\partial)=\bigl(0,\{[a]\}\bigr),
\tag{2.11}
\label{eq:physical-target}
\end{equation}
taking values in
\begin{equation}
\mathfrak V_q=\{0,1\}_{\mathrm{disc}}
\times\mathcal K(\mathbb{RP}^{q-1}) .
\label{eq:target-space}
\end{equation}
We equip it with the genuine compact metric
\begin{equation}
d_{\mathfrak V}\{(m,A),(m',A')\}
=|m-m'|+d_H^{\pr}(A,A').
\tag{2.11a}
\label{eq:marked-target-metric}
\end{equation}
The binary mark distinguishes the coherent and separated sheets.  Physical
survival of mark \(1\) alone is the formal parent/group certificate; multiple
surviving ray sets with that mark record unresolved child identity.  Physical
resolution is measured by
\begin{equation}
\diam_{\pr}(C)
=\sup_{(m,A),(m',A')\in C}d_H^{\pr}(A,A'),
\tag{2.11b}
\label{eq:physical-diameter}
\end{equation}
which intentionally omits the mark: sheet certification and within-sheet
ray resolution are different questions.

The target in \cref{eq:physical-target} changes the decision being made.  We
do not ask the procedure to guess an arbitrary child alignment.  We ask it
to retain every physical support statement still compatible with both
experiments.  This permits honest coarsening when the data do not justify a
leaf-level answer and exact physical localization when they do.

\subsection{Standing scope and least-favourable hard core}

The following assumption collects the fixed-shell scope used by the main
results and records the balanced submodel used only to prove the lower
bounds.  It is deliberately narrower than a general coherent dictionary
model.

\begin{assumption}[Fixed-shell train--test experiment]
\label{ass:frozen-shell}
All dimensions, compact parameter bounds, confidence levels, and shell
constants in \cref{eq:training-model,eq:test-model,eq:shell} are fixed.
The code law is either one fixed exchangeable masked-Gaussian law satisfying
\begin{equation}
a_1\ge a_->0,
\qquad \Delta_{\mathcal L}\ge\Delta_->0,
\tag{2.6}
\label{eq:law-nondegeneracy}
\end{equation}
with the quantities defined in
\cref{eq:exchangeable-coefficients,eq:delta-law}, or the
Bernoulli--Gaussian law \cref{eq:bg-code}.  Test coefficients belong to the
positive box in \cref{eq:fine-mean}; test noise lies in
\([\sigma_-,\sigma_+]\Subset(0,\infty)\).

For the least-favourable submodels, fix a unit \(e_1\in U\) and a constant
\begin{equation}
\lambda_\star\in(\kappa_-,\kappa_+)\cap(0,C_0).
\tag{2.3}
\label{eq:balanced-lambda}
\end{equation}
Then take
\begin{equation}
a_0=(u+e_1)/\sqrt2,
\qquad b=0,
\qquad L=\lambda_\star sR,
\qquad P_a=P_{a_0},
\tag{2.3a}
\label{eq:balanced-hard-core}
\end{equation}
The interval \(\Gamma_A\) contains, with a fixed interior margin, the anchor
projection of every fine mean in this balanced submodel:
\begin{equation}
\left\{
a_0^\top\sum_{j\in S}x_jd_j:
S\in\tbinom{[q]}r,\ x_j\in[\beta_-,\beta_+],\
b=0,\ L=\lambda_\star sR
\right\}
\subset\operatorname{int}(\Gamma_A).
\tag{2.10a}
\label{eq:anchor-amplitude-transversality}
\end{equation}
The supplied
\(s_0\) is chosen once, before observing data, small enough for the local
chart, quotient inverse, support matching, and separation margins below.
No adaptation over an unknown collision scale is asserted.
\end{assumption}

Let \(\mathfrak P^p_{q,r}(s)\) denote the resulting joint unknown-\(p\)
parameter class, including both the fine and separated sheets.  The fixed
known-law class is defined analogously after omitting \(p\).  Unless stated
otherwise, all constants below may depend on the fixed quantities in
\cref{ass:frozen-shell}, but not on \(N,T\), or the supplied \(s\).

%% file: sections/04_method.tex
\section{An honest cross-dictionary confidence construction}
\label{sec:method}

The construction is a fixed-dimensional statistical benchmark.  It follows
the inferential question rather than the order in which the proof was
discovered:
\begin{equation}
\begin{aligned}
Y^N
&\longrightarrow
\text{training-compatible dictionary region}\\
&\longrightarrow
\text{joint dictionary--support profiles}
\longrightarrow
\text{physical-support correspondence}.
\end{aligned}
\label{eq:method-roadmap}
\end{equation}
The decisive departure from plug-in sparse coding is the middle step.  No
single \(\widehat D\) is selected and frozen.  Dictionary uncertainty is
kept as a joint nuisance until after the test support and coefficients have
been profiled.

\subsection{Observable training invariants}

Write \(P_j=d_jd_j^\top\) and
\begin{equation}
\Sigma_2(D)=\sum_{j=1}^nP_j,\qquad
\Sigma_4(D)=\sum_{j=1}^nP_j^{\odot2}.
\label{eq:projector-powers}
\end{equation}
Here \(\odot\) is normalized full symmetrization, so a centered Gaussian
vector with covariance \(C\) satisfies
\(\E X^{\otimes4}=3C^{\odot2}\).  Define
\begin{equation}
M_2=\E(YY^\top),
\qquad
K_4=\E(Y^{\otimes4})-3M_2^{\odot2}.
\label{eq:training-moments}
\end{equation}
Under Bernoulli--Gaussian coding,
\begin{equation}
M_2=\nu I_q+p\Sigma_2(D),
\qquad
K_4=3p(1-p)\Sigma_4(D).
\tag{3.3}
\label{eq:bg-moments}
\end{equation}
Because every atom has unit norm, a normalized double contraction gives
\begin{equation}
\frac{\operatorname{Tr}_2K_4}{3n}=p(1-p).
\tag{3.4}
\label{eq:p-trace}
\end{equation}
The branch \(p<1/2\) makes \(p\) and subsequently \(\nu\) identifiable.
The finite-sample procedure does \emph{not} plug sample moments into this
inverse; it retains \((p,\nu,D)\) jointly.

For a fixed known exchangeable code law, let \(K\) be the number of active
atoms and \(\lambda_K\) the conditional active variance.  Put
\begin{equation}
a_1=\frac{\E(\lambda_KK)}n,\quad
a_{2d}=\frac{\E(\lambda_K^2K)}n,\quad
a_{2o}=\frac{\E\{\lambda_K^2K(K-1)\}}{n(n-1)},
\label{eq:exchangeable-coefficients}
\end{equation}
and
\begin{equation}
\Delta_{\mathcal L}=a_{2d}-a_{2o}
=\frac{\E\{\lambda_K^2K(n-K)\}}{n(n-1)}.
\tag{2.5}
\label{eq:delta-law}
\end{equation}
Then
\begin{align}
M_2&=\nu I_q+a_1\Sigma_2(D),
\tag{3.1}
\label{eq:knownlaw-m2}\\
K_4&=3\left[
\Delta_{\mathcal L}\Sigma_4(D)
+(a_{2o}-a_1^2)\Sigma_2(D)^{\odot2}
\right].
\tag{3.2}
\label{eq:knownlaw-k4}
\end{align}
The quantity \(\Delta_{\mathcal L}\) will be the exact gate for whether
latent subdictionary variation reveals residual child orientation.

\begin{lemma}[Observable moment reduction]
\label{lem:moment-reduction}
For the fixed exchangeable code law, the population moments are exactly
\cref{eq:knownlaw-m2,eq:knownlaw-k4}, with
\begin{equation}
\Delta_{\mathcal L}
=\frac{\E\{\lambda_K^2K(n-K)\}}{n(n-1)}.
\label{eq:delta-exact}
\end{equation}
Consequently, \(\Delta_{\mathcal L}>0\) exactly when the law assigns
positive power to a proper nonempty support with positive probability.  For
Bernoulli--Gaussian coding, \cref{eq:bg-moments,eq:p-trace} identify
\begin{equation}
p=\frac{1-\sqrt{1-4u}}2,
\qquad
u=\frac{\operatorname{Tr}_2K_4}{3n},
\qquad
\nu=\frac{\tr M_2-np}{q},
\qquad
\Sigma_2=\frac{M_2-\nu I_q}{p},
\qquad
\Sigma_4=\frac{K_4}{3p(1-p)}.
\tag{3.5}
\label{eq:population-p-nu-inverse}
\end{equation}
\end{lemma}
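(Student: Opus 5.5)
The plan is a direct cumulant computation conditional on the latent support, followed by exchangeability averaging. Write \(Y=Dc+\xi\), where, given the active set \(A\) with \(|A|=K\), the code \(c_A\) is centered Gaussian with covariance \(\lambda_K I_A\). This is the masked-Gaussian structure; I will assume this reading of ``conditional active variance'', which makes the law exchangeable in the atom labels. Conditionally on \(A\), \(Y\) is centered Gaussian with covariance \(C_A=\nu I_q+\lambda_K\sum_{j\in A}P_j\).

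For the second moment, take expectations of \(C_A\) and use \(\Pp(j\in A)\) exchangeability:
\[
\E\Bigl[\lambda_K\sum_{j\in A}P_j\Bigr]=\sum_j\E[\lambda_K\1\{j\in A\}]P_j=\frac{\E(\lambda_KK)}{n}\Sigma_2(D).
\]
This gives \cref{eq:knownlaw-m2}.

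For the fourth moment, Gaussianity conditional on \(A\) gives \(\E(Y^{\otimes4}\mid A)=3C_A^{\odot2}\), so \(K_4=3(\E C_A^{\odot2}-(\E C_A)^{\odot2})\). The \(\nu I\) terms cancel in this covariance-of-\(C_A\) expression because they are deterministic; the cross terms \(\nu I\odot(\cdot)\) cancel as well, since they are linear in \(C_A\). What remains is
\[
3\sum_{j,k}\bigl(\E[\lambda_K^2\1\{j,k\in A\}]-a_1^2\bigr)P_j\odot P_k .
\]
Exchangeability gives \(\E[\lambda_K^2\1\{j\in A\}]=a_{2d}\) on the diagonal and \(a_{2o}\) off the diagonal, using \(\E\#\{(j,k)\in A^2:j\ne k\}=\E K(K-1)\). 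Rewriting the double sum as \(a_{2o}\Sigma_2^{\odot2}+(a_{2d}-a_{2o})\Sigma_4\) yields \cref{eq:knownlaw-k4}.

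The formula for \(\Delta_{\mathcal L}\) is algebra:
\[
\frac{\E\lambda_K^2K}{n}-\frac{\E\lambda_K^2K(K-1)}{n(n-1)}=\frac{\E\lambda_K^2K\,[(n-1)-(K-1)]}{n(n-1)}.
\]
The integrand \(\lambda_K^2K(n-K)\) is nonnegative, and it is positive exactly when \(0<K<n\) and \(\lambda_K>0\). This gives the stated characterization of \(\Delta_{\mathcal L}>0\).

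For Bernoulli--Gaussian coding, \(K\sim\mathrm{Bin}(n,p)\) and \(\lambda_K=1\). Then \(a_1=p\), \(a_{2d}=p\), and \(a_{2o}=p^2\), so \(\Delta=p(1-p)\) and the \(\Sigma_2^{\odot2}\) coefficient \(a_{2o}-a_1^2\) vanishes. This recovers \cref{eq:bg-moments}.

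For \cref{eq:p-trace}, I need \(\operatorname{Tr}_2(P_j^{\odot2})\) under the paper's normalized double contraction. For a unit vector \(d\), the symmetrization of \(d^{\otimes4}\) is \(d^{\otimes4}\) itself, and contracting two index pairs gives \(\|d\|^4=1\). Hence \(\operatorname{Tr}_2\Sigma_4=n\), and \(\operatorname{Tr}_2K_4/(3n)=p(1-p)=u\). I expect this normalization convention to be the main point needing care: \(\operatorname{Tr}_2\) must be the full contraction \(\sum_{a,b}T_{aabb}\), which is what the displayed normalization implies.

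The inverse formulas then follow in order. Since \(p<1/2\), \(p\) is the smaller root of \(p^2-p+u=0\), namely \(p=(1-\sqrt{1-4u})/2\). Taking the trace of \(M_2\) with \(\tr\Sigma_2=n\) gives \(\tr M_2=q\nu+np\), which determines \(\nu\). Finally, \(\Sigma_2\) and \(\Sigma_4\) are obtained by solving the linear relations in \cref{eq:bg-moments}.
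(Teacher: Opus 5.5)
Your proposal is correct and follows essentially the same route as the paper's proof. Both condition on the latent support to get a Gaussian with covariance $\nu I_q+\lambda_K\sum_j I_jP_j$, use exchangeability to produce $a_1,a_{2d},a_{2o}$, apply the conditional Wick identity $K_4=3\{\E(C_I^{\odot2})-(\E C_I)^{\odot2}\}$ with a diagonal/off-diagonal split, and specialize to $a_1=a_{2d}=p$, $a_{2o}=p^2$ for the Bernoulli branch. Your explicit checks that $\operatorname{Tr}_2P_j^{\odot2}=1$ and $\tr\Sigma_2=n$ simply fill in steps the paper leaves implicit.
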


This lemma explains why the construction uses both second and fourth
moments.  Second-order information locates the block and its quadratic
shape, while fourth-order information separates activation uncertainty and
exposes the cubic orientation invariant below.

The quotient-complete dictionary coordinates are
\begin{equation}
G_2=LL^\top,\qquad
G_3=L^{\otimes3}T_q,\qquad
T_q=\sum_{j=1}^qv_j^{\otimes3}.
\label{eq:quotient-invariants}
\end{equation}
The quadratic invariant \(G_2\) determines shape.  The cubic invariant
\(G_3\) is the first term that retains residual physical orientation modulo
the finite simplex group.  Concretely, expanding the observable fourth-order
projector moment reduces its degree-three cross block to a fixed linear image
of
\[
B_3=qb^{\otimes3}
  +\frac q{q-1}\operatorname{Sym}_3(b\otimes G_2)+G_3,
\]
so the fourth moment carries the cubic orientation coordinate even though
the latent codes are unobserved.  These coordinates are used to analyze the method;
the implemented region is defined directly through the observable moment
equations.

\subsection{A training confidence region, not a point dictionary}

Let \(\widehat M_2\) and \(\widehat M_4\) be fixed Borel robust moment
estimators and set
\begin{equation}
\widehat K_4=\widehat M_4-3\widehat M_2^{\odot2}.
\label{eq:sample-cumulant}
\end{equation}
The coverage theory requires deterministic radii satisfying the uniform
rectangular event
\begin{equation}
\inf_{p,\nu,D}
\Pp_{p,\nu,D}^N\left\{
\|\widehat M_2-M_2\|_F\le\epsilon_{2,N},\quad
\|\widehat K_4-K_4\|_F\le\epsilon_{K,N}
\right\}
\ge1-\alpha_D .
\tag{3.9}
\label{eq:training-rectangle}
\end{equation}
The coordinatewise median-of-means construction in the supplement is one
fully specified, uniformly valid implementation.  It is conservative but
has \(\epsilon_N=(\epsilon_{2,N}^2+\epsilon_{K,N}^2)^{1/2}=O(N^{-1/2})\)
above a fixed block threshold.  Any sharper Borel construction may replace
it only after re-establishing \cref{eq:training-rectangle}; empirical tuning
is not a substitute for this interface.

For the unknown-\(p\) model, invert the rectangle jointly:
\begin{equation}
\widehat{\mathcal K}_{q,s}^{p}
=\left\{
(p,\nu,[D]):
\begin{array}{l}
p\in[p_-,p_+],\ \nu\in[\nu_-,\nu_+],\
[D]\in\mathcal Q^D_{q,s},\\
\|\widehat M_2-\{\nu I_q+p\Sigma_2(D)\}\|_F
\le\epsilon_{2,N},\\
\|\widehat K_4-3p(1-p)\Sigma_4(D)\|_F
\le\epsilon_{K,N}
\end{array}
\right\}.
\tag{3.10}
\label{eq:training-profile}
\end{equation}
The known-law profile replaces the last two population expressions by
\cref{eq:knownlaw-m2,eq:knownlaw-k4} and omits \(p\).  If the robust-moment
block threshold is not met, the method returns the full compact training
shell before evaluating any zero-block radius.

\subsection{One test ball profiles every support and dictionary}

Let
\begin{equation}
\bar Z=T^{-1}\sum_{\ell=1}^TZ_\ell,
\qquad
\tau_{T,q}=\frac{\sigma_+}{\sqrt T}
\sqrt{\chi^2_{q,1-\alpha_T}}.
\label{eq:test-radius}
\end{equation}
For every retained dictionary and fine support, define the box-constrained
profile loss
\begin{equation}
\ell_{\prof}(\bar Z,D,S)
=\min_{x_j\in[\beta_-,\beta_+]}
\left\|\bar Z-\sum_{j\in S}x_jd_j\right\|,
\label{eq:fine-profile-loss}
\end{equation}
and for the separated sheet
\begin{equation}
\ell_{\prof}(\bar Z,D,\partial)
=\min_{\gamma\in\Gamma_A}\|\bar Z-\gamma a\|.
\label{eq:anchor-profile-loss}
\end{equation}
Let \(\zeta=[D,S]\) be the simultaneous dictionary--support orbit.  The
raw joint profile is
\begin{equation}
\mathcal R_{q,r,s}^{p}
=\left\{
(p,\nu,\zeta):
\begin{array}{l}
(p,\nu,[D])\in\widehat{\mathcal K}_{q,s}^{p},\\
S\in\binom{[q]}r\sqcup\{\partial\},\\
\ell_{\prof}(\bar Z,D,S)\le\tau_{T,q}
\end{array}
\right\}.
\tag{3.14}
\label{eq:raw-profile}
\end{equation}
The same Gaussian ball is used for all candidates.  On
\(\|\bar Z-\mu_\star\|\le\tau_{T,q}\), the true dictionary, support, and
coefficient witness survive simultaneously, so no union bound over the
continuum of dictionaries or the finite set of supports is required.

The reported confidence correspondence is the physical image
\begin{equation}
\widehat{\mathfrak C}_{q,r,s}^{p}
=\left\{
\vartheta(D,S):
\exists\,p,\nu\text{ with }(p,\nu,[D,S])\in\mathcal R_{q,r,s}^{p}
\right\}
\tag{3.15}
\label{eq:reported-set}
\end{equation}
when the raw profile is nonempty.  If it is empty, a fixed admissible
null-marked singleton \(\{v^\dagger\}\) makes the random set total and
measurable.  This fallback is reported as an administrative failure status;
it creates neither coverage nor evidence for the separated sheet.

\begin{proposition}[Well-defined exact profile and one-event retention]
\label{prop:profile-well-defined}
The raw profile in \cref{eq:raw-profile} has compact sections, and the
reported object in \cref{eq:reported-set} is a Borel measurable, nonempty,
compact-valued random correspondence.  On the intersection of the training
rectangle \cref{eq:training-rectangle} and the test event
\begin{equation}
\|\bar Z-\mu_\star\|\le\tau_{T,q},
\label{eq:test-truth-event}
\end{equation}
the true dictionary orbit, sheet, support, and coefficient witness belong
to the raw profile, and hence
\(\vartheta_\star\in\widehat{\mathfrak C}_{q,r,s}^{p}\).
\end{proposition}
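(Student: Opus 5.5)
The proof splits into three parts: compactness of the sections, measurability of the reported correspondence, and retention of the truth on the joint event. All three rest on continuity of the defining constraints on a compact parameter space.

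\emph{Compact sections.} Fix the data \((\widehat M_2,\widehat K_4,\bar Z)\). The parameter space \([p_-,p_+]\times[\nu_-,\nu_+]\times\mathcal Q^D_{q,s}\) is compact: the shell \cref{eq:shell,eq:anchor-noise-shell} is a closed bounded set of \((b,L,P_a)\), and its quotient by the finite group \(\mathfrak S_q\) is compact under \(d_{\mathcal Q}^D\).

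The maps \(D\mapsto\Sigma_2(D)\) and \(D\mapsto\Sigma_4(D)\) are continuous and permutation-invariant, so they descend to the quotient. Hence the two inequalities in \cref{eq:training-profile} define a closed subset, and \(\widehat{\mathcal K}^p_{q,s}\) is compact.

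For the test constraint:
\begin{itemize}
\item The box-constrained loss \(\ell_{\prof}(\bar Z,D,S)\) is a minimum over a compact box of a jointly continuous function. By Berge's maximum theorem it is jointly continuous in \((\bar Z,D)\).
\item The same holds for \cref{eq:anchor-profile-loss}.
\item Relabelling \((D,S)\) simultaneously leaves the loss unchanged, so it is well defined on orbits \(\zeta=[D,S]\).
\end{itemize}
The orbit space is compact, being a finite union over \(S\in\binom{[q]}r\sqcup\{\partial\}\) of quotients of compact sets. So each section of \cref{eq:raw-profile} is a closed subset of a compact set, hence compact.

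\emph{Reported set.} The map \(\vartheta\) of \cref{eq:physical-target} is continuous into \(\mathfrak V_q\). Ray sets move continuously in \(d_H^{\pr}\) as the atoms \(d_j=d(z_j)\) and \(a\) move, and \(\vartheta\) is invariant under simultaneous relabelling. Therefore the image \cref{eq:reported-set} of the compact section, after projecting out \((p,\nu)\), is compact. Nonemptiness holds trivially, either from a nonempty raw profile or from the fallback singleton \(\{v^\dagger\}\).

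\emph{Measurability.} This is the main obstacle. I would write the raw profile as \(\{\omega: F(\text{data},\omega)\le0\}\), where \(F\) is the maximum of the three continuous constraint slacks and is jointly continuous on (data)\(\times\)(compact parameter space). The sublevel-set correspondence \(\text{data}\mapsto\{F\le0\}\) is then upper hemicontinuous with closed graph, hence Borel measurable as a map into \(\mathcal K\) with the Hausdorff (Effros) structure. This uses the standard fact that a closed-graph, compact-valued correspondence into a compact metric space is measurable; see Aliprantis--Border, or the Kuratowski--Ryll-Nardzewski theorem.

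Two further steps are needed:
\begin{itemize}
\item Composing with the continuous \(\vartheta\) preserves measurability, since the image of a measurable compact-valued correspondence under a continuous map is measurable.
\item The event \(\{\mathcal R\neq\emptyset\}=\{\min F\le0\}\) is Borel, because \(\min F\) is continuous in the data. Pasting the two branches on this Borel partition gives a measurable total correspondence.
\end{itemize}
The estimators \(\widehat M_2,\widehat K_4\) are Borel by assumption and \(\bar Z\) is continuous in \(Z^T\), so the composition is Borel in \((Y^N,Z^T)\).

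\emph{Retention.} On the rectangle \cref{eq:training-rectangle}, the true \((p_\star,\nu_\star,[D_\star])\) satisfies both inequalities of \cref{eq:training-profile} by the population identities of \cref{lem:moment-reduction}, namely \cref{eq:bg-moments} or \cref{eq:knownlaw-m2,eq:knownlaw-k4}. It therefore lies in \(\widehat{\mathcal K}\).

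On the event \cref{eq:test-truth-event}, consider the two sheets:
\begin{itemize}
\item On the fine sheet, the true coefficients \(x_\star\) lie in the box by \cref{eq:fine-mean}. Hence \(\ell_{\prof}(\bar Z,D_\star,S_\star)\le\|\bar Z-\mu_\star\|\le\tau_{T,q}\).
\item On the separated sheet, the same bound holds using \(\gamma_\star\in\Gamma_A\).
\end{itemize}
Thus \((p_\star,\nu_\star,[D_\star,S_\star])\in\mathcal R\). The raw profile is then nonempty, so the fallback is not triggered, and \(\vartheta_\star=\vartheta(D_\star,S_\star)\in\widehat{\mathfrak C}^p_{q,r,s}\). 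The known-law case is identical with \(p\) omitted.
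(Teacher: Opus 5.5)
Your proposal is correct. The compactness and one-event retention parts match the paper's argument: compact-fiber minimization of the profile loss descending to the quotient, and the true dictionary, support and coefficient (or $\gamma_\star$) witness surviving the common test ball.

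The measurability step takes a genuinely different route. The paper treats the training constraint as merely Borel in the raw data $y$ and continuous in the quotient parameter, so the graph of $\mathcal R$ is only Borel. It therefore invokes the Arsenin--Kunugui projection theorem (projections of Borel sets with $\sigma$-compact sections) to show that each hit event $\{\mathcal R\cap O\neq\varnothing\}$, and in particular nonemptiness, is Borel.

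You instead use that the data enter only through the finite-dimensional statistic $(\widehat M_2,\widehat K_4,\bar Z)$, and that every constraint slack is jointly continuous in (statistic, parameter). The profile is then a closed-graph, upper hemicontinuous correspondence of that statistic, and measurability follows elementarily:
\begin{itemize}
\item For closed $C$, the set $\{\mathcal R\cap C\neq\varnothing\}$ is the projection of a closed set along a compact factor, hence closed.
\item Open sets are $F_\sigma$ in the compact parameter space, so hit sets of opens are $F_\sigma$.
\item Nonemptiness is the closed condition $\{\min F\le0\}$.
\item Borel measurability in $(Y^N,Z^T)$ follows by composing with the Borel estimators.
\end{itemize}
This buys an elementary proof with no descriptive set theory, plus extra regularity (upper hemicontinuity in the summary statistic). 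The paper's route is more general: it covers constraints that are only Carath\'eodory, meaning Borel in the data and continuous in the parameter, without continuous factorization through a finite statistic.

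Two small corrections. Kuratowski--Ryll-Nardzewski is a selection theorem and is not the fact you need; the one-line projection argument above is. You should also mention the below-threshold branch. There the training region is the full compact shell, which is deterministic given $N$, so it is trivially measurable, compact, and contains the truth.
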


The proposition is the coverage mechanism in one line.  Profiling enlarges
the output but does not spend confidence level: every candidate is tested
against the same ball, and the true witness needs to survive only once.

\subsection{What the procedure reports}

The object in \cref{eq:reported-set} has a direct decision interpretation.
If both marks survive, the data do not distinguish the coherent and
separated sheets.  If only coherent-sheet targets survive but several
transported supports remain, the group is identified while child identity
is unresolved.  If one transported branch remains, the output has fine
physical resolution up to the learned-dictionary radius.  If only the
separated mark survives, the anchor explanation remains alone.

\begin{enumerate}[label=\textbf{Step \arabic*.},leftmargin=*]
\item Compute \(\widehat M_2,\widehat M_4,\widehat K_4\) and certified radii
      satisfying \cref{eq:training-rectangle}.
\item Construct the complete training-compatible region
      \cref{eq:training-profile}; do not select a representative dictionary.
\item For every retained dictionary orbit, profile every admissible fine
      support and the separated sheet using
      \cref{eq:fine-profile-loss,eq:anchor-profile-loss}.
\item Retain all candidates inside the common test ball and map them through
      \(\vartheta\).
\item Report the surviving marks, physical diameter, and support-branch
      geometry, together with the empty-profile status when applicable.
\end{enumerate}

These are exact compact feasibility operations in fixed dimension.  The
construction is statistically well defined, but we do not claim a
polynomial-time or high-dimensional solver.  A numerical approximation is
honest only if it is a certified outer approximation that cannot remove a
feasible truth.

%% file: sections/05_main_results.tex
\section{Main results}
\label{sec:results}

The results follow the statistical narrative: first the obstruction in the
latent training experiment, then the honest construction, then its
optimality, and finally the circumstances under which the downstream test
task can break the training-side symmetry.

\subsection{Collision-singular training geometry}

Let \(P_j=d_jd_j^\top\), and define
\begin{equation}
T_q=\sum_{j=1}^qv_j^{\otimes3},
\qquad G_2=LL^\top,
\qquad G_3=L^{\otimes3}T_q.
\label{eq:result-invariants}
\end{equation}
For \(\theta=(D,\nu)\), put
\begin{equation}
F(\theta)=(b,P_a,G_2,G_3,\nu)
\label{eq:F-coordinate}
\end{equation}
and let \(\delta_F\) be Euclidean/Frobenius distance in these coordinates.
On the Bernoulli branch, \(\delta_{F,p}^2=|p-p'|^2+\delta_F^2\).
The tensor \(G_2\) records the quadratic block shape; \(G_3\) is the first
invariant that retains residual orientation modulo the finite simplex
permutation group.  Physical rotations are not quotiented out.

\begin{theorem}[Exchangeable sparse-code training geometry]
\label{thm:training-geometry}
Under \cref{ass:frozen-shell}, there exist fixed \(s_0>0\) and
\(0<c<C<\infty\) such that the following statements hold.

\emph{(i) Fixed-law statistical chord.}
For every ordered pair in the same supplied shell,
\begin{equation}
c\delta_F^2(\theta,\theta')
\le H^2(f_\theta,f_{\theta'})
\le\KL(f_\theta\|f_{\theta'})
\le C\delta_F^2(\theta,\theta').
\tag{4.1}
\label{eq:fixed-law-chord}
\end{equation}

\emph{(ii) Unknown-\(p\) statistical chord.}
Uniformly over \(p,p'\in[p_-,p_+]\),
\begin{equation}
c\delta_{F,p}^2
\le H^2(f_{p,\theta},f_{p',\theta'})
\le\KL(f_{p,\theta}\|f_{p',\theta'})
\le C\delta_{F,p}^2.
\tag{4.2}
\label{eq:unknown-p-chord}
\end{equation}
Both all-pair statements remain valid on the full closed fixed shell.

\emph{(iii) Physical quotient inverse.}
After an optimal child permutation \(\pi\),
\begin{align}
\max_j\|z_j-z'_{\pi(j)}\|+\|P_a-P_a'\|_F
\le C\bigl(&\|b-b'\|+\|P_a-P_a'\|_F
\notag\\
&+s^{-1}\|G_2-G_2'\|_F
+s^{-2}\|G_3-G_3'\|_F\bigr).
\tag{4.3}
\label{eq:physical-quotient-inverse}
\end{align}

\emph{(iv) Cubic orientation information.}
At the centered isotropic core in \cref{eq:balanced-hard-core}, let
\(R_t=R\exp(t\Omega)\), where \(\Omega^\top=-\Omega\).  Then
\(f_0\) below denotes the collapsed \(s=0\) mixture with the same regular
nuisances and activation law, and
\begin{equation}
\left.\frac{\partial}{\partial t}f_{s,R_t}\right|_{t=0}
=s^3g_{R,\Omega}
+O_{L^2(f_0^{-1})}(s^4\|\Omega\|_F),
\tag{4.4}
\label{eq:cubic-density-score}
\end{equation}
and \(g_{R,\Omega}\ne0\) for every nonzero quotient-orientation tangent if
and only if \(\Delta_{\mathcal L}>0\).

More generally, at an interior shell point use the point-adapted chart
\begin{equation}
L=sRG,\qquad
L_t=R(sG+tS)e^{t\Omega},\qquad
\dot L=R(S+sG\Omega),
\tag{4.4a}
\label{eq:right-body-chart}
\end{equation}
with \(G=G^\top>0\), \(S=S^\top\), and
\(\Omega^\top=-\Omega\).  After profiling the regular nuisances
\((p,\nu,b,P_a,S)\), with \(p\) omitted for a fixed known law, the
orientation Fisher information satisfies
\begin{equation}
cs^6\|\Omega\|_F^2
\le I_{R\mid\eta,\theta}(\Omega)
\le Cs^6\|\Omega\|_F^2.
\tag{4.5}
\label{eq:profiled-orientation-information}
\end{equation}
The Fisher/QMD statement is restricted to compact interior subbranches.  In
particular, on the centered core,
\begin{equation}
H^2(f_{s,R},f_{s,R'})
\asymp\KL(f_{s,R}\|f_{s,R'})
\asymp s^6d_{O(U)/\mathfrak S_q}^2(R,R').
\label{eq:orientation-chord}
\end{equation}
\end{theorem}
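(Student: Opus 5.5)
The plan is to route everything through a local expansion of the training density in the collision scale \(s\), organized by the quotient coordinates \(F(\theta)=(b,P_a,G_2,G_3,\nu)\).

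Write the training law as a finite Gaussian mixture over active supports \(A\subseteq[n]\) of Gaussians \(\mathcal N(0,\nu I_q+\sum_{j\in A}\lambda_{|A|}P_j)\). Substitute \(d_j=d(b+Lv_j)\) and expand each \(P_j\) to third order in \(z_j\).

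\emph{Step 1: cumulant expansion.} Each mixture component, and hence the density ratio \(f_\theta/f_0\), becomes a polynomial in the power sums \(\sum_{j\in A}z_j^{\otimes k}\) for \(k\le 3\), up to an \(O(s^4)\) remainder. After averaging over exchangeable supports, these enter only through \(b\), \(G_2\), \(G_3\), and \(\Delta_{\mathcal L}\)-weighted subset sums. The key algebraic fact is that the full-block sums \(\sum_{j\le q}v_j=0\) and \(\sum_j v_jv_j^\top=\tfrac{q}{q-1}I_U\) kill orientation at first and second order. The cubic term \(\sum_{j\in A}(Lv_j)^{\otimes 3}\), averaged over proper subsets, produces a multiple of \(\Delta_{\mathcal L}G_3\). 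This is the same mechanism as the \(B_3\) reduction and \cref{lem:moment-reduction}.

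\emph{Step 2: part (iv).} Differentiating this expansion along \(R_t=R\exp(t\Omega)\) gives \cref{eq:cubic-density-score}, with
\[
g_{R,\Omega}=\Delta_{\mathcal L}\,\lambda_\star^3\,\langle \partial_t (R_tT_q)|_{t=0},\,\mathcal H_3\rangle\, f_0,
\]
where \(\mathcal H_3\) is a fixed cubic Hermite-type tensor score. Its nonvanishing reduces to the statement that \(\Omega\mapsto \Omega\cdot T_q\) (the Lie-algebra action) is injective modulo the finite stabilizer. That holds because the stabilizer of \(T_q\) in \(O(U)\) is the finite simplex group, so its Lie algebra is trivial; this also gives the "if and only if". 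For \cref{eq:profiled-orientation-information}, use the chart \cref{eq:right-body-chart}. The nuisance scores for \((p,\nu,b,P_a,S)\) enter at orders \(s^0\), \(s^1\), and \(s^2\) and are polynomials of degree at most two in the block coordinates. The orientation score at order \(s^3\) carries a degree-three component not in their span, with a uniform angle on compact interior subbranches. Projecting it off the nuisance span therefore leaves squared norm \(\asymp s^6\|\Omega\|_F^2\). Integrating the score along geodesics in \(O(U)/\mathfrak S_q\) gives \cref{eq:orientation-chord}.

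\emph{Step 3: parts (i)–(ii).} For the upper bound, use the mixture \(\chi^2\)-bound \(\KL\le\chi^2\). Note that \(f_\theta-f_{\theta'}\) is, to leading order, a linear functional of \(F(\theta)-F(\theta')\) against fixed Hermite tensors, with remainder controlled by \(s\,\delta_F\). This requires writing the difference as an integral along a straight path in \(F\)-coordinates and bounding the \(L^2(f_0^{-1})\) norm of the derivative of \(f\) with respect to \(F\). The Gaussian mixtures have bounded covariance ratio, so these norms are finite. For the lower bound, compare \(H^2\) with second and fourth moment differences: \(H^2\gtrsim\|M_2-M_2'\|_F^2+\|K_4-K_4'\|_F^2\) by a bounded-test-function argument on polynomials of degree at most four. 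Then show that the map from \(F\) to \((M_2,K_4)\) is bi-Lipschitz on the shell. This uses \cref{lem:moment-reduction} together with the \(B_3\) cross-block reduction, the condition \(\Delta_{\mathcal L}\ge\Delta_-\), and, on the Bernoulli branch, \cref{eq:p-trace} for \(p\). The all-pair (non-local) version follows by compactness combined with identifiability of the orbit from \((b,P_a,G_2,G_3,\nu)\).

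\emph{Step 4: part (iii).} From \(G_2\), recover \(L\) up to right multiplication by an orthogonal \(Q\) via \(L=G_2^{1/2}Q\); the map \(G_2\mapsto G_2^{1/2}\) is Lipschitz with constant \(s^{-1}\) because \(\sigma_{\min}\asymp s\). Then \(G_3=G_2^{3/2}(Q^{\otimes3}T_q)\), so \(Q^{\otimes3}T_q\) is recovered with error \(s^{-3}\|\Delta G_3\|+s^{-2}\|\Delta G_2\|\). Local injectivity of \(Q\mapsto Q^{\otimes3}T_q\) modulo \(\mathfrak S_q\), again from the trivial stabilizer Lie algebra plus compactness, recovers \(Q\). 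Multiplying back by \(\|L\|\asymp s\) yields \cref{eq:physical-quotient-inverse}.

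\textbf{Main obstacle.} I expect the hardest part to be the uniform lower bound in (i) near degenerate directions. There the linear term in \(\Delta F\) must dominate the \(O(s)\) cross-order remainders uniformly, including for pairs far apart in \(F\). I would handle this by working with the moment map \((M_2,K_4)\) directly, which is polynomial and exact, rather than with density expansions.
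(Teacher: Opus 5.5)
Your architecture matches the paper's: a subset-aggregated expansion in the child power sums, a $\Delta_{\mathcal L}$-weighted cubic orientation term, a moment-based Hellinger lower chord, and a polar/normalized-cubic physical inverse. Step~4 is essentially the paper's argument. One caveat there: the global inverse modulo $\mathfrak S_q$ needs $\operatorname{Stab}_{O(U)}(T_q)=\mathfrak S_q$ exactly, which you assert rather than prove, since a trivial stabilizer Lie algebra only gives finiteness. The paper gets it from the fact that the positive spherical maxima of $x\mapsto T_q[x,x,x]$ are exactly the vertices $v_j$.

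The first genuine gap is Step~3. There is no straight path in $F$-coordinates along which to differentiate $f$, because the image of $F$ is curved. For example, at the isotropic core with $G_2=G_2'$, the midpoint of two distinct tensors $\lambda_\star^3s^3O^{\otimes3}T_q$ has strictly smaller norm, so it equals $L^{\otimes3}T_q$ for no $L$ with $LL^\top=G_2$. The paper instead splits $f=\mathcal P_{\le3}+R_{\ge4}$:
\begin{itemize}
\item After exact subset aggregation, the degree-$\le3$ part depends on the children only through $B_1,B_2,B_3$ and their products, which are Lipschitz in $(b,G_2,G_3)$ by (5.22)--(5.24).
\item The remainder is differentiated along an optimally aligned straight path in labelled child coordinates. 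There its derivative is $O(s^3)$, and part~(iii) converts $s^3\max_j\|z_j-z'_{\pi(j)}\|$ into $Cs\,\delta_F$.
\end{itemize}
So (iii) is an input to the upper half of (i), not a consequence of it. Also, the $\chi^2$ denominator $f_{\theta'}$ is handled by the cross-weight Gaussian envelope and mixture Cauchy--Schwarz, not by an $L^2(f_0^{-1})$ bound.

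For the lower half, ``compactness plus identifiability'' cannot give constants uniform in $s$, because the shell collapses at $s=0$. (Separately, the moment-to-Hellinger step is Cauchy--Schwarz against a uniform eighth-moment envelope; polynomials are not bounded test functions.) You rightly flag this as the main obstacle, and the missing idea is concrete. The collapsed linear operator $\mathscr A_q$ sends $(\Delta B_1,\Delta B_2,\Delta B_3,\Delta w,\Delta\nu)$ to the leading polynomial increments of $\nu$, $\Sigma_2[X,X]$ and $\Sigma_4[X,X,X,X]$ in $X=tu+x$. It is injective, which you check by comparing powers of $t$ with the anchor chart included. After quotient alignment the remainder is at most $C_qs\|\Delta\Xi\|$ with $\|\Delta\Xi\|\asymp\delta_F$, so choosing $s_0\le\sigma_q/(2C_q)$ yields a uniform secant.

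Step~2 has further gaps. The premise that the nuisance scores involve only degree-$\le2$ block coordinates is false. In the chart (4.4a), $DG_3[h]=\mathcal S_{R,G,s}[S]+(sRG)^{\otimes3}(\Omega\cdot T_q)$ with $\|\mathcal S_{R,G,s}[S]\|_F\le Cs^2\|S\|_F$, and $\dot b$ moves $B_3$ through $\operatorname{Sym}_3(b\otimes G_2)$. With $\|S\|_F\asymp s\|\Omega\|_F$, the shape contribution has the same order $s^3$ as the orientation term.
\begin{itemize}
\item Cancellation is prevented only by the triangular structure: $DG_2[h]=sR(SG+GS)R^\top$ pins $S$ at cost $s\|S\|_F$ by Lyapunov coercivity, which gives the weighted equivalence (5.31b).
\item That invariant-level coercivity reaches Fisher information through the score--moment inequality $\|D(M_2,K_4)[h]\|^2\le CI_\theta(h)$.
\end{itemize}
Your ``uniform angle'' asserts the conclusion without either ingredient.

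Also, $g_{R,\Omega}$ is not $\Delta_{\mathcal L}$ times a fixed function. The third covariance derivative $\mathcal D_3[X_1,X_1,X_1]$ adds an orientation-dependent $s^3$ term with subset weight $\lambda_k^3h(q-h)(q-2h)/\{q(q-1)(q-2)\}$, and the strata have different base covariances. So the ``if'' direction must rule out cancellation. The paper does this by isolating the degree-four Fourier coefficient of $\widehat g_{R,\Omega}$: only $E_h(A_CB_C)\propto h(q-h)$ enters there, and (5.14) averages the strata to $\Delta_{\mathcal L}$. The ``only if'' direction is proved separately: $\Delta_{\mathcal L}=0$ forces $K\in\{0,n\}$, and the full stratum sees the children only through the orientation-invariant tight-frame sum.

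Finally, integrating the score gives only the upper half of the orientation chord. The lower half for distant $R,R'$ comes from (i) together with the global orbit inverse.
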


\paragraph{What this result buys.}
The latent dictionary is not a regular root-\(N\) nuisance uniformly through
collision.  Its orientation score is \(s^3\), so a unit orientation change
requires \(Ns^6\) information.  This is also where sparse representation
enters essentially: proper-subdictionary activation makes the cubic score
nonzero, while exact full activation erases it.  The physical inverse in
\cref{eq:physical-quotient-inverse} then converts apparently small invariant
uncertainty into the \(s^{-2}\) loss that governs downstream ray resolution.

\begin{lemma}[Cross-dictionary separation margins]
\label{lem:cross-dictionary-margins}
There are fixed constants \(c_D,g_G^0,C_S,C_G>0\) such that every
dictionary in the supplied shell satisfies
\begin{equation}
\left\|\sum_{j=1}^qh_jd_j\right\|
\ge c_Ds\|h\|_2,
\qquad h\in\R^q.
\label{eq:lower-singular-margin}
\end{equation}
If two candidate dictionaries have physical distance at most \(\rho\), then
different size-\(r\) fine supports are separated by at least
\begin{equation}
m_S(\rho)
=\left[\sqrt2c_D\beta_-s-C_Sr\beta_+\rho\right]_+,
\tag{4.10}
\label{eq:support-margin}
\end{equation}
and a fine-sheet mean and a separated-sheet mean are separated by at least
\begin{equation}
m_G(\rho)
=\left[g_G^0-C_G(r\beta_++\gamma_+)\rho\right]_+.
\tag{4.9}
\label{eq:parent-margin}
\end{equation}
The constants may be chosen so that \(m_S(\rho)>0\) also implies unique
physical child matching across the two dictionaries.
\end{lemma}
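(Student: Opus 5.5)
The plan has three parts: prove the lower singular margin \cref{eq:lower-singular-margin} directly from the affine-simplex chart, then obtain the two cross-dictionary margins by perturbation, and finally read off unique matching from the support margin.

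\textbf{Lower singular margin.} Decompose \(\sum_j h_jd_j\) along \(u\) and \(U\):
\[
\sum_jh_jd_j=\Bigl(\sum_jh_j\sqrt{1-\|z_j\|^2}\Bigr)u+\sum_jh_j z_j .
\]
With \(z_j=b+Lv_j\), the \(U\)-component is \((\sum_j h_j)b+L\sum_jh_jv_j\). Write \(h=\bar h\mathbf 1+h^\perp\), where \(\bar h=q^{-1}\sum_jh_j\) and \(\sum_jh^\perp_j=0\).

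\begin{itemize}
\item The simplex vectors \(v_j\) form a tight frame whose only linear relation is \(\sum_jv_j=0\). Hence \(\|\sum_jh^\perp_jv_j\|\ge c\|h^\perp\|\), and so \(\|L\sum_jh_jv_j\|\ge c\kappa_-s\|h^\perp\|\).
\item Since \(\sqrt{1-\|z_j\|^2}=1-O(s^2)\), the \(u\)-component equals \(q\bar h+O(s^2\|h\|)\).
\item The \(U\)-component is at least \(c\kappa_-s\|h^\perp\|-qC_0s|\bar h|\).
\end{itemize}

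Now split into two cases. If \(|\bar h|\ge \eta s\|h^\perp\|\) for a small fixed \(\eta\), the \(u\)-component alone is \(\gtrsim|\bar h|+s\|h^\perp\|\eta\gtrsim s\|h\|\). Otherwise the \(U\)-component dominates and is \(\gtrsim s\|h^\perp\|\gtrsim s\|h\|\). In both cases \(s_0\) must be small enough to absorb the \(O(s^2)\) error terms. This gives \(c_D\).

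\textbf{Support margin.} Take two dictionaries \(D,D'\) at physical distance \(\le\rho\), with optimal permutation \(\pi\). Transporting labels gives \(\|d_j-d'_{\pi(j)}\|\le C\rho\), because the chart \(d(\cdot)\) is Lipschitz near \(0\).

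Consider two supports \(S\ne S'\) of size \(r\), with coefficients in the box, and compare \(\sum_{S}x_jd_j\) with \(\sum_{S'}x'_kd'_k\). Replace \(d'_k\) by \(d_{\pi^{-1}(k)}\); this costs at most \(C_Sr\beta_+\rho\). What remains is \(\|\sum_jh_jd_j\|\) for a vector \(h\) that has an entry of size \(\ge\beta_-\) on \(S\setminus\pi^{-1}(S')\) and on its symmetric counterpart. Since \(|S|=|S'|\), both sides of the symmetric difference are nonempty, so \(\|h\|_2\ge\sqrt2\beta_-\). Applying \cref{eq:lower-singular-margin} yields \(m_S(\rho)\).

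\textbf{Parent margin.} This is the main obstacle, since it needs a fixed \(s\)-independent gap \(g_G^0\). First establish it at \(\rho=0\) for a single dictionary, using compactness and continuity. Every fine mean lies within \(O(s)\) of \(\bigl(\sum_jx_j\bigr)u\), with \(\sum_jx_j\in[r\beta_-,r\beta_+]\). Every anchor mean is \(\gamma a\), where \(a\) is within \(C_as\) of \(a_0\). One must check that \(a_0\) is a fixed ray not parallel to \(u\); in the hard core \(a_0=(u+e_1)/\sqrt2\), and more generally this is the anchor-separation built into the shell. Then the distance from the ray through \(a_0\) to the \(u\)-ray gives
\[
g_G^0\gtrsim r\beta_-\sin\angle(u,a_0)-O(s),
\]
which is positive after shrinking \(s_0\).

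I would expect the subtlety to lie in whether the paper intends \(a_0\) to be separated from \(u\) by assumption. If instead \(a_0\) could be close to \(u\), one would need to use the transversality condition \cref{eq:anchor-amplitude-transversality}. Perturbing to a second dictionary costs \(C_G(r\beta_++\gamma_+)\rho\), by Lipschitz dependence of the atoms and of \(a\) on \(P_a\), the latter after fixing the sign by the positive-cap convention.

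\textbf{Unique matching.} If \(m_S(\rho)>0\), then \(C\rho<c's\). Distinct children of one dictionary are \(\gtrsim\kappa_-s\) apart, because the \(v_j\) are separated and \(\sigma_{\min}(L)\ge\kappa_-s\). Hence each \(z'_k\) lies within \(\rho\) of exactly one \(z_j\). Choosing \(C_S\) large relative to \(c_D/\kappa_-\) makes this implication hold, which forces \(\pi\) to be unique.
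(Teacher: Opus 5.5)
Your proposal is correct and follows essentially the same route as the paper. For the singular margin you use the same mean/zero-sum split of \(h\), the tight-frame bound with \(\sigma_{\min}(L)\ge\kappa_-s\) and \(\|b\|\le C_0s\), and absorption of the \(O(s^2)\) axial correction; the paper combines the two components by Cauchy--Schwarz where you use a case split. The support margin likewise comes from two exclusive \(\beta_-\) entries plus an \(r\beta_+\rho\) transport cost, and \(C_S\) is enlarged so that \(m_S(\rho)>0\) forces \(\rho\) below a fixed fraction of the within-block separation.

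For the parent gap, the paper takes \(a_0=(u+e_1)/\sqrt2\) throughout the shell, which settles your concern. It projects onto the fixed direction \((u-e_1)/\sqrt2\perp a_0\), the same transversality argument as your distance-to-the-anchor-line bound. That bound already holds uniformly over all shell dictionaries, so your final transport step is harmless but not needed.
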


\subsection{One honest correspondence, three resolutions}

Write
\(\epsilon_N=(\epsilon_{2,N}^2+\epsilon_{K,N}^2)^{1/2}\).

\begin{theorem}[Conditionally honest, resolution-adaptive support region]
\label{thm:honest-region}
Under \cref{ass:frozen-shell} and the rectangular coverage interface
\cref{eq:training-rectangle}, the correspondence
\(\widehat{\mathfrak C}_{q,r,s}^{p}\) is Borel measurable, nonempty, and
compact-valued.  Uniformly over the joint unknown-\(p\) parameter class,
\begin{equation}
\Pp_{p_\star,\nu_\star,D_\star}^N\!\left[
\Pp_{\mu_\star,\sigma_\star}^T\!\left\{
\vartheta_\star\in\widehat{\mathfrak C}_{q,r,s}^{p}\mid Y^N
\right\}\ge1-\alpha_T
\right]\ge1-\alpha_D.
\tag{4.7}
\label{eq:two-level-coverage}
\end{equation}
Equivalently, for a full parameter \(\xi\), define the measurable test
kernel
\begin{equation}
K_\xi(y)=
\Pp_{\mu(\xi),\sigma(\xi)}^T\!\left\{
z:\vartheta(\xi)\in
\widehat{\mathfrak C}_{q,r,s}^{p}(y,z)
\right\}.
\tag{4.7a}
\label{eq:test-coverage-kernel}
\end{equation}
By independence, with
\(\alpha=1-(1-\alpha_D)(1-\alpha_T)\),
\begin{equation}
\inf_{\xi\in\mathfrak P^p_{q,r}(s)}
\Pp_\xi^{N,T}\!\left\{
\vartheta(\xi)\in\widehat{\mathfrak C}_{q,r,s}^{p}
\right\}\ge1-\alpha.
\tag{4.7c}
\label{eq:marginal-coverage}
\end{equation}

For \(N\) above the fixed median-of-means block threshold, put
\begin{equation}
\rho_N(s)
=C\left[
(s\wedge\epsilon_N)
+\left(s\wedge\frac{\epsilon_N}{s}\right)
+\left(s\wedge\frac{\epsilon_N}{s^2}\right)
\right].
\tag{4.8}
\label{eq:dictionary-radius}
\end{equation}
Every two dictionaries retained by one nonempty training profile have
physical distance at most \(\rho_N(s)\).  If
\(\epsilon_N\le CN^{-1/2}\), then
\begin{equation}
\rho_N(s)
\le C\left(s\wedge\frac1{\sqrt N\,s^2}\right).
\tag{4.8a}
\label{eq:dictionary-radius-rate}
\end{equation}
Below the block threshold, the procedure returns the full compact training
shell and uses \(\rho_N(s)=C_{\rm shell}s\).  The supplied \(s_0\) may be
chosen so this branch obeys the same order as
\cref{eq:dictionary-radius-rate}.

For every realization,
\begin{align}
\diam_{\pr}(\widehat{\mathfrak C}_{q,r,s}^{p})
\le{}&C\rho_N(s)
\notag\\
&+Cs\,\1\{2\tau_{T,q}\ge m_S(\rho_N(s))\}
\notag\\
&+C\,\1\{2\tau_{T,q}\ge m_G(\rho_N(s))\}.
\tag{4.11}
\label{eq:three-scale-upper-bound}
\end{align}
Thus the exact same correspondence has order-one, order-\(s\), and
order-\(\rho_N(s)\) physical resolutions, governed respectively by
\begin{equation}
I_G^{(r)}=\frac{Tr^2\beta_-^2}{\sigma_+^2},
\qquad I_S=\frac{T\beta_-^2s^2}{\sigma_+^2},
\qquad I_D=Ns^6.
\label{eq:theorem-three-gates}
\end{equation}
These are separate gates; no \(NT\) product gate is asserted.
\end{theorem}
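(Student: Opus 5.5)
We prove \cref{thm:honest-region} in three parts: measurability and coverage, the dictionary radius, and the diameter bound.

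\emph{Measurability and coverage.} Measurability, nonemptiness and compactness follow directly from \cref{prop:profile-well-defined}. Fix the true parameter $\xi$ and let $E_D$ be the training rectangle event \cref{eq:training-rectangle}, so that $\Pp^N(E_D)\ge1-\alpha_D$ uniformly over the class. Fix any $y\in E_D$. The second half of \cref{prop:profile-well-defined} shows that $\vartheta_\star$ lies in $\widehat{\mathfrak C}$ whenever the test ball event \cref{eq:test-truth-event} holds. Since $T(\bar Z-\mu_\star)\cdot\sigma_\star^{-2}$ has squared norm distributed as $\chi^2_q$ and $\sigma_\star\le\sigma_+$, that event has probability at least $1-\alpha_T$. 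This conditional statement needs independence of $Z^T$ from $Y^N$, which gives the test kernel bound $K_\xi(y)\ge1-\alpha_T$ on $E_D$ and hence \cref{eq:two-level-coverage}. Joint measurability of $(y,z)\mapsto\1\{\vartheta(\xi)\in\widehat{\mathfrak C}(y,z)\}$ makes $K_\xi$ measurable. Fubini then gives
\[
\Pp^{N,T}\ge\int_{E_D}K_\xi\,d\Pp^N\ge(1-\alpha_D)(1-\alpha_T),
\]
which is \cref{eq:marginal-coverage}.

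\emph{Dictionary radius.} Take two retained points $(p,\nu,[D])$ and $(p',\nu',[D'])$ of a nonempty profile. By the triangle inequality, their population moments differ by at most $2\epsilon_{2,N}$ and $2\epsilon_{K,N}$. The key step is to convert this moment discrepancy into invariant discrepancy. I would show that the map $(p,\nu,D)\mapsto(M_2,K_4)$ is locally bi-Lipschitz onto its image in the coordinates $F$ of \cref{eq:F-coordinate}, uniformly on the shell:
\[
\delta_{F,p}\lesssim\|M_2-M_2'\|_F+\|K_4-K_4'\|_F .
\]
Here \cref{eq:p-trace} recovers $p$ and then $\nu$ Lipschitzly. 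Expanding $\Sigma_2$ and $\Sigma_4$ in the chart \cref{eq:atom-chart} shows that the degree-one, two and three blocks are fixed injective linear images of $b$, of $G_2$ (plus terms controlled by $b$), and of $B_3$, and that the anchor projector enters at order one. I would justify this either from the lower chord in \cref{thm:training-geometry}(i)--(ii), since the distributions are determined regularly through these moments, or by a direct expansion with $s_0$ small so that higher-order remainders are absorbed.

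This gives $\|b-b'\|+\|P_a-P_a'\|_F\lesssim\epsilon_N$, $\|G_2-G_2'\|_F\lesssim\epsilon_N$ and $\|G_3-G_3'\|_F\lesssim\epsilon_N$. Plugging these into \cref{eq:physical-quotient-inverse} yields the three terms $\epsilon_N$, $\epsilon_N/s$ and $\epsilon_N/s^2$. Each is also trivially at most $Cs$ by the shell diameter, since every $z_j$ has norm at most $C_0s$ and $P_a$ lies in an $O(s)$ ball; this gives the minima in \cref{eq:dictionary-radius}. If $\epsilon_N\lesssim N^{-1/2}$ and $s\le s_0$, the third term dominates, giving \cref{eq:dictionary-radius-rate}. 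The below-threshold branch is immediate from the shell diameter. I expect this step to be the main obstacle. The lower chord is a statement about densities, not moments, so the moment-level injectivity, and especially the $G_3$ block sitting under $b$- and $G_2$-dependent lower-order terms in $B_3$, must be checked with constants uniform in $s$. I would first subtract the explicit $b$ and $G_2$ contributions using the already-controlled lower blocks, then invert the remaining linear map on $G_3$.

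\emph{Diameter bound.} Take two surviving physical targets $\vartheta(D,S)$ and $\vartheta(D',S')$. Their dictionaries are within $\rho=\rho_N(s)$, and both profile losses are at most $\tau_{T,q}$, so their fitted means are within $2\tau_{T,q}$. There are three cases.
\begin{enumerate}
\item If the marks differ, then by \cref{lem:cross-dictionary-margins} this forces $2\tau\ge m_G(\rho)$. The Hausdorff distance is always at most $1$, which is the order-one term.
\item If both are fine with different matched supports, \cref{eq:support-margin} forces $2\tau\ge m_S(\rho)$. The ray sets lie in an $O(s)$ projective neighbourhood of $[u]$, giving the $Cs$ term.
\item Otherwise both supports, or both anchors, correspond under the unique child matching of \cref{lem:cross-dictionary-margins}. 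Then $d_H^{\pr}$ is at most $\max_j\|d_j-d'_{\pi(j)}\|\lesssim\max_j\|z_j-z'_{\pi(j)}\|$, or $\|P_a-P_a'\|_F$ for the anchor, which is at most $C\rho$.
\end{enumerate}
Summing the three cases gives \cref{eq:three-scale-upper-bound}. Finally, $\tau_{T,q}\asymp\sigma_+T^{-1/2}$, so the two indicators vanish once $I_G^{(r)}$ and $I_S$ exceed fixed thresholds, which gives the gate interpretation \cref{eq:theorem-three-gates}.
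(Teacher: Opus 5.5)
Your architecture matches the paper's: the truth is retained on the intersection of the training rectangle with the single test ball; Fubini gives \cref{eq:marginal-coverage}; a moment-to-invariant secant followed by the physical inverse \cref{eq:physical-quotient-inverse} gives $\rho_N(s)$; and a margin case split gives \cref{eq:three-scale-upper-bound}. The gap is the step you flag yourself, the $s$-uniform bound $|p-p'|+\delta_F\lesssim\epsilon_N$. Neither of your two routes delivers it.

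The first route is circular. \cref{thm:training-geometry}(i)--(ii) bound $\delta_F$ by a Hellinger distance, and turning that into a moment bound needs $H\lesssim\|\Delta(M_2,K_4)\|$, which is the claim itself.

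The second route assumes the degree-one, -two and -three child blocks can be read off the moments and inverted one after another. They cannot, because they share tensor slots. With $X=tu+x$, the degree-$\le3$ child part of $\Sigma_2[X,X]$ is $2tB_1[x]+B_2[x,x]-t^2\operatorname{tr}B_2-tB_3[x,I_U]$, so $b$ and the cubic trace $B_3[\cdot,I_U]$ sit in the same $t\,x$ slot. Moreover, $a_0=(u+e_1)/\sqrt2$ is neither axial nor in $U$, so an anchor increment $2(a_0^\top X)(\dot w^\top X)$ lands in every slot. In fact the collapsed linear map $(\dot B,\dot w)\mapsto\dot\Sigma_2$ has a $(q-1)$-dimensional kernel. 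For $\omega\in U$, take
\begin{itemize}
\item $\dot w=\omega-(e_1^\top\omega)u\in a_0^\perp$,
\item $\dot B_2=-\sqrt2\operatorname{Sym}(e_1\omega^\top)$,
\item $\dot B_3=0$,
\item $\dot B_1=-\{\omega-(e_1^\top\omega)e_1\}/\sqrt2$.
\end{itemize}
So there is no ``already-controlled lower block'' to start from, and the remark that the anchor ``enters at order one'' does not separate differences.

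What closes the step is a joint inversion of both moments in the unnormalized coordinates $\Xi=(B_1,B_2,B_3,w,\nu)$. Write $\mathsf S_{\dot B},\mathsf Q_{\dot B}$ for the degree-$\le3$ child increments of $\Sigma_2[X,X]$ and $\Sigma_4[X,X,X,X]$. If the increment of $(\nu,\Sigma_2,\Sigma_4)$ vanishes, the second-order output forces $H(\dot w)[X,X]=-\mathsf S_{\dot B}$. The anchor's fourth-order increment is $2(a_0^\top X)^2$ times its second-order one, so the fourth-order output becomes the identity $\mathsf Q_{\dot B}=(t+e_1^\top x)^2\mathsf S_{\dot B}$. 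Comparing powers of $t$ successively kills:
\begin{itemize}
\item $\operatorname{tr}\dot B_2$, then $\dot B_2$;
\item $2\dot B_1-\dot B_3[\cdot,I_U]$;
\item $\dot B_1$ and $\dot B_3[\cdot,I_U]$ separately, since they appear in the $t^3$ slot of $\Sigma_4$ as $4\dot B_1-6\dot B_3[\cdot,I_U]$;
\item then $\dot B_3$, and finally $\dot w$.
\end{itemize}
This yields an $s$-free lower singular value $\sigma_q$.

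You then need the degree-$\ge4$ remainder to be $O(s\|\Delta\Xi\|)$, which ``$s_0$ small'' alone does not give. Its difference is $O(s^3\max_j\|z_j-z'_{\pi(j)}\|)$ for a chosen labelling. It is \cref{eq:physical-quotient-inverse}, applied with its optimal $\pi$, that converts this to $O(s^3\|\Delta b\|+s^2\|\Delta G_2\|+s\|\Delta G_3\|)=O(s\,\delta_F)$, which is absorbable once $s_0\le\sigma_q/(2C_q)$. The triangular passage from $B_3$ to $G_3$ that you single out as the obstacle is the routine part. This is what the paper's one-line step behind $\rho_N(s)$ draws on: the collapsed-operator injectivity and uniform moment secant established in the proof of \cref{thm:training-geometry}.

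Two smaller points in the diameter step. When $m_S(\rho_N)=0$ there is no unique child matching, so your cases 2--3, which presuppose matched supports, do not apply. The bound survives only because the support indicator is then automatically active and all fine ray sets lie in one $O(s)$ cap; state this as its own case. Also, large $I_G^{(r)}$ and $I_S$ do not by themselves switch the support indicator off. If $\rho_N(s)\gtrsim s$, $m_S(\rho_N)$ can vanish, and the $Cs$ term is instead absorbed into $C\rho_N(s)$. Finally, the pivot is $T\|\bar Z-\mu_\star\|^2/\sigma_\star^2\sim\chi^2_q$, not the expression you wrote.
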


\paragraph{What this result buys.}
The method is not merely a larger interval around a plug-in estimate.  It
changes the granularity of the reported sparse statement while retaining the
physical truth: inconclusive across sheets, parent-resolved but
child-ambiguous, or fine-support resolved.  The two-level guarantee is
stronger than a single unconditional average: for most training samples, the
conditional test procedure covers.  It uses no observed training codes and
no support-labelled calibration pairs.

\subsection{The three gates are unavoidable}

\begin{lemma}[Two-point honesty implies diameter]
\label{lem:honesty-diameter}
Let two parameters \(\xi_0,\xi_1\) have physical target distance \(d\).
If a random compact set \(\widehat C\) has coverage at least \(1-\alpha\)
at both endpoints, then
\begin{equation}
\E_{\xi_0}\diam_{\pr}(\widehat C)
\ge d\left\{
1-2\alpha-\operatorname{TV}
(\Pp_{\xi_0}^{N,T},\Pp_{\xi_1}^{N,T})
\right\}_+.
\label{eq:honesty-diameter-lemma}
\end{equation}
\end{lemma}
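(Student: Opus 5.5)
Let \(\vartheta_0=\vartheta(\xi_0)\) and \(\vartheta_1=\vartheta(\xi_1)\), so that \(d_H^{\pr}\) of their ray sets equals \(d\). The argument is a two-point lower bound of Le Cam type, and the main point is to choose the right event. Take
\[
E=\{\vartheta_0\in\widehat C\}\cap\{\vartheta_1\in\widehat C\}.
\]
On \(E\) both targets lie in \(\widehat C\). By the definition \cref{eq:physical-diameter} we then have \(\diam_{\pr}(\widehat C)\ge d_H^{\pr}(A_0,A_1)=d\), because the diameter takes a supremum over pairs of elements of \(\widehat C\) and ignores marks. If the physical target distance is read as the full metric \(d_{\mathfrak V}\) rather than its ray part, I will use the ray part \(d_H^{\pr}\) as \(d\), since that is what the diameter measures. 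The diameter is nonnegative, so
\[
\E_{\xi_0}\diam_{\pr}(\widehat C)\ge d\,\Pp_{\xi_0}(E).
\]
Measurability of \(E\) follows because \(\widehat C\) is a Borel compact-valued correspondence (\cref{prop:profile-well-defined}). For a general random compact set, I assume only that the coverage events are measurable, which the hypothesis already presupposes.

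Next I bound \(\Pp_{\xi_0}(E)\) from below. The union bound gives
\[
\Pp_{\xi_0}(E)\ge 1-\Pp_{\xi_0}\{\vartheta_0\notin\widehat C\}-\Pp_{\xi_0}\{\vartheta_1\notin\widehat C\}.
\]
The first term is at most \(\alpha\) by coverage at \(\xi_0\). The event \(\{\vartheta_1\notin\widehat C\}\) is a fixed measurable set in the joint sample space, because \(\vartheta_1\) is a deterministic point. Changing measure therefore gives
\[
\Pp_{\xi_0}\{\vartheta_1\notin\widehat C\}\le \Pp_{\xi_1}\{\vartheta_1\notin\widehat C\}+\operatorname{TV}(\Pp_{\xi_0}^{N,T},\Pp_{\xi_1}^{N,T})\le\alpha+\operatorname{TV}.
\]
Combining these yields \(\Pp_{\xi_0}(E)\ge 1-2\alpha-\operatorname{TV}\). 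The probability is also nonnegative, so we may take the positive part, and \cref{eq:honesty-diameter-lemma} follows.

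The only delicate point is ensuring that the coverage used is the marginal joint coverage under \(\Pp^{N,T}\), as in \cref{eq:marginal-coverage}, and not only the two-level conditional statement. If the lemma is to be applied to the two-level guarantee, I would first convert it to marginal coverage \(1-\alpha\) with \(\alpha=1-(1-\alpha_D)(1-\alpha_T)\), using the kernel \cref{eq:test-coverage-kernel} and Fubini. Beyond that step the argument involves no real obstacle.
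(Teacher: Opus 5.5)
Your proof is correct and follows the paper's own argument. Both proofs intersect the two coverage events, bound $\Pp_{\xi_0}(A_0\cap A_1)\ge 1-2\alpha-\operatorname{TV}(\Pp_{\xi_0}^{N,T},\Pp_{\xi_1}^{N,T})$ using coverage at $\xi_0$ plus a change of measure to $\xi_1$, and then use that the mark-free diameter $\diam_{\pr}(\widehat C)$ is at least $d$ on that intersection. Reading $d$ as the ray-set Hausdorff part of the target distance is the right interpretation.
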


\begin{theorem}[Fixed-shell minimax lower bound and resolved optimality]
\label{thm:fixed-shell-minimax}
Assume
\((1-\alpha_D)(1-\alpha_T)>1/2\) and set
\(\alpha=1-(1-\alpha_D)(1-\alpha_T)<1/2\).  Let
\begin{equation}
\mathfrak H_\alpha\{\mathfrak P^p_{q,r}(s)\}
=\left\{\widehat C:
\inf_{\xi\in\mathfrak P^p_{q,r}(s)}
\Pp_\xi^{N,T}\{\vartheta(\xi)\in\widehat C\}
\ge1-\alpha\right\},
\label{eq:honest-class}
\end{equation}
where \(\widehat C\) ranges over Borel nonempty compact-valued
correspondences, and define
\begin{equation}
\mathcal R_{N,T}^{(q,r)}(s)
=\inf_{\widehat C\in\mathfrak H_\alpha}
\sup_{\xi\in\mathfrak P^p_{q,r}(s)}
\E_\xi\diam_{\pr}(\widehat C).
\label{eq:minimax-diameter-risk}
\end{equation}
There exist fixed positive constants and transition thresholds such that
\begin{equation}
\mathcal R_{N,T}^{(q,r)}(s)
\ge c\max\left[
\1\{I_G^{(r)}\le c_G\},
\ s\1\{I_S\le c_S\},
\ s\wedge\frac1{\sqrt N\,s^2}
\right].
\tag{4.13}
\label{eq:three-gate-lower-bound}
\end{equation}
When the parent and support gates exceed their fixed upper thresholds, the
correspondence in \cref{thm:honest-region}, equipped with the conservative
certified radii in the supplement or any replacement satisfying
\(\epsilon_N\le CN^{-1/2}\), gives
\begin{equation}
\mathcal R_{N,T}^{(q,r)}(s)
\asymp s\wedge\frac1{\sqrt N\,s^2}.
\tag{4.14}
\label{eq:resolved-minimax-rate}
\end{equation}
This is constant-factor optimality on one supplied fixed shell in the
resolved regime.  It is not a sharp-transition-constant theorem, a matching
upper theorem throughout every transition band, a multiway
candidate-list-size theorem, or adaptation over unknown \(s\).
\end{theorem}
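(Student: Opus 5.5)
The lower bound \cref{eq:three-gate-lower-bound} is a maximum of three terms. It therefore suffices to prove each term separately. In each case we exhibit two parameters \(\xi_0,\xi_1\in\mathfrak P^p_{q,r}(s)\) whose physical targets are far apart in the projective Hausdorff sense but whose joint train--test laws are close in total variation. We then apply \cref{lem:honesty-diameter} with \(\alpha<1/2\). If \(\operatorname{TV}\le (1-2\alpha)/2\), the lemma gives expected diameter at least \(d(1-2\alpha)/2\) at \(\xi_0\), and hence a lower bound on the supremum in \cref{eq:minimax-diameter-risk}.

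Since training and test samples are independent, we bound total variation by Pinsker's inequality, with
\[
\KL(\Pp_{\xi_0}^{N,T}\|\Pp_{\xi_1}^{N,T})=N\,\KL(f_{\theta_0}\|f_{\theta_1})+T\|\mu_0-\mu_1\|^2/(2\sigma^2)
\]
at a common \(\sigma=\sigma_+\). All constructions live inside the balanced hard core \cref{eq:balanced-hard-core}, so they lie in the class. The transversality condition \cref{eq:anchor-amplitude-transversality} guarantees that the anchor amplitude we need is admissible.

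\emph{Parent gate.} We use the same dictionary for both parameters. Take \(\xi_0\) on the fine sheet with mean \(\mu_0=\sum_{j\in S}x_jd_j\), \(x_j\equiv\beta_-\), and take \(\xi_1\) on the separated sheet with \(\gamma=a_0^\top\mu_0\). Then the training KL vanishes. The test separation is the component of \(\mu_0\) orthogonal to \(a_0\); with \(a_0=(u+e_1)/\sqrt2\) this is \(\|\mu_0-(a_0^\top\mu_0)a_0\|\asymp r\beta_-\), so the test KL is \(\asymp I_G^{(r)}\). The physical target distance is at least the order-one Hausdorff distance between \(\{[a_0]\}\) and the near-\(u\) child rays, because \(d_{\pr}([a_0],[u])=1/\sqrt2\). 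Choosing \(c_G\) small gives the first term.

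\emph{Support gate.} We again fix the dictionary and take two supports \(S,S'\) differing in one child, with coefficients \(\beta_-\). Then \(\|\mu_0-\mu_1\|\le C\beta_- s\) by \cref{eq:shell}, so the test KL is \(\lesssim I_S\). The Hausdorff distance between the two ray sets is \(\gtrsim s\), by the simplex separation of the \(z_j\) together with \(\sigma_{\min}(L)\ge\kappa_- s\). This gives the term \(s\1\{I_S\le c_S\}\).

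\emph{Dictionary gate.} This is the main step. We keep \(b=0\), the same \(p,\nu\), and the same support, and rotate \(R\mapsto R\exp(t\Omega)\) with \(\Omega\) a unit quotient-orientation tangent. The coefficients are held fixed, so the test means differ by \(O(\beta_+ s t)\) and the test KL is \(\lesssim Ts^2t^2\). That is not negligible in general. We therefore use the compensated orientation path described in \cref{sec:thought-experiment}: the coefficients are adjusted so that the test mean stays exactly fixed. Concretely, on the moving dictionary we solve \(\sum_{j\in S}x_j(t)d_j(t)=\mu_0\). This is solvable for small \(t\) because \(d_j(t)-d_j=O(st)\), and \cref{eq:lower-singular-margin} controls the coefficient change as \(O(t)\). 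The solution remains in the interior of the box if \(x\) starts at an interior point; if \(\mu_0\) cannot be matched exactly within the span, we instead choose the support and \(\Omega\) so that the rotation preserves that span. With the test KL equal to \(0\), the orientation chord \cref{eq:orientation-chord} gives \(N\KL\asymp Ns^6t^2\). We take \(t\asymp 1\wedge (\sqrt N s^3)^{-1}\), capped at a fixed constant so that we remain in the shell. The physical ray displacement is \(\asymp st\), since the rays move by \(\|L(e^{t\Omega}-I)v_j\|\asymp st\) and the quotient by \(\mathfrak S_q\) cannot undo this for small nonzero \(t\). This yields \(s\wedge(\sqrt N s^2)^{-1}\).

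I expect two parts to be the main obstacle. The first is guaranteeing the exact compensation while keeping \(x(t)\) in the box and \(\Gamma_A\) conditions intact. The second is verifying that the Hausdorff distance between the two ray sets, rather than a matched-index distance, remains of order \(st\).

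Finally, for the matching upper bound \cref{eq:resolved-minimax-rate}, the correspondence belongs to \(\mathfrak H_\alpha\) by \eqref{eq:marginal-coverage}. Once \(I_G^{(r)}\) and \(I_S\) exceed large thresholds, we have \(\tau_{T,q}\asymp\sigma_+/\sqrt T\). Since \(\rho_N(s)\) is small relative to \(s\) whenever \(\sqrt N s^3\) is large, \(2\tau_{T,q}<m_S(\rho_N)\) and \(2\tau_{T,q}<m_G(\rho_N)\) by \cref{lem:cross-dictionary-margins}. In the remaining case \(\rho_N\asymp s\), the bound \(Cs\) is already of the target order. Then \cref{eq:three-scale-upper-bound,eq:dictionary-radius-rate} give \(\diam_{\pr}\le C(s\wedge(\sqrt N s^2)^{-1})\) for every realization, and this matches the third lower-bound term.
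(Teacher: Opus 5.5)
Your overall reduction matches the paper: the honesty-to-diameter lemma, Pinsker, and additive train--test KL. Your parent and wrong-support pairs also match, as does the two-case treatment of the upper bound. The gap is in the dictionary gate, which you correctly flag as the main step.

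\textbf{Why your compensation step fails.} You claim that $\sum_{j\in S}x_j(t)d_j(t)=\mu_0$ is solvable for small $t$ because $d_j(t)-d_j=O(st)$ and \cref{eq:lower-singular-margin} holds. This is false. The system has $q$ equations in $r\le q-1$ unknowns, and a lower singular value only stabilizes least-squares coefficients; it does not give exact solvability. To first order the unavoidable residual is
$\inf_{x'}\|\mu_0-\sum_{j\in S}x'_jd_j(t)\|=s|t|\,\chi_{\rm tan}(D,S,x;\Omega)+O(st^2)$,
which is exactly the secant in \cref{eq:tangent-secant} and is generically nonzero. It adds a term of order $Ts^2\chi_{\rm tan}^2t^2/\sigma^2$ to the joint KL. That destroys a lower bound which must hold uniformly in $T$.

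\textbf{Why the fallback does not repair it.} Suppose the rotation is required to preserve $\operatorname{span}\{d_j\}_{j\in S}$. At the centered core, $e^{t\Omega}$ must then leave $V_S=\operatorname{span}\{v_j:j\in S\}$ invariant. It must also fix $w_S=\sum_{j\in S}v_j$, because the axial coefficient sum has to be preserved. For $r=2$ this forces $\Omega|_{V_S}=0$. No active ray moves, and the target distance is zero.

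\textbf{The missing idea.} You need neither span preservation nor any coefficient adjustment, only invariance of the coefficient-weighted vertex sum.
\begin{enumerate}[label=(\roman*)]
\item With equal active coefficients $x_j\equiv\beta$, the test mean is $\beta\{r\sqrt{1-\lambda_\star^2s^2}\,u+\lambda_\star sRe^{t\Omega}w_S\}$. Any skew $\Omega$ with $\Omega w_S=0$ therefore leaves the complete test law exactly unchanged.
\item Such an $\Omega$ can be chosen to move an active ray. The differences $v_i-v_j$ with $i,j\in S$ are nonzero and orthogonal to $w_S\neq0$. The orthogonal complement of $w_S$ in $U$ has dimension $q-2\ge2$; this is where $q\ge4$ enters. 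So take a rotation in a two-plane of that complement containing $v_i-v_j$, and normalize $\max_{j\in S}\|\Omega v_j\|=1$.
\item The centroid $b$ and $G_2=\lambda_\star^2s^2I_U$ are unchanged, and $\|\Delta G_3\|_F\lesssim s^3|t|$. Hence \cref{eq:fixed-law-chord} gives training KL at most $CNs^6t^2$.
\item Your Hausdorff concern is settled by a short-chart argument. For $|t|\le h_0$, cross-label ray distances stay at least $c_0s/2$, while same-label displacements are $\asymp s|t|$. So same-label matching is optimal.
\end{enumerate}

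\textbf{Upper bound (minor).} You should make two choices explicit. First, fix $c_0$ so that $\rho_N<c_0s$ still leaves an order-$s$ support margin and unique child matching. Second, shrink $s_0$ once so that $m_G(\rho_N)\ge g_G^0/2$ holds in both of your cases, not only when $\sqrt N s^3$ is large.
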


\begin{corollary}[The known-dictionary/learned-dictionary resolution gap]
\label{cor:known-learned-gap}
Consider a sequence on the supplied shell for which \(I_G^{(r)}\to\infty\)
and \(I_S\to\infty\), while \(I_D=Ns^6\) remains bounded.  With the
dictionary known, the common-ball test profile eventually separates the
fine supports.  Over the learned-dictionary class, however, every uniformly
honest physical-support correspondence has worst-case expected diameter at
least \(cs\).  Thus atom-level resolution in a selected dictionary can be
strictly finer than the physical resolution justified after latent
dictionary learning.
\end{corollary}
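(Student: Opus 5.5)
The corollary has two halves, and I will prove each from an earlier result. The known-dictionary half follows from \cref{lem:cross-dictionary-margins} with \(\rho=0\). The learned-dictionary half follows from the third term of \cref{thm:fixed-shell-minimax}.

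\emph{Known-dictionary half.} With the dictionary fixed, there is no training uncertainty, so I set \(\rho=0\) in \cref{eq:support-margin,eq:parent-margin}. This gives \(m_S(0)=\sqrt2c_D\beta_-s\) and \(m_G(0)=g_G^0\). The common-ball radius satisfies
\[
\tau_{T,q}^2=\frac{\sigma_+^2\chi^2_{q,1-\alpha_T}}{T}.
\]
Dividing by \(m_S(0)^2\) and \(m_G(0)^2\) shows
\[
\frac{2\tau_{T,q}}{m_S(0)}\asymp I_S^{-1/2},\qquad \frac{2\tau_{T,q}}{m_G(0)}\lesssim T^{-1/2}\lesssim (I_G^{(r)})^{-1/2}.
\]
Here \(r\) and \(\beta_-\) are fixed, so \(I_G^{(r)}\to\infty\) forces \(T\to\infty\). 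Since \(I_G^{(r)},I_S\to\infty\), eventually \(2\tau_{T,q}<m_S(0)\) and \(2\tau_{T,q}<m_G(0)\).

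Two profiled means from different supports, or from different sheets, lie at distance at least the relevant margin. By the triangle inequality, they cannot both lie within \(\tau_{T,q}\) of \(\bar Z\). So the profile retains at most one support, and on the test event \cref{eq:test-truth-event} it retains exactly the true one. Equivalently, \cref{eq:three-scale-upper-bound} with \(\rho_N\equiv0\) gives diameter zero eventually.

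\emph{Learned-dictionary half.} I invoke \cref{eq:three-gate-lower-bound} and keep only its third term, \(c\,(s\wedge (\sqrt N s^2)^{-1})\). Bounded \(I_D=Ns^6\le B\) gives \(\sqrt N s^3\le\sqrt B\). Hence
\[
\frac{1}{\sqrt N s^2}=\frac{s}{\sqrt N s^3}\ge\frac{s}{\sqrt B},
\]
and so \(s\wedge (\sqrt N s^2)^{-1}\ge s\min(1,B^{-1/2})\). The minimax risk is therefore at least \(c's\), meaning every uniformly honest correspondence has worst-case expected diameter at least \(c's\). The lower bound holds regardless of the two test gates; \(I_G^{(r)},I_S\to\infty\) is needed only for the contrast with the known-dictionary half.

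\emph{Main obstacle.} The delicate point is that the lower bound must apply to the same sequence, with \(s\) fixed in \((0,s_0]\) or varying along it, and with no assumption restricting \(T\). The third term of \cref{eq:three-gate-lower-bound} comes from a least-favourable pair in which the orientation moves along a compensated path that leaves the test mean fixed. For that pair, \(\operatorname{TV}\) is controlled by the training KL alone, which by \cref{eq:orientation-chord} is \(\lesssim Ns^6 d^2\). This is why large \(T\) cannot close the gap.

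I would check that \cref{thm:fixed-shell-minimax} is stated without any upper restriction on \(T\). If the stated version needed one, I would rederive the bound directly from \cref{lem:honesty-diameter}. That argument takes the two centered-core rotations \(R,R'\) at quotient distance \(\asymp1\) with identical test means, giving physical target distance \(\asymp s\). It then uses Pinsker with \(\KL\lesssim Ns^6\le B\), reducing \(d_{O(U)/\mathfrak S_q}(R,R')\) by a fixed factor so that \(\operatorname{TV}<1-2\alpha\).
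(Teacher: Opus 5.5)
Your proposal is correct and follows essentially the same route as the paper. For the known-dictionary half you set the training radius to zero, so the same-dictionary margins \(\sqrt2c_D\beta_-s\) and \(g_G^0\) eventually exceed \(2\tau_{T,q}\) because \(\tau_{T,q}/s=O(I_S^{-1/2})\). For the learned-dictionary half you get the order-\(s\) bound from the compensated-orientation pair, which you reach through the third term of \cref{eq:three-gate-lower-bound} with \(Ns^6\le B\); your check that this term imposes no restriction on \(T\) matches the paper's reasoning, since along that pair the test law is identical.
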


\paragraph{What this result buys.}
The method's coarser outputs are not conservatism that a cleverer procedure
can uniformly remove.  Three least-favourable comparisons reproduce the
three operating states, and the learned-dictionary rate matches the upper
bound once the test-side gates are open.  In particular,
\cref{cor:known-learned-gap} isolates the inferential failure that motivates
the paper without claiming that every concrete plug-in estimator must
undercover in every parameter cell.

\subsection{When test data can break the training symmetry}

At a centered-core fine support \(S\), define the coefficient-profiled
orientation secant
\begin{equation}
\chi_{\rm tan}(D,S,x;\Omega)
=\frac1s\inf_{\dot x\in\R^r}
\|\dot D_S[\Omega]x+D_S\dot x\|
=\frac1s\left\|
P_{\operatorname{span}(D_S)}^\perp\dot D_S[\Omega]x
\right\|.
\tag{4.15}
\label{eq:tangent-secant}
\end{equation}

\begin{theorem}[Task geometry can preserve or break the training singularity]
\label{thm:task-symmetry}
For every fixed \(2\le r\le q-1\), the exact local test efficient
information is
\begin{equation}
I_{\Omega\mid x}^{\rm test}
=\frac{Ts^2}{\sigma^2}
\chi_{\rm tan}^2(D,S,x;\Omega).
\tag{4.16}
\label{eq:test-efficient-information}
\end{equation}

For a finite statement, fix interior \(p_0,\nu_0\), the anchor, a support
\(S_0\), every nonorientation dictionary coordinate, and known test noise
\(\sigma\).  Let
\begin{equation}
D_\phi
=D\{b=0,L=\lambda_\star sR\exp(\phi\Omega),P_a=P_{a_0}\},
\qquad |\phi|\le h_0,
\label{eq:restricted-orbit}
\end{equation}
where the chart is injective modulo the finite simplex group.  Let
\(\mathcal X_{\rm orb}\Subset(\beta_-,\beta_+)^r\) and define
\begin{equation}
\mathfrak P_{\rm orb}
=\{(p_0,\nu_0,D_\phi,S_0,x,\sigma):
|\phi|\le h_0,\ x\in\mathcal X_{\rm orb}\}.
\label{eq:restricted-class}
\end{equation}
Assume the target and coefficient-profiled test means have two-sided
secants
\begin{align}
c_Vs|\phi-\phi'|
&\le d_H^{\pr}\{\vartheta(D_\phi,S_0),
\vartheta(D_{\phi'},S_0)\}
\le C_Vs|\phi-\phi'|,
\tag{4.17}
\label{eq:target-secant}\\
cs\chi_s|\phi-\phi'|
&\le\inf_{x'\in\mathcal X_{\rm orb}}
\|\mu(D_\phi,S_0,x)-\mu(D_{\phi'},S_0,x')\|
\le Cs\chi_s|\phi-\phi'|.
\tag{4.18}
\label{eq:test-mean-secant}
\end{align}
Let \(\mathcal R_{N,T}^{\rm task}\) be the minimax expected physical
diameter over Borel compact correspondences with marginal coverage
\(1-\alpha\) uniformly on this restricted orbit.  Then
\begin{equation}
\mathcal R_{N,T}^{\rm task}
(s,\mathcal X_{\rm orb},\sigma)
\asymp
s\wedge\frac{s}{
\sqrt{Ns^6+T\chi_s^2s^2/\sigma^2}}.
\tag{4.19}
\label{eq:restricted-task-rate}
\end{equation}
This finite matching statement is restricted to
\(\mathfrak P_{\rm orb}\).  Equation
\cref{eq:test-efficient-information}, not
\cref{eq:restricted-task-rate}, is the general-\(r\), full-model
conclusion.
\end{theorem}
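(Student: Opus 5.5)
The argument splits into two parts: the general-\(r\) efficient-information identity \cref{eq:test-efficient-information}, and the restricted matching rate \cref{eq:restricted-task-rate}, proved by a one-dimensional upper/lower pair.

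\emph{Efficient test information.} The test model is Gaussian with known covariance \(\sigma^2I_q/T\) for \(\bar Z\) and mean \(\mu=D_Sx\). Along a curve \((\exp(t\Omega)\text{-rotated }D,\ x+t\dot x)\), the mean derivative is \(\dot D_S[\Omega]x+D_S\dot x\). The Fisher information for that direction is therefore \(T\sigma^{-2}\|\dot D_S[\Omega]x+D_S\dot x\|^2\). Profiling the coefficient nuisance means minimizing over \(\dot x\in\R^r\). This minimization is a least-squares projection onto \(\operatorname{span}(D_S)\), which has full rank \(r\) by \cref{eq:lower-singular-margin}. The infimum is \(\|P^\perp_{\operatorname{span}(D_S)}\dot D_S[\Omega]x\|^2\). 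Writing it as \(s^2\chi_{\rm tan}^2\) through \cref{eq:tangent-secant} gives \cref{eq:test-efficient-information}. Because \(x\) lies in the interior of the box, the constraint is inactive locally, so the unconstrained projection is the efficient score.

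\emph{Upper bound.} On \(\mathfrak P_{\rm orb}\), build a confidence set for \(\phi\) by intersecting two sets. The first is a training set from a score or likelihood-ratio test for \(\phi\) in the one-dimensional training family. By \cref{eq:orientation-chord}, \(H^2(f_{\phi},f_{\phi'})\asymp s^6|\phi-\phi'|^2\), so a standard Hellinger-testing (Le Cam) confidence interval has width \(\lesssim(\sqrt N s^3)^{-1}\wedge h_0\). The second is the test-side profile ball, which retains \(\phi'\) only if some \(x'\in\mathcal X_{\rm orb}\) puts \(\mu(D_{\phi'},S_0,x')\) within \(2\tau_{T,q}\) of \(\bar Z\). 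By the lower secant in \cref{eq:test-mean-secant}, this forces \(|\phi-\phi'|\lesssim\sigma/(\sqrt T s\chi_s)\). Each set alone covers at its own level, so split \(\alpha\) between them.

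Intersecting only gives a minimum of the two widths, \(\min\{(\sqrt N s^3)^{-1},\sigma/(\sqrt T s\chi_s)\}\). That is within a factor \(\sqrt2\) of \(1/\sqrt{Ns^6+T\chi_s^2s^2/\sigma^2}\). To get the exact form one can alternatively use the joint likelihood-ratio interval, since the train and test experiments are independent and their Hellinger affinities multiply. Multiplying the \(\phi\)-width by the upper target secant \(C_Vs\) in \cref{eq:target-secant}, and capping by the trivial diameter \(\lesssim s\) (the orbit has bounded \(h_0\)), yields the upper rate.

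\emph{Lower bound.} This is the step I expect to be the main obstacle. Apply \cref{lem:honesty-diameter} to the two points \(\phi_0=0\) and \(\phi_1=\delta\), with \(\delta=c\,\{h_0\wedge(Ns^6+T\chi_s^2s^2/\sigma^2)^{-1/2}\}\). The difficulty is that the test coefficient is a nuisance: the alternative must use the best matching \(x'\), the minimizer in \cref{eq:test-mean-secant}, which must lie in \(\mathcal X_{\rm orb}\). Interiority of \(\mathcal X_{\rm orb}\) and small \(\delta\) guarantee this.

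Then bound the total variation by tensorization. By \(\KL\) additivity over the independent train and test samples,
\[
\KL\le C N s^6\delta^2+ C T s^2\chi_s^2\delta^2/\sigma^2,
\]
where the training term comes from \cref{eq:orientation-chord} and the test term from the Gaussian mean gap via the upper secant in \cref{eq:test-mean-secant}. Pinsker's inequality then keeps the total variation below \((1-2\alpha)/2\). The target distance is at least \(c_Vs\delta\) by \cref{eq:target-secant}, so \cref{lem:honesty-diameter} gives the matching lower bound. One technical check remains: the training chord \cref{eq:orientation-chord} must hold along the restricted orbit with the fixed \(p_0,\nu_0\), which is the fixed-nuisance (hence larger-information) case of \cref{thm:training-geometry}(iv).
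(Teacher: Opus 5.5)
Your proposal is correct, and apart from one ingredient it follows the paper's proof.
\begin{itemize}
\item The efficient-information identity is the same least-squares projection of the Gaussian mean score onto \(\operatorname{span}(D_S)^\perp\).
\item The test-side width comes from the common ball and the lower secant in \cref{eq:test-mean-secant}.
\item The minimum of the two widths is compared with the inverse square root of their sum.
\item The lower bound is the same two-point argument: the coefficient path comes from the upper secant, the train and test KL add, and Pinsker plus \cref{lem:honesty-diameter} finish it.
\end{itemize}
The obstacle you anticipate in the lower bound is not one. The infimum in \cref{eq:test-mean-secant} is already taken over \(\mathcal X_{\rm orb}\), so a near-minimizing \(x'\in\mathcal X_{\rm orb}\) exists by hypothesis, and no interiority argument is needed.

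The real difference is the training-side set. The paper does not construct a new one; it restricts its own moment profile \cref{eq:training-profile} to the orbit. Any two retained orbit points then have moment images within \(2\epsilon_N\). Along the orbit \(b=0\) and \(G_2\equiv\lambda_\star^2s^2I_U\), so \(\delta_F\) is just \(\|\Delta G_3\|\). The fixed-law invariant chord gives \(\|G_3(\phi)-G_3(\phi')\|\lesssim\epsilon_N\). Injectivity of the chart and the identity for \(\|\Omega\cdot T_q\|_F\) give \(\|G_3(\phi)-G_3(\phi')\|\asymp s^3|\phi-\phi'|\). The result is a deterministic width bound with the explicit median-of-means radius, and it shows that the paper's honest correspondence itself attains the restricted rate.

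Your Le Cam-style test inversion is method-agnostic and needs only the statistical chord \cref{eq:orientation-chord}. However, calling it ``standard'' hides what must be checked: finite-sample uniform coverage with a width constant that is uniform in \(s\). This does hold. After reparametrizing \(\theta=s^3\phi\), the family is uniformly bi-Lipschitz in Hellinger to an interval. Le Cam--Birg\'e tests, or a T-estimator with a Hellinger ball of radius \(C_\alpha/\sqrt N\), then supply the required set. You should state this rather than assert it. Your ``score or likelihood-ratio test'' alternative gives only asymptotic coverage, so it does not meet the uniform marginal-coverage requirement. The ``joint likelihood-ratio interval'' is unnecessary, since the rate is only claimed up to constants.

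Two small repairs are needed.
\begin{itemize}
\item The test set retains \(\phi'\) when some fitted mean is within \(\tau\) of \(\bar Z\), not \(2\tau\); two retained points are then within \(2\tau\) of each other.
\item You need a fixed fallback on an empty intersection so that the correspondence is nonempty-valued, as in the paper.
\end{itemize}
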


\begin{corollary}[Two-atom amplitude contrast]
\label{cor:two-atom-contrast}
Let \(r=2\) and write
\(x_1=\bar\beta+d/2\), \(x_2=\bar\beta-d/2\).  Fix \(d_0\ne0\) and a
sign-preserving compact coefficient shell on which
\(|d-d_0|\le\epsilon_d\le|d_0|/2\).  On the orbit rotating \(v_1-v_2\)
while fixing \(v_1+v_2\),
\begin{equation}
\inf_{\bar\beta',d'\in\R}
\|\mu_{\phi,\bar\beta,d}
-\mu_{\phi',\bar\beta',d'}\|
=\frac{\lambda_\star s|d|\|v_1-v_2\|}{2}
|\sin(\phi-\phi')|.
\tag{4.20}
\label{eq:two-atom-profile}
\end{equation}
Hence
\(\chi_s\asymp\lambda_\star\|v_1-v_2\||d_0|/2\), and the training/test
crossover occurs at
\begin{equation}
|d_0|\asymp\sigma\sqrt{N/T}\,s^2.
\tag{4.21}
\label{eq:amplitude-crossover}
\end{equation}
The separate exact slice \(d=0\) is test invariant and remains limited by
the \(Ns^6\) training information.
\end{corollary}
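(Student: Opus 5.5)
The claim to prove is \cref{cor:two-atom-contrast}. The plan is to compute the coefficient-profiled test-mean distance \cref{eq:two-atom-profile} exactly on the declared orbit. Then feed the resulting secant into \cref{thm:task-symmetry} to read off \(\chi_s\) and the crossover.

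\emph{Step 1: parametrize the pair.} On the centered core, \(b=0\) and \(z_j=\lambda_\star sR_\phi v_j\). Write \(w_+=v_1+v_2\) and \(w_-=v_1-v_2\). Note that \(w_+\perp w_-\), since \(\|v_1\|=\|v_2\|\). The orbit rotation \(R_\phi\) acts in a plane containing \(w_-\) and fixes \(w_+\), so \(R_\phi w_-=\|w_-\|e(\phi)\) with \(e(\phi)=\cos\phi\,e_-+\sin\phi\,e_\perp\). Both children have the same norm \(\|z_j\|=\lambda_\star s\|v_j\|\), so they share the \(u\)-coefficient \(\sqrt{1-\|z_j\|^2}=:\eta\), and \(\eta\) does not depend on \(\phi\). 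Hence
\[
\mu_{\phi,\bar\beta,d}=x_1d_1+x_2d_2
=2\bar\beta\eta u+\bar\beta\lambda_\star sR_\phi w_+
+\tfrac d2\lambda_\star s\|w_-\|e(\phi)
=\bar\beta\,g+\tfrac d2\lambda_\star s\|w_-\|e(\phi),
\]
where \(g=2\eta u+\lambda_\star s w_+\) is fixed and \(g\perp e(\phi)\) for every \(\phi\).

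\emph{Step 2: profile.} Minimize \(\|\mu_{\phi,\bar\beta,d}-\mu_{\phi',\bar\beta',d'}\|\) over \(\bar\beta',d'\in\R\). Because \(g\perp\operatorname{span}\{e(\phi),e(\phi')\}\), choosing \(\bar\beta'=\bar\beta\) removes the \(g\) component. What remains is the distance from the vector \(\tfrac d2\lambda_\star s\|w_-\|e(\phi)\) to the line \(\R e(\phi')\). That distance is \(\tfrac{|d|}2\lambda_\star s\|w_-\||\sin(\phi-\phi')|\), which is exactly \cref{eq:two-atom-profile}.

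The statement profiles over \(\R^2\), whereas \cref{eq:test-mean-secant} profiles over the compact shell. I will handle this by noting that the minimizer \(d'=d\cos(\phi-\phi')\), \(\bar\beta'=\bar\beta\) lies within \(O(h_0^2|d_0|)\) of \((\bar\beta,d)\). So for \(h_0\) small it stays in the sign-preserving shell, because \(\mathcal X_{\rm orb}\) has an interior margin.

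\emph{Step 3: secant constants and rate.} Use \(|\sin t|\asymp|t|\) for \(|t|\le2h_0<\pi/2\), together with \(|d|\in[|d_0|/2,3|d_0|/2]\). This gives \cref{eq:test-mean-secant} with \(\chi_s\asymp\lambda_\star\|w_-\||d_0|/2\). The target secant \cref{eq:target-secant} holds because the rays \(d_1,d_2\) move by order \(s|\phi-\phi'|\) while remaining uniquely matched. Applying \cref{eq:restricted-task-rate}, the test term is \(T\chi_s^2s^2/\sigma^2\asymp Td_0^2s^2/\sigma^2\). It matches the training term \(Ns^6\) exactly when \(|d_0|\asymp\sigma\sqrt{N/T}\,s^2\), which is \cref{eq:amplitude-crossover}.

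\emph{Step 4: the slice \(d=0\).} Here \(\mu=\bar\beta g\) does not depend on \(\phi\), so the test law is invariant along the orbit. The two-point argument of \cref{lem:honesty-diameter} then uses only the training total variation. By \cref{eq:orientation-chord}, that total variation is of order \(\sqrt N s^3|\phi-\phi'|\), which leaves the rate \(s\wedge(\sqrt N s^2)^{-1}\).

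\emph{Main obstacle.} The delicate point is making sure \(R_\phi\) genuinely fixes \(w_+\) while staying inside a chart that is injective modulo \(\mathfrak S_q\). I would choose \(\Omega=e_\perp e_-^\top-e_-e_\perp^\top\), with \(e_\perp\in U\) orthogonal to both \(w_\pm\); such an \(e_\perp\) exists since \(\dim U=q-1\ge3\). I would then check that small \(\phi\) does not coincide with any simplex symmetry.
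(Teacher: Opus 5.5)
Your proposal follows the paper's proof. It uses the same rotation fixing $v_1+v_2$ and turning $v_1-v_2$ toward a direction orthogonal to both, and the same orthogonal profiling that yields \eqref{eq:two-atom-profile}. The short-chart target secant, the crossover comparison of $T\chi_s^2s^2/\sigma^2\asymp Td_0^2s^2/\sigma^2$ with $Ns^6$, and the test invariance on the slice $d=0$ also match the paper.

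One justification in Step~2 is incorrect, although nothing depends on it. The unconstrained minimizer $(\bar\beta,d\cos(\phi-\phi'))$ need not lie in $\mathcal X_{\rm orb}$. If the base point sits on the face $|d|=|d_0|-\epsilon_d$ nearest zero, then $|d\cos(\phi-\phi')|<|d_0|-\epsilon_d$ for every $\phi\ne\phi'$, however small $h_0$ is. The margin of $\mathcal X_{\rm orb}$ inside $(\beta_-,\beta_+)^2$ does not keep perturbed points inside $\mathcal X_{\rm orb}$ itself.

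You do not need the constrained and unconstrained infima to coincide. For the lower half of \eqref{eq:test-mean-secant}, note that the constrained infimum is at least the unconstrained value \eqref{eq:two-atom-profile}, then use $|d|\ge|d_0|/2$ and $|\sin t|\ge2|t|/\pi$. For the upper half, reuse the admissible coefficients $x'=x$, which give $\tfrac12\lambda_\star s|d|\,\|v_1-v_2\|\,\|e(\phi)-e(\phi')\|\le\tfrac12\lambda_\star s|d|\,\|v_1-v_2\|\,|\phi-\phi'|$. This is exactly how the paper closes the step.
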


\paragraph{What this result buys.}
The theorem tells an experimenter when collecting more test replicates can
help.  A zero task secant means that coefficient profiling absorbs the
orientation perturbation, so only dictionary training resolves the physical
rays.  A nonzero secant contributes additive test information, and
\cref{cor:two-atom-contrast} gives an explicit crossover.  The finite
matching rate is intentionally not generalized beyond the declared orbit.

%% file: sections/06_proof_ideas.tex
\section{Why the rates arise}
\label{sec:proof-ideas}

This section gives the proof logic in the same order as the statistical
claims.  Complete density expansions, uniform envelopes, quotient arguments,
measurability details, and least-favourable constructions belong in the
technical supplement; the purpose here is to expose the mechanism rather
than hide it behind the moment algebra.

\subsection{Why orientation appears at cubic order}

Conditional on a training support indicator \(I\), the observation \(Y\) is
Gaussian with covariance
\begin{equation}
C_I=\nu I_q+\lambda_K\sum_{j=1}^n I_jP_j .
\label{eq:conditional-training-covariance}
\end{equation}
A conditional Wick expansion followed by exchangeable averaging gives
\cref{lem:moment-reduction}.  The important proof choice is to aggregate
over supports before taking absolute values.  The aggregate density through
degree three depends on the child power sums
\begin{equation}
B_k=\sum_{j=1}^q z_j^{\otimes k},\qquad k=1,2,3.
\label{eq:child-power-sums}
\end{equation}
At \(b=0\), \(B_1=0\); \(B_2\) depends on \(LL^\top\) and is blind to
residual right orientation.  The first orientation-sensitive term is
\begin{equation}
B_3=L^{\otimes3}T_q.
\label{eq:cubic-power-sum}
\end{equation}
Its coefficient is \(\Delta_{\mathcal L}\).  This proves both cancellation
through degree two and the exact proper-subdictionary criterion for a
nonzero cubic score.  A common Gaussian cross-weight envelope controls all
finite-mixture components, and a fourth-order composed remainder yields the
directed Hellinger/KL upper chord.  A uniformly injective collapsed moment
operator supplies the reverse Hellinger bound.

The inverse step is geometric.  Polar whitening recovers the quadratic
shape from \(G_2\).  The stabilizer identity
\begin{equation}
\operatorname{Stab}_{O(U)}(T_q)=\mathfrak S_q
\label{eq:tensor-stabilizer}
\end{equation}
then recovers residual orientation from \(G_3\) modulo the child
permutations and no larger group.  Because \(G_2\) and \(G_3\) have sizes
\(s^2\) and \(s^3\), their physical inverse carries the \(s^{-1}\) and
\(s^{-2}\) factors in \cref{eq:physical-quotient-inverse}.  Finally, a
point-adapted right-body chart separates the regular shape directions from
orientation.  Score--moment duality prevents the cubic direction from being
absorbed by the regular nuisance tangent space, proving the profiled
\(s^6\) information in
\cref{eq:profiled-orientation-information}.

\subsection{Why one profile is honest}

The coverage argument is intentionally simpler than the geometry.  Let
\(\mathcal E_D\) be the training rectangle in
\cref{eq:training-rectangle}, and let
\(\mathcal E_T=\{\|\bar Z-\mu_\star\|\le\tau_{T,q}\}\).
On \(\mathcal E_D\), the true \((p_\star,\nu_\star,[D_\star])\) remains in
the training profile.  On \(\mathcal E_T\), the true support and coefficient
vector witness a feasible test profile for that same dictionary.  Hence
\begin{equation}
\mathcal E_D\cap\mathcal E_T
\subseteq
\{\vartheta_\star\in\widehat{\mathfrak C}_{q,r,s}^{p}\}.
\label{eq:truth-retention-inclusion}
\end{equation}
The two-level coverage statement follows because
\(\Pp(\mathcal E_D)\ge1-\alpha_D\) and, conditional on every training
sample, \(\Pp(\mathcal E_T)\ge1-\alpha_T\).  There is no union bound over
supports: the event only needs the true witness to survive.

For measurability, the parameter shell and coefficient boxes are compact,
the moment maps and profile losses are continuous, and the graph of the raw
profile is Borel with compact sections.  A measurable-projection theorem
then gives a random compact physical image.  The fallback only totalizes the
map on an empty raw profile; the event in
\cref{eq:truth-retention-inclusion} never invokes it.

On one nonempty training profile, two retained moment images differ by at
most twice the rectangle radii.  The all-pair chord and quotient inverse
give \(\rho_N(s)\).  The cross-dictionary margins in
\cref{lem:cross-dictionary-margins} then have a transparent consequence:
two candidates inside the same radius-\(\tau_{T,q}\) test ball cannot
coexist if their means are more than \(2\tau_{T,q}\) apart.  Excluding the
separated sheet removes the order-one term; excluding wrong fine supports
removes the order-\(s\) term; the remaining variation is the transported
dictionary radius.

\subsection{Why the three lower bounds match the three decisions}

\Cref{lem:honesty-diameter} converts indistinguishability into unavoidable
confidence diameter.  We apply it to three pairs in the same parameter
class.

\begin{enumerate}[leftmargin=*,label=\textbf{Pair \arabic*.}]
\item A fine-versus-anchor pair has identical training distribution,
order-one physical separation, and test KL of order \(I_G^{(r)}\).
This produces the parent gate.

\item Two size-\(r\) supports differing by one child have identical training
distribution, physical separation of order \(s\), and test KL of order
\(I_S\).  This produces the child-support gate.

\item A compensated orientation pair moves an active physical ray by
order \(s|h|\) while modifying the coefficients so that the entire test mean
is unchanged.  Its joint KL is therefore training-only and bounded by
\(CNs^6h^2\).  Taking
\[
|h|\asymp\min\{1,(\sqrt N\,s^3)^{-1}\}
\]
produces
\(s\wedge(\sqrt N\,s^2)^{-1}\).
\end{enumerate}

The third pair explains why increasing \(T\) cannot universally repair a
closed dictionary gate: along that least-favourable path, the test law is
exactly the same.  Once the first two gates are open, the upper radius of
\cref{thm:honest-region} and this third lower bound coincide up to constants.

\subsection{Why some test signals nevertheless help}

An orientation perturbation changes the test mean by
\(\dot D_S[\Omega]x\).  Coefficient perturbations can absorb precisely the
component in \(\operatorname{span}(D_S)\).  Projection onto the orthogonal
complement therefore yields the efficient secant
\(\chi_{\rm tan}\) and the local information identity
\cref{eq:test-efficient-information}.  On the restricted orbit, training and
test KL add by independence:
\begin{equation}
\KL_{\rm joint}
\asymp
\left(Ns^6+\frac{T\chi_s^2s^2}{\sigma^2}\right)
|\phi-\phi'|^2.
\label{eq:joint-orbit-kl}
\end{equation}
The target secant converts the admissible orientation width into physical
diameter, producing \cref{eq:restricted-task-rate}.  For two active atoms,
equal amplitudes cancel the transverse component exactly; a nonzero contrast
reveals it.  This is why the explicit crossover depends on the amplitude
difference rather than on reconstruction signal-to-noise ratio alone.

%% file: sections/08_discussion.tex
\section{Discussion}
\label{sec:discussion}

The paper addresses a decision problem created by learning the representation
before using it.  Near coherent-atom collision, dictionary uncertainty does
not simply add a regular variance term to a known-design confidence interval.
It can change the finest physical support statement that is identifiable.
The proposed correspondence responds by retaining all training-compatible
dictionaries until after the test support is profiled.  Its main gain is
therefore calibrated granularity: it refuses unstable leaf precision when
the dictionary gate is closed, yet avoids the permanent conservatism of an
always-group rule when the joint information opens all three gates.

The phase diagram also changes how additional data should be collected.  A
closed parent or support gate is a test-side limitation.  A closed dictionary
gate is a calibration limitation and, along compensated directions, cannot
be repaired by arbitrarily many repeated measurements of the same test
signal.  The task secant refines that conclusion: asymmetric coefficients can
create transverse test information, whereas symmetric signals remain
training-limited.  Thus the theory gives a diagnostic for allocating effort
between dictionary training, test replication, and signal diversity.

\paragraph{Scope.}
The current theorem is fixed-dimensional.  It treats one supplied coherent
affine-simplex block, one separated anchor, one externally supplied positive
shell \(s\), positive beta-min coefficients, and fixed
\(2\le r\le q-1\).  It does not learn a hierarchy, adapt over an unknown
collision scale, treat arbitrary coherent frames, allow growing \(q\), or
provide a polynomial-time high-dimensional solver.  The finite matching task
theorem applies only to the explicitly declared one-dimensional orbit; its
general-\(r\) content is the local efficient-information identity.

\paragraph{Computation and finite-sample sharpness.}
Exact compact profiling defines a valid statistical benchmark and the
supplementary median-of-means rectangle provides a uniformly covering
fixed-\(q\) construction with the required \(N^{-1/2}\) order above its block
threshold.  That construction is conservative.  A practically useful
implementation must replace it with a sharper certified radius and must
compute an outer approximation that cannot delete a feasible truth.
Uncertified nonconvex optimization, bootstrap inflation around one selected
dictionary, or empirical quantile tuning would not inherit the honesty
theorem.

\paragraph{Relation to applications.}
Sparse array localization and blind spectral unmixing motivate physical
rays, coherent groups, finite calibration, and resolution-adaptive
statements.  The present results do not claim a formal embedding of either
application.  The synthetic affine-simplex experiment is nevertheless a
meaningful theory demonstration: it isolates the exact train--test
phenomenon, provides physical support ground truth, and makes every
information gate independently controllable.  A subsequent application
paper should let the sensing geometry determine the hierarchy and should
prove or validate the bridge rather than attach a generic reconstruction
benchmark.

\paragraph{Outlook.}
Three extensions are especially natural.  First, certified sharp profiling
would turn the optimal statistical construction into an operational method.
Second, multiple coherent blocks and data-driven but confidence-valid
hierarchies would extend adaptive resolution beyond one supplied parent.
Third, an amortized sparse solver could learn to approximate the exact
profile while a deterministic outer certificate preserves honesty.  These
extensions should be judged against the same principle as the present paper:
the output must state no more physical resolution than the full data can
support.

%% file: appendices/A_main_proofs.tex
\subsection{Proof of the collision-singular training geometry}
\begin{proof}[Proof of \cref{thm:training-geometry}]

\par\medskip\noindent\textbf{Exact exchangeable moment reduction}\label{exact-exchangeable-moment-reduction}

Conditional on the support indicator \(I\), the training observation is Gaussian with covariance

\[
C_I=\nu I_q+\lambda_K\sum_{j=1}^nI_jP_j.
\]

Exchangeability gives, for \(i\ne j\),

\[
E(\lambda_KI_j)=a_1,
\quad
E(\lambda_K^2I_j)=a_{2d},
\quad
E(\lambda_K^2I_iI_j)=a_{2o}.
\tag{5.1}
\]

Taking expectations proves (3.1). Conditional Wick expansion gives

\[
K_4=3\{E(C_I^{\odot2})-(EC_I)^{\odot2}\}.
\]

Separating diagonal and off-diagonal index pairs,

\[
E\left(\lambda_K\sum_jI_jP_j\right)^{\odot2}
=a_{2o}\Sigma_2^{\odot2}
+(a_{2d}-a_{2o})\Sigma_4,
\]

which proves (3.2). Finally,

\[
a_{2d}-a_{2o}
=\frac{E\{\lambda_K^2K(n-K)\}}{n(n-1)}.
\tag{5.2}
\]

Thus \(\Delta_{\mathcal L}=0\) exactly when the code law puts no positive power on a proper nonempty support. On every fixed law satisfying (2.6), the triangular map

\[
(M_2,K_4)\longleftrightarrow(\nu,\Sigma_2,\Sigma_4)
\tag{5.3}
\]

is uniformly bi-Lipschitz. The Bernoulli specialization follows by setting \(a_1=a_{2d}=p\), \(a_{2o}=p^2\), and then using (3.4)--(3.5).

\par\medskip\noindent\textbf{Exchangeable subset aggregation}\label{exchangeable-subset-aggregation}

The density proof must aggregate subsets before taking absolute values. Let \(H_1,\ldots,H_q\) belong to a commutative symmetric tensor algebra, \(Q_m=\sum_jH_j^{\odot m}\), and let \(C\) be uniform over all \(h\)-subsets of \([q]\). Put \(X_C=\sum_{j\in C}H_j\) and

\[
\pi_m(h)=\frac{(h)_m}{(q)_m}.
\]

Partitioning ordered index tuples by equality pattern gives

\[
E_hX_C=\pi_1Q_1,
\tag{5.4}
\]

\[
E_hX_C^{\odot2}
=(\pi_1-\pi_2)Q_2+\pi_2Q_1^{\odot2},
\tag{5.5}
\]

and

\[
\begin{aligned}
E_hX_C^{\odot3}={}&
(\pi_1-3\pi_2+2\pi_3)Q_3\\
&+3(\pi_2-\pi_3)Q_1\odot Q_2
+\pi_3Q_1^{\odot3}.
\end{aligned}
\tag{5.6}
\]

These identities are exact for every \(h\). They are the reason the componentwise order-\(s\) displacement does not determine the mixture-level orientation information.

Condition on total support size \(K=k\) and anchor indicator \(A\in\{0,1\}\). The active-child count is \(h=k-A\), with weight

\[
\rho_{k,A}=\frac{\binom q{k-A}}{\binom{q+1}k}.
\tag{5.7}
\]

Define

\[
\mathcal A_k=\{A\in\{0,1\}:0\le k-A\le q\}.
\]

Every sum over \((k,A)\) below means \(A\in\mathcal A_k\); no density is defined for an invalid stratum. Equivalently, use the conventions \(\binom qm=0\) outside \(m\in\{0,\ldots,q\}\), and omit every zero-weight stratum. In (5.10), set \(e_m=0\) for \(m\notin\{0,\ldots,q\}\) and \(e_0=1\).

For \(H_j=P_j-P_u\), the base and perturbation covariances are

\[
C_{k,A,0}=\nu I_q+\lambda_k(hP_u+AP_a),
\qquad
X_C=\lambda_k\sum_{j\in C}H_j.
\]

Taylor expansion of the already aggregated stratum through total child degree three depends only on

\[
Q_1,Q_2,Q_3,Q_1^{\odot2},Q_1\odot Q_2,Q_1^{\odot3}.
\tag{5.8}
\]

After the projector chart is substituted, these are triangular analytic functions of the three child power sums

\[
B_1=\sum_jz_j,
\qquad B_2=\sum_jz_j^{\otimes2},
\qquad B_3=\sum_jz_j^{\otimes3}.
\tag{5.9}
\]

The exact characteristic function, useful for checking that no support stratum has been omitted, is

\[
\widehat f(t)
=e^{-\nu\|t\|^2/2}
\sum_{k=0}^n\frac{\pi_k}{\binom nk}
\left[
e_k(w_{1,k},\ldots,w_{q,k})
+w_{a,k}e_{k-1}(w_{1,k},\ldots,w_{q,k})
\right],
\tag{5.10}
\]

where

\[
w_{j,k}=\exp\{-\lambda_k(d_j^\top t)^2/2\},
\qquad
w_{a,k}=\exp\{-\lambda_k(d_a^\top t)^2/2\}.
\]

\par\medskip\noindent\textbf{Cubic observed orientation score}\label{cubic-observed-orientation-score}

At the centered isotropic core,

\[
d_j(s,R)=
\sqrt{1-\lambda_\star^2s^2}\,u
+\lambda_\star sRv_j.
\]

Write \(\alpha=u^\top t\), \(x=P_Ut\), and \(a_j=x^\top Rv_j\). For a uniform \(h\)-subset \(C\), let

\[
A_C=\sum_{j\in C}a_j,
\qquad B_C=\sum_{j\in C}a_j^2.
\]

Simplex identities and (5.4)--(5.6) give

\[
E_hA_C=0,
\]

\[
E_h(A_CB_C)
=\frac{h(q-h)}{q(q-1)}
(R^{\otimes3}T_q)[x,x,x],
\tag{5.11}
\]

and

\[
E_hA_C^3
=\frac{h(q-h)(q-2h)}{q(q-1)(q-2)}
(R^{\otimes3}T_q)[x,x,x].
\tag{5.12}
\]

The orientation-dependent part of the normalized stratum characteristic function is therefore cubic in \(s\). Along \(R_\epsilon=R\exp(\epsilon\Omega)\), its leading Fourier coefficient is

\[
\widehat g_{R,\Omega}(t)
=\frac{\lambda_\star^3\Delta_{\mathcal L}}2
\alpha\,
R^{\otimes3}(\Omega\cdot T_q)[x,x,x]
+O(\|t\|^6).
\tag{5.13}
\]

The anchor-averaged coefficient used here follows from the exact identity

\[
\sum_{A=0}^1\rho_{k,A}
\frac{(k-A)\{q-(k-A)\}}{q(q-1)}
=\frac{k(n-k)}{n(n-1)}.
\tag{5.14}
\]

Finally,

\[
\|\Omega\cdot T_q\|_F^2
=3\left(\frac q{q-1}\right)^3\|\Omega\|_F^2.
\tag{5.15}
\]

Hence \(g_{R,\Omega}\ne0\) for every nonzero quotient-orientation tangent whenever \(\Delta_{\mathcal L}>0\).  Conversely, if \(\Delta_{\mathcal L}=0\), the nonnegative identity in (2.5) forces every positive-variance activation stratum to have \(K\in\{0,n\}\).  The empty stratum is independent of the child orientation, and at the centered isotropic core the full-stratum covariance depends on the children only through the tight-frame sum \(\sum_jP_j\), which is also orientation invariant.  The complete mixture density is then orientation invariant and \(g_{R,\Omega}=0\).  This proves the equivalence. Weighted Gaussian derivative bounds in the next subsection turn (5.13) into the norm expansion (4.4). A matching lower information bound can also be obtained by differentiating (3.2), noting that the orientation derivative of \(\Sigma_2\) vanishes at the isotropic core while the derivative of \(\Sigma_4\) contains \(s^3R^{\otimes3}(\Omega\cdot T_q)\), and applying the score--moment Cauchy--Schwarz inequality under uniform eighth moments.

\par\medskip\noindent\textbf{Gaussian envelope and the full density chord}\label{gaussian-envelope-and-the-full-density-chord}

\par\smallskip\noindent\textit{Lemma: --- common cross-weight envelope}\label{lemma-common-cross-weight-envelope}

Suppose the eigenvalues of \(A\) lie in a fixed compact subset of \((0,\infty)\), and

\[
\|A^{-1/2}(B-A)A^{-1/2}\|_{\mathrm{op}}\le\rho<1/2.
\]

For every derivative order \(0\le m\le5\),

\[
\int
\frac{|D^m\gamma_A[H_1,\ldots,H_m]|^2}{\gamma_B}
\le C_m\prod_{j=1}^m\|H_j\|_F^2.
\tag{5.16}
\]

\textbf{Proof.} A covariance derivative is a degree-\(2m\) polynomial in \(y\), with uniformly bounded coefficients, times \(\gamma_A(y)\). The first derivative is explicitly

\[
D\gamma_A[H](y)
=\frac{\gamma_A(y)}2
\left\{y^\top A^{-1}HA^{-1}y
-\operatorname{tr}(A^{-1}H)\right\}.
\tag{5.16a}
\]

Repeated differentiation proves the polynomial-times-Gaussian form used for orders two through five. The Gaussian quotient has precision

\[
2A^{-1}-B^{-1}
\succeq
\left(2-\frac1{1-\rho}\right)A^{-1}\succ0.
\]

All required polynomial moments are therefore uniformly finite. The same calculation controls integral Taylor remainders and one derivative of those remainders. \(\square\)

For each \((k,A)\) stratum, define the already aggregated full density map

\[
\mathcal G_{k,A}(z_1,\ldots,z_q;P_a,\nu)
=\binom qh^{-1}\sum_{|C|=h}
\gamma_{\nu I+\lambda_k\{AP_a+\sum_{j\in C}P(d(z_j))\}}.
\]

Let \(\mathcal P_{\le3,k,A}\) be its multivariate Taylor polynomial at the collapsed child point, truncated by total child degree three. Set

\[
\mathcal P_{\le3}
=\sum_{k,A}\pi_k\rho_{k,A}\mathcal P_{\le3,k,A},
\qquad
R_{\ge4}=f-\mathcal P_{\le3}.
\]

Straight paths between optimally aligned shell endpoints may leave the condition-number cone, but remain in an enlarged \(O(s)\) labelled cap. Taylor's formula and Lemma 5.1 give there

\[
\left\|
\frac{D_zR_{\ge4}[\dot z]}{\sqrt{f_{\theta'}}}
\right\|_2
\le Cs^3\max_j\|\dot z_j\|,
\tag{5.17}
\]

\[
\left\|
\frac{D_{(P_a,\nu)}R_{\ge4}[\dot P_a,\dot\nu]}
{\sqrt{f_{\theta'}}}
\right\|_2
\le Cs^4(\|\dot P_a\|_F+|\dot\nu|).
\tag{5.18}
\]

The physical inverse in Subsection 5.5 implies

\[
s^3\max_j\|z_j-z'_{\pi(j)}\|
\le C\{s^3\|\Delta b\|
+s^2\|\Delta G_2\|
+s\|\Delta G_3\|\}
\le Cs\delta_F.
\tag{5.19}
\]

Integrating (5.17)--(5.18) proves

\[
\left\|
\frac{R_{\ge4}(\theta)-R_{\ge4}(\theta')}
{\sqrt{f_{\theta'}}}
\right\|_2
\le Cs\delta_F(\theta,\theta').
\tag{5.20}
\]

Choose \(s_0\) and a fixed local-noise radius \(r_\nu>0\) so that, whenever \(|\nu-\nu'|\le r_\nu\), every path covariance and either endpoint covariance obey the relative bound in Lemma 5.1. This is possible because the training noise is at least \(\nu_-\), all code powers are bounded, the anchor patch and enlarged child cap are \(O(s)\), and there are finitely many strata. No minimum mixture weight is introduced: for component remainders \(r_\ell\), endpoint components \(q_\ell\), and weights \(w_\ell\),

\[
\frac{(\sum_\ell w_\ell r_\ell)^2}
{\sum_\ell w_\ell q_\ell}
\le
\sum_\ell w_\ell\frac{r_\ell^2}{q_\ell}.
\tag{5.21}
\]

For the low-order part, (5.4)--(5.9) reduce every coefficient to

\[
B_1,B_2,B_3,B_1^{\otimes2},
\operatorname{Sym}(B_1\otimes B_2),B_1^{\otimes3}.
\]

The exact identities

\[
B_1=qb,
\tag{5.22}
\]

\[
B_2=qb^{\otimes2}+\frac q{q-1}G_2,
\tag{5.23}
\]

\[
B_3=qb^{\otimes3}
+\frac q{q-1}\operatorname{Sym}_3(b\otimes G_2)
+G_3
\tag{5.24}
\]

and Lemma 5.1 yield

\[
\left\|
\frac{\mathcal P_{\le3}(\theta)-
\mathcal P_{\le3}(\theta')}{\sqrt{f_{\theta'}}}
\right\|_2
\le C\delta_F(\theta,\theta').
\tag{5.25}
\]

Equations (5.20) and (5.25), and the same two estimates with the endpoints exchanged, give the two directed local-noise bounds

\[
\max\left\{
\int\frac{(f_\theta-f_{\theta'})^2}{f_{\theta'}},
\int\frac{(f_\theta-f_{\theta'})^2}{f_\theta}
\right\}
\le C\delta_F^2(\theta,\theta')
\quad\text{if }|\nu-\nu'|\le r_\nu.
\tag{5.25a}
\]

If \(|\nu-\nu'|>r_\nu\), align the child labels and apply log-sum to the common fixed-law component weights. Compactness of all component covariance pairs gives \(\operatorname{KL}(f_\theta\|f_{\theta'})\le K\), whereas \(\delta_F^2\ge r_\nu^2\). Hence

\[
\operatorname{KL}(f_\theta\|f_{\theta'})
\le Kr_\nu^{-2}\delta_F^2(\theta,\theta').
\tag{5.25b}
\]

Supplement S9 writes the component pairing, zero-weight convention, and both ordered directions continuously.

For the Hellinger lower chord, take

\[
\Psi(Y)=\{\operatorname{vech}(YY^\top),
\operatorname{symvec}(Y^{\otimes4})\}.
\]

Uniform Gaussian-mixture eighth moments and Cauchy--Schwarz imply

\[
\|E_\theta\Psi-E_{\theta'}\Psi\|
\le C H(f_\theta,f_{\theta'}).
\tag{5.26}
\]

Indeed,

\[
E_\theta\Psi-E_{\theta'}\Psi
=\int\Psi(\sqrt f_\theta-\sqrt f_{\theta'})
(\sqrt f_\theta+\sqrt f_{\theta'}),
\]

and the second Cauchy--Schwarz factor is uniformly bounded by the eighth moment envelope. This is the required moment-to-Hellinger lemma.

The explicit fixed-law inverse and the uniform collapsed moment secant in Supplement S8 give the continuous lower chain

\[
\delta_F
\le C\|\Delta(\nu,\Sigma_2,\Sigma_4)\|
\le C\|\Delta(M_2,K_4)\|
\le C\|\Delta E\Psi\|
\le C H(f_\theta,f_{\theta'}).
\tag{5.26a}
\]

Together with \(H^2\le\operatorname{KL}\), (5.25a)--(5.26a) prove (4.1).

For unknown Bernoulli \(p\), the lower chord follows from (3.3)--(3.5). For the upper chord, hold \(p\) fixed and use (4.1), then change \(p\) at a fixed dictionary. With support-indicator weights

\[
w_I(p)=p^{|I|}(1-p)^{n-|I|},
\]

weighted Cauchy--Schwarz gives

\[
\int
\frac{\{f_{p,\nu,D}-f_{p',\nu,D}\}^2}
{f_{p',\nu,D}}
\le
\sum_I\frac{\{w_I(p)-w_I(p')\}^2}{w_I(p')}
\le C|p-p'|^2.
\tag{5.27}
\]

All weights are uniformly positive on the fixed branch. To make the endpoint argument explicit, set \(f_0=f_{p,\nu,D}\), \(f_1=f_{p,\nu',D'}\), and \(f_2=f_{p',\nu',D'}\). Since only the Bernoulli weights change between \(f_1\) and \(f_2\), compactness of \([p_-,p_+]\) gives \(cf_1\le f_2\le Cf_1\) pointwise. Therefore, in the local-noise region,

\[
\begin{aligned}
\chi^2(f_0\|f_2)
&\le2\int\frac{(f_0-f_1)^2}{f_2}
+2\int\frac{(f_1-f_2)^2}{f_2}\\
&\le C\chi^2(f_{p,\nu,D}\|f_{p,\nu',D'})
+C\sum_I\frac{\{w_I(p)-w_I(p')\}^2}{w_I(p')}\\
&\le C\{\delta_F^2+|p-p'|^2\}.
\end{aligned}
\tag{5.27a}
\]

The same calculation is repeated with endpoints exchanged. In the far-noise region, generalized log-sum bounds the joint latent-component KL by the Bernoulli weight KL plus a compact weighted Gaussian KL; the fixed noise separation absorbs the latter constant into \(\delta_{F,p}^2\). This proves (4.2) without asserting pointwise comparability between different dictionaries.

\par\medskip\noindent\textbf{Quotient inverse and nuisance-projected orientation information}\label{quotient-inverse-and-nuisance-projected-orientation-information}

Equations (5.22)--(5.24) give a triangular inverse from child power sums to \((b,G_2,G_3)\). The separated anchor is identified jointly with these coordinates by the second/fourth-moment pair. In a local anchor chart \(a(w)=(a_0+w)/\|a_0+w\|\), with \(w\in a_0^\perp\), let \(X=tu+x\), \(x\in U\), and \(H(\dot w)=a_0\dot w^\top+\dot w a_0^\top\). For increments \(\dot B=(\dot B_1,\dot B_2,\dot B_3)\), define

\[
\begin{aligned}
\mathsf S_{\dot B}(t,x)
={}&2t\dot B_1[x]+\dot B_2[x,x]
-t^2\operatorname{tr}\dot B_2-t\dot B_3[x,I_U],\\
\mathsf Q_{\dot B}(t,x)
={}&4t^3\dot B_1[x]+6t^2\dot B_2[x,x]
-2t^4\operatorname{tr}\dot B_2\\
&+4t\dot B_3[x,x,x]-6t^3\dot B_3[x,I_U].
\end{aligned}
\tag{5.28}
\]

With \(h_0(t,x)=\tfrac12(t+e_1^\top x)^2\), the collapsed operator is

\[
\mathscr A_q(\dot B,\dot w,\dot\nu)
=\left(
\dot\nu,
\mathsf S_{\dot B}+H(\dot w)[X,X],
\mathsf Q_{\dot B}+2h_0H(\dot w)[X,X]
\right).
\tag{5.28a}
\]

Supplement S8 proves coefficient by coefficient that \(\mathscr A_q\) is injective on \(U\oplus\operatorname{Sym}^2(U)\oplus\operatorname{Sym}^3(U) \oplus a_0^\perp\oplus\mathbb R\). Hence, for fixed \(q\),

\[
\|\mathscr A_q h\|\ge\sigma_q\|h\|,
\qquad \sigma_q>0.
\tag{5.28b}
\]

After the algebraic quotient alignment from the physical inverse below, the exact all-pair secant satisfies

\[
\mathcal R(\theta)-\mathcal R(\theta')
=\mathscr A_q\{\Xi(\theta)-\Xi(\theta')\}
+\operatorname{Rem}_q(\theta,\theta'),
\qquad
\|\operatorname{Rem}_q\|
\le C_qs\|\Xi(\theta)-\Xi(\theta')\|,
\tag{5.28c}
\]

where \(\Xi=(B_1,B_2,B_3,w,\nu)\), \(\mathcal R=(\nu,\Sigma_2,\Sigma_4)\), and \(\|\Delta\Xi\|\asymp\delta_F\). Choosing \(s_0\le\sigma_q/(2C_q)\) yields the uniform same-shell moment secant used in (5.26a). This step uses the quotient inverse only for algebraic alignment, so the statistical moment lower bound is not used circularly.

For the physical inverse, take the left polar decomposition

\[
L=P_LO_L,
\qquad P_L=(LL^\top)^{1/2}.
\]

Functional calculus on the normalized compact shape class gives

\[
\|P_L-P_L'\|_F
\le Cs^{-1}\|G_2-G_2'\|_F.
\tag{5.29}
\]

Normalize the cubic tensor:

\[
W(L)=P_L^{-\otimes3}G_3=O_L^{\otimes3}T_q.
\]

The cubic \(x\mapsto T_q[x,x,x]\) has its positive spherical maxima exactly at the simplex vertices. Hence every orthogonal stabilizer of \(T_q\) permutes those vertices, and conversely every simplex permutation stabilizes the tensor. Therefore

\[
\operatorname{Stab}_{O(U)}(T_q)=\mathfrak S_q.
\tag{5.29a}
\]

Identity (5.15) gives a uniformly injective orbit differential. Local quotient inverse bounds follow from the constant-rank/local-embedding theorem; compactness separates pairs outside a finite union of diagonal quotient charts. This proves the global estimate

\[
\min_{\pi\in\mathfrak S_q}
\|O_L-O_L'\Pi_\pi\|_F
\le C\left[
s^{-3}\|G_3-G_3'\|_F
+s^{-2}\|G_2-G_2'\|_F
\right].
\tag{5.30}
\]

Combining (5.29)--(5.30) proves (4.3).

For nuisance-projected orientation information use exactly the right-body chart (4.4a). Direct differentiation gives

\[
DG_2[h]=sR(SG+GS)R^\top,
\tag{5.31}
\]

and

\[
DG_3[h]
=\mathcal S_{R,G,s}[S]
+(sRG)^{\otimes3}(\Omega\cdot T_q),
\qquad
\|\mathcal S_{R,G,s}[S]\|_F\le Cs^2\|S\|_F.
\tag{5.31a}
\]

The Lyapunov map \(S\mapsto SG+GS\) is uniformly coercive on the normalized compact shape class. After its contribution is recovered from \(DG_2\), invertibility of \(G^{\otimes3}\) and (5.15) recover the orientation contribution from \(DG_3\). Adding the separate coordinates therefore yields

\[
\|D(p,F)(\theta)[h]\|^2
\asymp
|\dot p|^2+|\dot\nu|^2+\|\dot b\|^2+\|\dot P_a\|_F^2
+s^2\|S\|_F^2+s^6\|\Omega\|_F^2,
\tag{5.31b}
\]

with \(\dot p\) omitted for a fixed law. This is the required \(s\)-uniform nonisotropic statement; global injectivity alone would not supply the weighted constant.

Let \(\ell_{\theta,h}=Df_\theta[h]/f_\theta\). Uniform eighth moments and the score identity

\[
D E_\theta\Psi[h]
=E_\theta\{(\Psi-E_\theta\Psi)\ell_{\theta,h}\}
\]

imply \(\|D(M_2,K_4)[h]\|^2\le C I_\theta(h)\). Differentiating the all-pair moment chord and using (5.31b) gives the reverse lower bound, while the differentiated density expansion gives the upper bound. Consequently

\[
c\|h\|_{\mathrm{mix},\theta}^2
\le I_\theta(h)
\le C\|h\|_{\mathrm{mix},\theta}^2.
\tag{5.31c}
\]

Taking the infimum over all nuisance tangents proves the lower half of (4.5), and setting them to zero proves its upper half. Supplement S10 supplies the inner/enlarged-tube tangent scope and all derivative details.
\end{proof}

\subsection{Proof of measurability, conditional honesty, and adaptive diameter}
\begin{proof}[Proof of \cref{prop:profile-well-defined,thm:honest-region}]

\par\medskip\noindent\textbf{Measurability, truth retention, and conditional coverage}\label{measurability-truth-retention-and-conditional-coverage}

Let

\[
\psi(Y)=\{\operatorname{vech}(YY^\top),
\operatorname{symvec}(Y^{\otimes4})\}\in\mathbb R^{d_q}.
\]

For each of the second- and fourth-order orthonormal symmetric-coordinate systems, split the sample deterministically into \(B_k\asymp\log(2d_k/\alpha_k)\) blocks, take every coordinate block mean, and take its lower median under a fixed tie convention. Decode the result as \((\widehat M_2,\widehat M_4)\), then apply the explicit polynomial cumulant map

\[
(A,B)\mapsto(A,B-3A^{\odot2}).
\]

Uniform eighth moments, the scalar median-of-means inequality, a finite coordinate union bound, fixed-dimensional norm equivalence, and the explicit raw-moment-to-cumulant propagation in Supplement S13 prove (3.9). For sample sizes below the block threshold, the procedure returns the full compact training class.

Let \(\widetilde{\mathcal D}_{q,s}\) be the closed, canonically signed labelled shell and form the compact associated quotient

\[
\mathcal M^p_{q,r,s}
=\frac{[p_-,p_+]\times[\nu_-,\nu_+]
\times\widetilde{\mathcal D}_{q,s}
\times\left(\binom{[q]}r\sqcup\{\partial\}\right)}
{\mathfrak S_q}.
\tag{5.31d}
\]

The training projection, the compact-fiber profile value, and the physical target map are continuous on this quotient. The graph of the raw set is a Borel intersection of two closed-in-parameter sublevel constraints and has compact sections. For every open hit set, its graph intersection has sigma-compact sections; the Arsenin--Kunugui projection theorem \citep[Theorem~18.18]{Kechris1995} therefore makes the hit event Borel. In particular, raw-profile nonemptiness is Borel. Taking the continuous physical image on that event and the fixed singleton \(\{v^\dagger\}\) on its complement proves that (3.15) is a Borel map into \(\mathcal K(\mathfrak V_q)\) with its Hausdorff Borel sigma field. No global atom ordering or quotient selector is used. Supplement S11 gives the standalone random-correspondence lemma.

Let

\[
E_D=\{(p_\star,\nu_\star,[D_\star])
\in\widehat{\mathcal K}_{q,s}^{p}\}.
\]

Equation (3.9) gives \(P(E_D)\ge1-\alpha_D\). Conditional on any training realization in \(E_D\), the event

\[
\|\bar Z-\mu_\star\|\le\tau_{T,q}
\]

retains the true support and coefficient witness and has probability at least \(1-\alpha_T\), uniformly over \(\sigma\le\sigma_+\). This proves (4.7). No union bound over candidate supports is used: the truth survives on one common test-noise event.

For two members of the same nonempty training profile, (3.3)--(3.5) and the branch inequality

\[
|p(1-p)-p'(1-p')|
=|p-p'|\,|1-p-p'|
\ge(1-2p_+)|p-p'|
\]

give

\[
|p-p'|+
\delta_F\{(D,\nu),(D',\nu')\}
\le C\epsilon_N.
\tag{5.32}
\]

The physical inverse (4.3) proves (4.8).

\par\medskip\noindent\textbf{Cross-dictionary margins and the upper phase diagram}\label{cross-dictionary-margins-and-the-upper-phase-diagram}

Supplement S12 first proves a uniform child-frame singular-value bound. There is a displayed \(c_D>0\), depending only on the frozen shell, such that

\[
\left\|\sum_{j=1}^qh_jd_j\right\|
\ge c_Ds\|h\|_2,
\qquad h\in\mathbb R^q.
\tag{5.32a}
\]

For two different size-\(r\) supports, the coefficient-difference vector has at least two exclusive entries of magnitude at least \(\beta_-\). Thus the same-dictionary margin is at least \(\sqrt2c_D\beta_-s\). Transporting \(r\) canonically signed atoms across dictionary distance \(\rho\) costs at most \(C_Sr\beta_+\rho\), giving the explicit version of (4.10):

\[
m_S(\rho)
=\left[\sqrt2c_D\beta_-s-C_Sr\beta_+\rho\right]_+.
\tag{5.32b}
\]

For the parent margin, put \(c_u=\sqrt{1-C_0^2s_0^2}\) and \(h_a=(u-e_1)/\sqrt2\). Since \(h_a\perp a_0\), canonical signs give \(|h_a^\top a|\le C_as_0\), whereas every fine mean satisfies

\[
h_a^\top\mu(D,S,x)
\ge\frac r{\sqrt2}
\{\beta_-c_u-\beta_+C_0s_0\}.
\]

Shrink \(s_0\) once so that

\[
g_G^0
:=\frac r{\sqrt2}\{\beta_-c_u-\beta_+C_0s_0\}
-\gamma_+C_as_0
\ge\frac{r\beta_-}{2\sqrt2}.
\tag{5.32c}
\]

Cross-dictionary transport then yields

\[
m_G(\rho)
=\left[g_G^0-C_G(r\beta_++\gamma_+)\rho\right]_+,
\tag{5.32d}
\]

which is (4.9) after renaming fixed constants.

Any two retained candidates have fitted means within \(2\tau_{T,q}\). Consequently, a fine and separated candidate cannot coexist if \(2\tau_{T,q}<m_G(\rho)\).  If \(m_S(\rho)=0\), the support indicator in (4.11) is automatically active and every pair of fine ray sets lies in the common \(O(s)\) cap, so its distance is \(O(s)\) without any matching.  If \(m_S(\rho)>0\), the transport constant \(C_S\) is chosen large enough that \(\rho<c_{\mathrm{match}}s\); quotient separation then gives a unique child matching.  Distinct transported supports cannot coexist when \(2\tau_{T,q}<m_S(\rho)\), while rays on the same transported branch differ by \(O(\rho)\).  Two separated candidates also differ by \(O(\rho)\) through their anchor projectors.  These exhaustive deterministic cases prove (4.11).

The operational labels ``correct parent'' and ``correct leaf branch'' are used only under a fine truth and on the coverage event. An empty raw profile has the singleton fallback and may have zero diameter, but it certifies neither event.
\end{proof}

\subsection{Proof of the minimax lower bound and resolved optimality}
\begin{proof}[Proof of \cref{lem:honesty-diameter,thm:fixed-shell-minimax,cor:known-learned-gap}]

\par\medskip\noindent\textbf{Matching fixed-shell lower pairs}\label{matching-fixed-shell-lower-pairs}

For two joint parameters \(\xi_0,\xi_1\) with target distance \(d\), any confidence correspondence having coverage \(1-\alpha\) at both endpoints satisfies

\[
E_{\xi_0}\operatorname{diam}_{\mathrm{pr}}(\widehat C)
\ge
d\{1-2\alpha-
\operatorname{TV}(P_{\xi_0}^{N,T},P_{\xi_1}^{N,T})\}_+.
\tag{5.33}
\]

To verify (5.33), let \(A_i\) be the event that the confidence set contains the target at endpoint \(i\). Under \(P_0\),

\[
P_0(A_0\cap A_1)
\ge1-2\alpha-\operatorname{TV}(P_0,P_1).
\]

On this intersection the reported diameter is at least \(d\). Multiplying and taking expectation proves the claim.

Three fixed-\(p_0\) submodels of the unknown-\(p\) class yield (4.13).

\begin{enumerate}
\def\labelenumi{\arabic{enumi}.}
\item
  \textbf{Parent pair.} On the hard core choose \(x=\beta_-\mathbf1_S\), \(\sigma=\sigma_+\), and \(\gamma_\star=a_0^\top\mu_F\in\operatorname{int}(\Gamma_A)\), the last inclusion following from (2.10a). The training laws coincide and

  \[
  \operatorname{KL}_{N,T}
  =\frac{T}{2\sigma_+^2}
  \|(I-P_{a_0})\mu_F\|^2
  \le C\frac{Tr^2\beta_-^2}{\sigma_+^2}
  =CI_G^{(r)}.
  \tag{5.33a}
  \]

  Meanwhile the physical target distance is at least \(d_G=\sin\{\pi/4-\arcsin(\lambda_\star s_0)\}>0\) after shrinking \(s_0\).
\item
  \textbf{Wrong-support pair.} Keep the hard-core dictionary fixed, choose two supports sharing \(r-1\) atoms, and use the common coefficient \(\beta_-\). If \(i,j\) are the exchanged atoms, then

  \[
  \|\mu_S-\mu_{S'}\|^2
  =\beta_-^2\|d_i-d_j\|^2
  =2\beta_-^2\lambda_\star^2s^2\frac q{q-1},
  \tag{5.33b}
  \]

  so the joint KL is
  \(T\beta_-^2\lambda_\star^2s^2q/[(q-1)\sigma_+^2]
  \le CI_S\). The physical target distance equals

  \[
  \sqrt{1-\left\{1-\lambda_\star^2s^2q/(q-1)\right\}^2}
  \asymp s.
  \tag{5.33c}
  \]
\item
  \textbf{Compensated orientation pair.} At the centered isotropic hard core \(L=\lambda_\star sR\), with equal active coefficients, set \(w_S=\sum_{j\in S}v_j\). For \(q\ge4\), \(r\ge2\), there exists a nonzero skew generator \(\Omega\) that fixes \(w_S\) but moves at least one active vertex. Indeed, \(\|w_S\|^2=r(q-r)/(q-1)>0\), while the centered active differences span a nonzero subspace of \(w_S^\perp\); choose a skew rotation on a two-plane in \(w_S^\perp\) containing one such difference. Normalize it so that \(\max_{j\in S}\|\Omega v_j\|=1\).

  Here is the required target-secant detail. At \(h=0\), the finitely many active rays are pairwise separated by at least \(c_0s\). Choose a fixed \(h_0\) below both the finite-quotient injectivity radius and the radius on which every cross-label match remains at least \(c_0s/2\). On this chart the same-label match is optimal. The normalized generator moves at least one active ray with projective speed bounded below by \(cs\), while all active rays have speed at most \(Cs\). Taylor expansion, uniformly over the finite set of active labels, therefore gives

  \[
  cs|h|\le
  d_H^{\mathrm{pr}}
  \{\vartheta(D_0,S),\vartheta(D_h,S)\}
  \le Cs|h|,
  \qquad |h|\le h_0.
  \]

  The path \(R\exp(h\Omega)\) leaves the complete test law identical and has training KL at most \(CNs^6h^2\) by the fixed-law chord (4.1), applied to the law induced by \(p=p_0\). Taking \[
  |h|\asymp\min\{h_0,(\sqrt N\,s^3)^{-1}\}
  \] gives the lower term \(s\wedge(\sqrt N\,s^2)^{-1}\).
\end{enumerate}

Pinsker's inequality and (5.33) complete the lower bound.  For the matching
upper bound put
\[
 R_D(s,N)=s\wedge(\sqrt N\,s^2)^{-1}.
\]
By (4.8a), \(\rho_N\le C_\rho R_D\).  First choose the supplied shell
radius \(s_0\) so that
\(m_G(C_\rho s)\ge g_G^0/2\) for every \(s\le s_0\), and then choose a
fixed \(c_0>0\) so small that
\(C_\rho c_0<c_{\rm match}\) and
\[
\sqrt2c_D\beta_-s-C_Sr\beta_+C_\rho c_0s
\ge \frac{c_D\beta_-}{\sqrt2}s.
\]
The fixed upper threshold for the parent gate makes
\(2\tau_{T,q}<g_G^0/2\), so the order-one indicator in (4.11) always
vanishes in the resolved regime.  If \(R_D\ge c_0s\), the possible support
term satisfies \(Cs\le (C/c_0)R_D\) and is absorbed without requiring
support matching.  If \(R_D<c_0s\), then
\(\rho_N<C_\rho c_0s\), so the displayed inequality gives a positive
order-\(s\) support margin and unique matching.  The fixed upper threshold
for \(I_S\) makes \(2\tau_{T,q}<m_S(\rho_N)\), hence the support indicator
vanishes.  In both cases (4.11) is \(O(R_D)\).  The honest correspondence is therefore
an admissible member of the minimax class with worst-case expected diameter
at most \(CR_D\).  Combined with the compensated-orientation lower pair,
this proves (4.14) for the same class, target, and physical loss.

For the known-dictionary comparison in
\cref{cor:known-learned-gap}, set the training radius to zero and run the
same common-ball profile at the true dictionary.  The fixed-dictionary
support margin is \(\sqrt2c_D\beta_-s\), while
\(\tau_{T,q}/s=O(I_S^{-1/2})\); hence \(I_S\to\infty\) leaves only the true
fine-support branch.  Likewise \(I_G^{(r)}\to\infty\) excludes the separated
sheet.  Its physical diameter therefore tends to zero.  Over the
learned-dictionary class, the compensated-orientation pair instead retains
an order-\(s\) indistinguishable physical displacement whenever \(Ns^6\)
stays bounded.  This proves the asserted gap.
\end{proof}

\subsection{Proof of task-dependent symmetry breaking}
\begin{proof}[Proof of \cref{thm:task-symmetry,cor:two-atom-contrast}]

\par\medskip\noindent\textbf{Task-aligned orientation proof}\label{task-aligned-orientation-proof}

Let

\[
w_{S,x}=\sum_{j\in S}x_jv_j,
\qquad
\mathcal V_S^0
=\left\{\sum_{j\in S}c_jv_j:\sum_{j\in S}c_j=0\right\}.
\]

At the centered core, an orientation generator changes the test mean by

\[
\dot\mu_R=\lambda_\star sR\Omega w_{S,x}.
\]

Coefficient perturbations with zero sum generate \(\lambda_\star sR\mathcal V_S^0\); a nonzero sum creates an order-one axial component and cannot cancel a pure order-\(s\) transverse rotation. Orthogonal projection gives (4.15)--(4.16).

A zero local margin need not imply a finite invariant chord: an \(O(sh^2)\) residual may remain. This is why \cref{thm:task-symmetry} explicitly assumes both the physical-target secant (4.17) and the test-mean secant (4.18).

For the upper bound, restrict the training moment profile (3.10) and the raw
test profile (3.14) to \(\mathfrak P_{\rm orb}\), and use the known-noise
test radius
\[
\tau_T=\frac{\sigma}{\sqrt T}\sqrt{\chi^2_{q,1-\alpha_T}}.
\]
Report the physical images of all retained orbit points, with the same fixed
fallback on an empty profile.  The training rectangle and the one common
Gaussian test-ball event retain the truth, so the restricted correspondence
has the required uniform marginal coverage.  For any two retained
parameters, put \(h=\phi-\phi'\).  The fixed-law invariant chord on the
injective orbit and the
two-sided target secant give

\[
s^3|h|\lesssim N^{-1/2}
\]

from their common training profile, while the triangle inequality for the
two fitted means and the lower test secant give

\[
s\chi_s|h|\lesssim\sigma T^{-1/2}
\]

from the test profile.  Hence
\[
|h|\lesssim
\min\left\{h_0,(Ns^6)^{-1/2},
\left(T\chi_s^2s^2/\sigma^2\right)^{-1/2}\right\}.
\]
The minimum of the last two terms is within a fixed factor of the inverse
square root of their sum. Since the target secant is comparable to
\(s|h|\), the worst-case diameter of this honest restricted profile has the
upper rate in (4.19). For the lower bound, choose

\[
|h|\asymp
\min\left\{h_0,
(Ns^6+T\chi_s^2s^2/\sigma^2)^{-1/2}\right\},
\]

and use the coefficient path supplied by the upper secant. Training and test KL add by independence, while target separation is \(cs|h|\). Equation (5.33) proves the matching lower bound.

For the explicit two-atom contrast, write

\[
e=v_1+v_2,
\qquad f=v_1-v_2,
\]

and choose a rotation fixing \(e\) while sending \(f\) to \(\cos(h)f+\sin(h)g\), with \(g\perp\operatorname{span}\{e,f\}\) and \(\|g\|=\|f\|\). Exact coefficient profiling gives

\[
\inf_{\bar\beta',d'\in\mathbb R}
\|\mu_{\phi,\bar\beta,d}
-\mu_{\phi',\bar\beta',d'}\|
=\frac{\lambda_\star s|d|\|f\|}{2}
|\sin(\phi-\phi')|.
\tag{5.34}
\]

Choose \(h_0\) below the injectivity radius of the finite simplex quotient. The selected generator fixes \(v_1+v_2\) but moves at least one active ray; continuity and compactness then give the two-sided physical target secant (4.17). On \(\mathcal X_{\mathrm{orb}}(d_0,\epsilon_d)\), the constrained infimum is bounded below by (5.34), while the same \((\bar\beta,d)\) candidate bounds it above by \(C\lambda_\star s|d|\|f\||\phi-\phi'|\). Since \(|d|\asymp|d_0|\) on the sign-preserving shell, this verifies (4.18). For the separate \(d=0\) slice, the rotation fixes the entire test mean. Together these facts prove (4.20)--(4.21).
\end{proof}

\begin{center}\rule{0.5\linewidth}{0.5pt}\end{center}

%% file: appendices/B_technical_details.tex
\hypertarget{projector-chart-through-total-degree-three}{%
\section{Technical detail S1: projector chart through total degree three}\label{projector-chart-through-total-degree-three}}

For \(z\in U\), write

\[
d(z)=c(z)u+z,
\qquad c(z)=\sqrt{1-\|z\|^2},
\]

and

\[
H(z)=P\{d(z)\}-P_u.
\]

Since \(c(z)^2=1-\|z\|^2\) exactly and

\[
c(z)=1-\frac12\|z\|^2+O(\|z\|^4),
\]

the total-degree expansion is

\[
H(z)=A_1(z)+A_2(z)+A_3(z)+R_4(z),
\tag{S1.1}
\]

where

\[
A_1(z)=uz^\top+zu^\top,
\tag{S1.2}
\]

\[
A_2(z)=zz^\top-\|z\|^2P_u,
\tag{S1.3}
\]

\[
A_3(z)=-\frac12\|z\|^2(uz^\top+zu^\top).
\tag{S1.4}
\]

In this particular chart the first omitted monomial is actually degree five, but the weaker uniform bounds

\[
\|R_4(z)\|_F\le C\|z\|^4,
\qquad
\|DR_4(z)\|_{\mathrm{op}}\le C\|z\|^3
\tag{S1.5}
\]

are sufficient and remain stable under the multivariate density composition. All constants are uniform on the enlarged child cap.

\begin{center}\rule{0.5\linewidth}{0.5pt}\end{center}

\hypertarget{one-stratum-and-its-total-degree-density-polynomial}{%
\section{Technical detail S2: one stratum and its total-degree density polynomial}\label{one-stratum-and-its-total-degree-density-polynomial}}

Fix a support-size/anchor stratum \((k,A)\), set \(h=k-A\), and write

\[
C_0=\nu I_q+\lambda_k(hP_u+AP_a).
\]

Only \((k,A)\) with \(0\le h\le q\) and positive stratum weight are included. We use \(\binom qm=0\) for invalid \(m\), and in elementary symmetric-polynomial formulas use \(e_m=0\) outside \(m\in\{0,\ldots,q\}\), \(e_0=1\).

For a child subset \(C\), define

\[
X_m(C)=\lambda_k\sum_{j\in C}A_m(z_j),
\qquad m=1,2,3,
\]

and

\[
X_R(C)=\lambda_k\sum_{j\in C}R_4(z_j).
\]

Let \(\mathcal D_m=D^m\gamma_{C_0}\) denote the \(m\)-th covariance derivative at the base covariance. Composing the Gaussian covariance Taylor series with (S1.1), and retaining monomials by total child degree, gives

\[
\mathcal P_{0,C}=\gamma_{C_0},
\tag{S2.1}
\]

\[
\mathcal P_{1,C}=\mathcal D_1[X_1(C)],
\tag{S2.2}
\]

\[
\mathcal P_{2,C}
=\mathcal D_1[X_2(C)]
+\frac12\mathcal D_2[X_1(C),X_1(C)],
\tag{S2.3}
\]

and

\[
\begin{aligned}
\mathcal P_{3,C}={}&
\mathcal D_1[X_3(C)]
+\mathcal D_2[X_1(C),X_2(C)]\\
&+\frac16\mathcal D_3[X_1(C),X_1(C),X_1(C)].
\end{aligned}
\tag{S2.4}
\]

Every term omitted from

\[
\mathcal P_{\le3,C}=\sum_{m=0}^3\mathcal P_{m,C}
\]

has total child degree at least four. This statement includes terms produced by the nonlinear projector chart; it is not merely a third-order covariance Taylor approximation.

Let \(E_h\) denote the uniform average over all \(h\)-subsets. The aggregated stratum polynomial is

\[
\mathcal P_{\le3,k,A}=E_h\mathcal P_{\le3,C}.
\]

\begin{center}\rule{0.5\linewidth}{0.5pt}\end{center}

\hypertarget{complete-subset-aggregation-table}{%
\section{Technical detail S3: complete subset-aggregation table}\label{complete-subset-aggregation-table}}

Put

\[
S_m^{(a)}=\sum_{j=1}^qA_m(z_j)^{\odot a},
\qquad
S_m=\sum_{j=1}^qA_m(z_j).
\]

For two different homogeneous degrees, also put

\[
S_{12}=\sum_{j=1}^qA_1(z_j)\odot A_2(z_j).
\]

Let

\[
\pi_a=\frac{(h)_a}{(q)_a}.
\]

Uniform subset averaging gives the following exact table. Powers of \(\lambda_k\) multiply each row according to the number of \(X\) factors and are suppressed in the middle column.

\begin{longtable}[]{@{}
  >{\raggedleft\arraybackslash}p{(\columnwidth - 6\tabcolsep) * \real{0.3077}}
  >{\raggedright\arraybackslash}p{(\columnwidth - 6\tabcolsep) * \real{0.2308}}
  >{\raggedright\arraybackslash}p{(\columnwidth - 6\tabcolsep) * \real{0.2308}}
  >{\raggedright\arraybackslash}p{(\columnwidth - 6\tabcolsep) * \real{0.2308}}@{}}
\toprule\noalign{}
\begin{minipage}[b]{\linewidth}\raggedleft
Total degree
\end{minipage} & \begin{minipage}[b]{\linewidth}\raggedright
Density term
\end{minipage} & \begin{minipage}[b]{\linewidth}\raggedright
Exact subset aggregate
\end{minipage} & \begin{minipage}[b]{\linewidth}\raggedright
Child invariant dependency
\end{minipage} \\
\midrule\noalign{}
\endhead
\bottomrule\noalign{}
\endlastfoot
0 & \(\gamma_{C_0}\) & \(\gamma_{C_0}\) & \((P_a,\nu)\) \\
1 & \(\mathcal D_1[X_1]\) & \(\pi_1\mathcal D_1[S_1]\) & \(B_1\) \\
2 & \(\mathcal D_1[X_2]\) & \(\pi_1\mathcal D_1[S_2]\) & \(B_2\) \\
2 & \(\frac12\mathcal D_2[X_1,X_1]\) & \(\frac12\mathcal D_2[(\pi_1-\pi_2)S_1^{(2)}+\pi_2S_1^{\odot2}]\) & \(B_2,B_1^{\otimes2}\) \\
3 & \(\mathcal D_1[X_3]\) & \(\pi_1\mathcal D_1[S_3]\) & \(B_3\) \\
3 & \(\mathcal D_2[X_1,X_2]\) & \(\mathcal D_2[(\pi_1-\pi_2)S_{12}+\pi_2S_1\odot S_2]\) & \(B_3,\operatorname{Sym}(B_1\otimes B_2)\) \\
3 & \(\frac16\mathcal D_3[X_1,X_1,X_1]\) & \(\frac16\mathcal D_3[(\pi_1-3\pi_2+2\pi_3)S_1^{(3)}+3(\pi_2-\pi_3)S_1\odot S_1^{(2)}+\pi_3S_1^{\odot3}]\) & \(B_3,\operatorname{Sym}(B_1\otimes B_2),B_1^{\otimes3}\) \\
\end{longtable}

The mixed second-derivative row follows by separating equal and unequal indices:

\[
E_h\left\{
\sum_{i\in C}A_1(z_i)\odot
\sum_{j\in C}A_2(z_j)
\right\}
=(\pi_1-\pi_2)S_{12}+\pi_2S_1\odot S_2.
\tag{S3.1}
\]

The last row is the equality-pattern identity for ordered triples. Thus the table contains every degree-at-most-three density term and no componentwise triangle inequality has been used.

\hypertarget{reduction-of-the-homogeneous-projector-sums}{%
\subsection{Reduction of the homogeneous projector sums}\label{reduction-of-the-homogeneous-projector-sums}}

Let \(\mathscr L(z)=uz^\top+zu^\top\). Then

\[
S_1=\mathscr L(B_1).
\tag{S3.2}
\]

Moreover,

\[
S_2=\sum_jz_jz_j^\top-\operatorname{tr}(B_2)P_u,
\tag{S3.3}
\]

so \(S_2\) is a fixed linear image of \(B_2\). Since \(A_1\) is linear,

\[
S_1^{(2)}=\sum_j\mathscr L(z_j)^{\odot2}
\tag{S3.4}
\]

is also a fixed linear image of \(B_2\).

Let

\[
c_3=B_3[\cdot,I_U]=\sum_j\|z_j\|^2z_j.
\]

Then

\[
S_3=-\frac12\mathscr L(c_3),
\tag{S3.5}
\]

and both

\[
S_{12}=\sum_jA_1(z_j)\odot A_2(z_j),
\qquad
S_1^{(3)}=\sum_jA_1(z_j)^{\odot3}
\tag{S3.6}
\]

are fixed linear images of \(B_3\). Products of the aggregate sums give the displayed symmetrized products of \(B_1,B_2,B_3\). Therefore the table reduces exactly to

\[
B_1, B_2, B_3, B_1^{\otimes2},
\operatorname{Sym}(B_1\otimes B_2),B_1^{\otimes3}.
\tag{S3.7}
\]

No degree-one or degree-two residual-orientation coordinate exists in this list: \(B_1=qb\), while \(B_2\) depends on \(L\) only through \(LL^\top\). Residual orientation first enters through the \(B_3\) term \(L^{\otimes3}T_q\).

\hypertarget{conversion-to-quotient-coordinates}{%
\subsection{Conversion to quotient coordinates}\label{conversion-to-quotient-coordinates}}

The exact affine-simplex identities are

\[
B_1=qb,
\tag{S3.8}
\]

\[
B_2=qb^{\otimes2}+\frac q{q-1}G_2,
\tag{S3.9}
\]

\[
B_3
=qb^{\otimes3}
+\frac q{q-1}\operatorname{Sym}_3(b\otimes G_2)
+G_3.
\tag{S3.10}
\]

On the shell,

\[
\|\Delta B_1\|\le C\|\Delta b\|,
\tag{S3.11}
\]

\[
\|\Delta B_2\|
\le C\{s\|\Delta b\|+\|\Delta G_2\|\},
\tag{S3.12}
\]

\[
\|\Delta B_3\|
\le C\{s^2\|\Delta b\|
+s\|\Delta G_2\|+\|\Delta G_3\|\}.
\tag{S3.13}
\]

Because \(B_m=O(s^m)\), every product row in the aggregation table obeys the same \(O(\delta_F)\) endpoint chord. Smooth changes in \((P_a,\nu)\) are handled by differentiating the base covariance derivative operators and using the common Gaussian envelope. This proves the low-order endpoint bound in manuscript equation (5.25), term by term.

\begin{center}\rule{0.5\linewidth}{0.5pt}\end{center}

\hypertarget{full-composed-remainder}{%
\section{Technical detail S4: full composed remainder}\label{full-composed-remainder}}

Let

\[
\mathcal G_{k,A}(z_1,\ldots,z_q;P_a,\nu)
=E_h\gamma_{C_0+X_1+X_2+X_3+X_R}
\]

be the already aggregated full stratum density, and let

\[
R_{\ge4,k,A}
=\mathcal G_{k,A}-\mathcal P_{\le3,k,A}.
\]

Before aggregating within this stratum, let \(R_{\ge4,C}=\mathcal G_C-\mathcal P_{\le3,C}\), and let \(q_{C,\theta'}\) be the corresponding endpoint subset density. For every collection of subsetwise remainder increments \(r_C\), weighted Cauchy--Schwarz gives the pointwise inequality

\[
\frac{\{E_h r_C\}^2}{E_hq_{C,\theta'}}
\le E_h\frac{r_C^2}{q_{C,\theta'}}.
\tag{S4.0}
\]

Thus the common Gaussian envelope may first be applied subset by subset and then averaged; no inverse subset probability or minimum component weight is introduced inside a stratum.

Bundle all labelled child coordinates into \(z=(z_1,\ldots,z_q)\). Apply the integral Taylor formula to the derivative map \(D_z\mathcal G_{k,A}\). The exact Fréchet identity is

\[
D_zR_{\ge4,k,A}(z)[\dot z]
=\frac12\int_0^1(1-t)^2
D_z^4\mathcal G_{k,A}(tz)
[z,z,z,\dot z],dt.
\tag{S4.1}
\]

For example, (S4.1) gives \(4z^3\dot z\) when \(\mathcal G(z)=z^4\), fixing the coefficient that would be lost by differentiating a schematic multiindex remainder. Every integrand is a finite combination of covariance derivatives of order at most four, projector derivatives, and three powers of child coordinates. Lemma 5.1 of the manuscript therefore gives

\[
\left\|
\frac{D_zR_{\ge4,k,A}[\dot z]}
{\sqrt{q_{k,A,\theta'}}}
\right\|_2
\le Cs^3\max_j\|\dot z_j\|,
\tag{S4.2}
\]

where \(q_{k,A,\theta'}\) is the corresponding endpoint stratum density.

For \(\eta=(P_a,\nu)\), apply the fourth-order remainder formula directly to \(D_\eta\mathcal G\):

\[
D_\eta R_{\ge4,k,A}(z)[\dot\eta]
=\frac1{6}\int_0^1(1-t)^3
D_z^4D_\eta\mathcal G_{k,A}(tz)
[z,z,z,z,\dot\eta],dt.
\tag{S4.2a}
\]

This raises the covariance derivative order by at most one. The same envelope through order five gives

\[
\left\|
\frac{D_{(P_a,\nu)}R_{\ge4,k,A}[\dot P_a,\dot\nu]}
{\sqrt{q_{k,A,\theta'}}}
\right\|_2
\le Cs^4(\|\dot P_a\|_F+|\dot\nu|).
\tag{S4.3}
\]

These equations account simultaneously for the Gaussian Taylor tail and the nonlinear projector terms omitted from (S1.1).

For weights \(w_{k,A}=\pi_k\rho_{k,A}\),

\[
\frac{\{\sum_{k,A}w_{k,A}r_{k,A}\}^2}
{\sum_{k,A}w_{k,A}q_{k,A}}
\le
\sum_{k,A}w_{k,A}\frac{r_{k,A}^2}{q_{k,A}}.
\tag{S4.4}
\]

Thus stratum aggregation introduces no inverse minimum-weight factor. Along the aligned endpoint path, the physical inverse gives

\[
s^3\max_j\|\Delta z_j\|
\le C\{s^3\|\Delta b\|+s^2\|\Delta G_2\|
+s\|\Delta G_3\|\}
\le Cs\delta_F.
\tag{S4.5}
\]

Integrating (S4.2)--(S4.3), applying (S4.4), and adding strata proves

\[
\left\|
\frac{R_{\ge4}(\theta)-R_{\ge4}(\theta')}
{\sqrt{f_{\theta'}}}
\right\|_2
\le Cs\delta_F(\theta,\theta').
\tag{S4.6}
\]

The reverse-endpoint inequality is identical. Equations (S3.7)--(S3.13) and (S4.6) provide the complete local-noise chi-square upper chord. The far-noise log-sum patch and moment-to-Hellinger lower lemma are as stated in the integrated manuscript.

\begin{center}\rule{0.5\linewidth}{0.5pt}\end{center}

\hypertarget{simplex-cubic-maxima-and-the-global-orbit-inverse}{%
\section{Technical detail S5: simplex cubic maxima and the global orbit inverse}\label{simplex-cubic-maxima-and-the-global-orbit-inverse}}

For unit \(x\in U\), set \(y_j=v_j^\top x\). Then

\[
\sum_jy_j=0,
\qquad
\sum_jy_j^2=\frac q{q-1},
\qquad
T_q[x,x,x]=\sum_jy_j^3.
\tag{S5.1}
\]

These constraints describe the entire image, not merely a necessary condition. The tight-frame map

\[
\mathcal A:U\longrightarrow
\mathcal H_0:=\{y\in\mathbb R^q:\mathbf1^\top y=0\},
\qquad \mathcal A x=(v_1^\top x,\ldots,v_q^\top x),
\]

is a linear isomorphism. Its inverse is

\[
x=\frac{q-1}{q}\sum_{j=1}^qy_jv_j.
\]

Consequently every \(y\in\mathcal H_0\) with \(\sum_jy_j^2=q/(q-1)\) comes from a unit \(x\in U\). Optimizing under (S5.1) therefore loses no sphere points.

At a constrained stationary point of the last expression, Lagrange multipliers give

\[
3y_j^2=\lambda+2\mu y_j,
\]

so the coordinates take at most two distinct values. If \(k\) coordinates equal the positive value and \(q-k\) the negative value, the constraints give

\[
a^2=\frac{q-k}{k(q-1)},
\qquad
b=-\frac{k}{q-k}a,
\]

and

\[
\sum_jy_j^3
=\frac{q(q-2k)}{(q-1)^{3/2}\sqrt{k(q-k)}}.
\tag{S5.2}
\]

For real \(t\in(0,q/2)\),

\[
\frac{d}{dt}\frac{q-2t}{\sqrt{t(q-t)}}
=-\frac{q^2}{2\{t(q-t)\}^{3/2}}<0.
\]

Hence among positive stationary values, (S5.2) is maximized at \(k=1\). The corresponding coordinate pattern is

\[
(1,-1/(q-1),\ldots,-1/(q-1)),
\]

up to permutation, which is exactly the inner-product pattern of a simplex vertex. Hence the positive spherical maxima of \(T_q[x,x,x]\) are exactly \(v_1,\ldots,v_q\). It follows that

\[
\operatorname{Stab}_{O(U)}(T_q)=\mathfrak S_q.
\tag{S5.3}
\]

The differential identity

\[
\|\Omega\cdot T_q\|_F^2
=3\left(\frac q{q-1}\right)^3\|\Omega\|_F^2
\tag{S5.4}
\]

gives a lower singular value at every orbit point by orthogonal invariance. Let

\[
d_{\mathrm{orb}}([R],[R'])
=\min_{\pi\in\mathfrak S_q}\|R-R'\Pi_\pi\|_F.
\]

The constant-rank/local-embedding theorem gives

\[
d_{\mathrm{orb}}([R],[R'])
\le C\|R^{\otimes3}T_q-R'^{\otimes3}T_q\|_F
\tag{S5.5}
\]

on a uniform neighborhood of the quotient diagonal. Outside that neighborhood, the compact set of quotient pairs is separated from the zero set of the tensor difference by (S5.3), so (S5.5) holds globally after enlarging \(C\).

To transfer the orbit inverse from the normalized tensor to \(L\), write

\[
W(L)=P_L^{-\otimes3}G_3,
\qquad P_L=(LL^\top)^{1/2}.
\]

On the shell, square-root functional calculus and the product rule give

\[
\|W(L)-W(L')\|_F
\le
C\left[
s^{-3}\|G_3-G_3'\|_F
+s^{-2}\|G_2-G_2'\|_F
\right].
\tag{S5.6}
\]

Combining (S5.5)--(S5.6) with the corresponding polar-shape bound proves the global physical inverse used throughout the manuscript.

\begin{center}\rule{0.5\linewidth}{0.5pt}\end{center}

\hypertarget{two-atom-physical-target-secant}{%
\section{Technical detail S6: two-atom physical target secant}\label{two-atom-physical-target-secant}}

Fix \(S=\{1,2\}\) on the balanced core and write

\[
e=v_1+v_2,
\qquad f=v_1-v_2.
\]

Choose \(g\perp\operatorname{span}\{e,f\}\), \(\|g\|=\|f\|\), and a skew generator \(\Omega\) such that

\[
\Omega e=0,
\qquad
\exp(h\Omega)f=\cos(h)f+\sin(h)g.
\]

Then

\[
\Omega v_1=\frac12g,
\qquad
\Omega v_2=-\frac12g.
\tag{S6.1}
\]

For the physical child rays

\[
d_j(h)=\sqrt{1-\lambda_\star^2s^2}\,u
+\lambda_\star s\exp(h\Omega)v_j,
\]

the projective derivative at every short-orbit point has norm comparable to \(s\) for both active atoms. Taylor expansion on a compact short chart gives

\[
c_1s|h-h'|
\le d_{\mathrm{pr}}([d_j(h)],[d_j(h')])
\le C_1s|h-h'|,
\qquad j=1,2.
\tag{S6.2}
\]

At \(h=h'\), the two distinct active rays are separated by \(c_0s\). By continuity, choose \(h_0\) so small that, for every \(|h|,|h'|\le h_0\), each cross-match

\[
d_{\mathrm{pr}}([d_1(h)],[d_2(h')])
\]

is at least \(c_0s/2\), while the same-label displacement in (S6.2) is less than \(c_0s/4\). Hence the Hausdorff-optimal set matching is the same-label matching. Therefore

\[
c_Vs|h-h'|
\le
d_H^{\mathrm{pr}}
\left(
\{[d_1(h)],[d_2(h)]\},
\{[d_1(h')],[d_2(h')]\}
\right)
\le
C_Vs|h-h'|.
\tag{S6.3}
\]

This is the target-space secant required in \cref{thm:task-symmetry}. Together with the exact unconstrained profiled test-mean formula

\[
\inf_{\bar\beta',d'\in\mathbb R}
\|\mu_{h,\bar\beta,d}-\mu_{h',\bar\beta',d'}\|
=\frac{\lambda_\star s|d|\|f\|}{2}|\sin(h-h')|,
\tag{S6.4}
\]

it verifies the test secant on a formally declared coefficient shell. In fact, for \(d_0\ne0\), take

\[
\mathcal X_{\mathrm{orb}}(d_0,\epsilon_d)
=\left\{(\bar\beta+d/2,\bar\beta-d/2):
\bar\beta\in[\underline\beta_d,\overline\beta_d],\quad
|d-d_0|\le\epsilon_d\right\}
\Subset(\beta_-,\beta_+)^2,
\]

where \(\epsilon_d\le|d_0|/2\). The constrained infimum is at least the unconstrained value in (S6.4). For the reverse inequality, the same coefficient pair \((\bar\beta,d)\) is always admissible, and its mean difference is at most

\[
C\lambda_\star s|d|\|f\||h-h'|.
\]

Since \(|d|\asymp|d_0|\) on this sign-preserving shell, the two-sided secant holds with \(\chi_s\asymp\lambda_\star\|f\||d_0|/2\). On the separate exact slice \(d=0\), both the axial part and the \(e=v_1+v_2\) part are fixed by the orbit, so the test mean is exactly invariant.

\begin{center}\rule{0.5\linewidth}{0.5pt}\end{center}

\section{Technical detail S7: consequences of the explicit expansions}
The calculations in S1--S6 establish four facts used below: the composed
Gaussian remainder starts at total child degree four with a differentiated
uniform envelope; every degree-at-most-three aggregate reduces to the stated
quotient invariants; the simplex cubic has no stabilizer beyond the finite
child-permutation group; and the two-atom orbit has a genuine projective
target secant after optimal matching.  These statements are deterministic
and hold uniformly on the fixed enlarged shell.

\hypertarget{collapsed-power-sum-operator-and-the-uniform-moment-secant-p11}{%
\section{Technical detail S8: collapsed power-sum operator and the uniform moment secant}\label{collapsed-power-sum-operator-and-the-uniform-moment-secant-p11}}

This section closes the algebraic gap between the quotient coordinates and the observable second/fourth moment pair. Use the anchor chart

\[
a(w)=\frac{a_0+w}{\|a_0+w\|},
\qquad w\in a_0^\perp,
\tag{S8.1}
\]

and put

\[
B_k=\sum_{j=1}^qz_j^{\otimes k},
\qquad
\Xi=(B_1,B_2,B_3,w,\nu).
\tag{S8.2}
\]

For \(X=tu+x\), \(x\in U\), define

\[
H(\dot w)=a_0\dot w^\top+\dot w a_0^\top,
\qquad
h_0(t,x)=P_{a_0}[X,X]=\frac12(t+e_1^\top x)^2.
\tag{S8.3}
\]

For an increment \(\dot B=(\dot B_1,\dot B_2,\dot B_3)\), set

\[
\begin{aligned}
\mathsf S_{\dot B}(t,x)
={}&2t\dot B_1[x]+\dot B_2[x,x]
-t^2\operatorname{tr}\dot B_2-t\dot B_3[x,I_U],\\
\mathsf Q_{\dot B}(t,x)
={}&4t^3\dot B_1[x]+6t^2\dot B_2[x,x]
-2t^4\operatorname{tr}\dot B_2\\
&+4t\dot B_3[x,x,x]-6t^3\dot B_3[x,I_U].
\end{aligned}
\tag{S8.4}
\]

These are the degree-at-most-three increments of the child contributions to \(\Sigma_2[X,X]\) and \(\Sigma_4[X,X,X,X]\). Define the collapsed linear operator

\[
\mathscr A_q(\dot B,\dot w,\dot\nu)
=\left(
\dot\nu,
\mathsf S_{\dot B}+H(\dot w)[X,X],
\mathsf Q_{\dot B}+2h_0H(\dot w)[X,X]
\right).
\tag{S8.5}
\]

Its domain is

\[
U\oplus\operatorname{Sym}^2(U)\oplus\operatorname{Sym}^3(U)
\oplus a_0^\perp\oplus\mathbb R,
\tag{S8.6}
\]

and its codomain is \(\mathbb R\oplus\operatorname{Sym}^2(\mathbb R^q) \oplus\operatorname{Sym}^4(\mathbb R^q)\).

\hypertarget{lemma-s8.1-zero-kernel}{%
\subsection{Lemma S8.1 --- zero kernel}\label{lemma-s8.1-zero-kernel}}

For every fixed \(q\ge3\), \(\mathscr A_q\) is injective.

\textbf{Proof.} Suppose \(\mathscr A_q(\dot B,\dot w,\dot\nu)=0\). Then \(\dot\nu=0\) and

\[
H(\dot w)[X,X]=-\mathsf S_{\dot B}(t,x).
\tag{S8.7}
\]

The fourth-order output becomes the polynomial identity

\[
\mathsf Q_{\dot B}(t,x)
=(t+e_1^\top x)^2\mathsf S_{\dot B}(t,x).
\tag{S8.8}
\]

Write

\[
r_2=\operatorname{tr}\dot B_2,
\quad c_3(x)=\dot B_3[x,I_U],
\quad \ell(x)=2\dot B_1[x]-c_3(x),
\quad q_2(x)=\dot B_2[x,x].
\tag{S8.9}
\]

Then \(\mathsf S_{\dot B}=-r_2t^2+t\ell(x)+q_2(x)\). Comparing powers of \(t\) in (S8.8) gives, in order:

\begin{enumerate}
\def\labelenumi{\arabic{enumi}.}
\tightlist
\item
  the \(t^4\) coefficient gives \(-2r_2=-r_2\), hence \(r_2=0\);
\item
  the \(t^0\) coefficient gives \((e_1^\top x)^2q_2(x)=0\) as a polynomial identity, hence \(q_2\equiv0\) and \(\dot B_2=0\);
\item
  the \(t^2\) coefficient gives \(2(e_1^\top x)\ell(x)=0\), hence \(\ell=0\);
\item
  the \(t^3\) coefficient gives \(4\dot B_1-6c_3=0\), which together with \(2\dot B_1-c_3=0\) yields \(\dot B_1=c_3=0\);
\item
  the \(t^1\) coefficient gives \(4\dot B_3[x,x,x]=0\), and polarization yields \(\dot B_3=0\).
\end{enumerate}

Equation (S8.7) now gives \(H(\dot w)=0\). The map \(\dot w\mapsto a_0\dot w^\top+\dot w a_0^\top\) is injective on \(a_0^\perp\), so \(\dot w=0\). Every input coordinate vanishes. \(\square\)

Finite dimensionality therefore supplies a constant \(\sigma_q>0\) such that

\[
\|\mathscr A_qh\|\ge\sigma_q\|h\|.
\tag{S8.10}
\]

\hypertarget{lemma-s8.2-all-pair-nonlinear-secant}{%
\subsection{Lemma S8.2 --- all-pair nonlinear secant}\label{lemma-s8.2-all-pair-nonlinear-secant}}

Let

\[
\mathcal R(\theta)=(\nu,\Sigma_2,\Sigma_4).
\tag{S8.11}
\]

For every same-shell pair, first use the algebraic quotient inverse from S5 to align the second labelled endpoint. To display the required squared-projector analogue, for \(X=tu+x\) and \(r=x^\top z\),

\[
\begin{aligned}
P(z)[X,X]
={}&t^2+2tr+(r^2-t^2\|z\|^2)-t\|z\|^2r+R_{2,z}(X),\\
\|D_zR_{2,z}\|_{\rm coeff}
\le{}&C\|z\|^4,
\end{aligned}
\tag{S8.11a}
\]

and

\[
\begin{aligned}
P(z)[X,X]^2
={}&t^4+4t^3r+6t^2r^2-2t^4\|z\|^2\\
&+4tr^3-6t^3\|z\|^2r+R_{4,z}(X),\\
\|D_zR_{4,z}\|_{\rm coeff}
\le{}&C\|z\|^3.
\end{aligned}
\tag{S8.11b}
\]

These identities follow by inserting \(d(z)^\top X=t\sqrt{1-\|z\|^2}+r\) and collecting total degree in \(z\). After summing the children and applying the S5 label alignment,

\[
\|\Delta R_{\rm child}\|
\le C_qs^3\max_j\|z_j-z'_{\pi(j)}\|
\le C_qs\,\delta_F(\theta,\theta'),
\tag{S8.11c}
\]

while the anchor-chart remainder is at most \(Cs\|w-w'\|\). The projector expansions in S1 and the two displayed squared-projector identities therefore give the exact decomposition

\[
\mathcal R(\theta)-\mathcal R(\theta')
=\mathscr A_q\{\Xi(\theta)-\Xi(\theta')\}
+\operatorname{Rem}_q(\theta,\theta'),
\tag{S8.12}
\]

where

\[
\|\operatorname{Rem}_q(\theta,\theta')\|
\le C_qs\|\Xi(\theta)-\Xi(\theta')\|.
\tag{S8.13}
\]

Indeed, the child projector remainder has derivative \(O(s^3)\); after the S5 alignment, (S4.5) bounds its secant by \(Cs\delta_F\). The anchor chart has a quadratic remainder whose derivative is \(O(s)\), and the noise coordinate is exact. The triangular identities (S3.8)--(S3.10), the bi-Lipschitz anchor chart, and S5 give

\[
\|\Xi(\theta)-\Xi(\theta')\|
\asymp\delta_F(\theta,\theta').
\tag{S8.14}
\]

The quotient inverse is used here only to choose an algebraic label alignment and bound the projector remainder; no statistical moment lower bound is used. Thus there is no circularity. Choose

\[
s_0\le\min\{1,s_{\rm chart},\sigma_q/(2C_q)\}.
\tag{S8.15}
\]

Consequently,

\[
\|\Delta\mathcal R\|
\ge(\sigma_q-C_qs)\|\Delta\Xi\|
\ge\frac{\sigma_q}{2}\|\Delta\Xi\|.
\tag{S8.15a}
\]

Equations (S8.10)--(S8.14) yield the uniform same-shell chord

\[
c\delta_F(\theta,\theta')
\le
\|\Delta(\nu,\Sigma_2,\Sigma_4)\|
\le C\delta_F(\theta,\theta').
\tag{S8.16}
\]

This includes nonlocal quotient rotations and closed-shell endpoints.

\begin{center}\rule{0.5\linewidth}{0.5pt}\end{center}

\hypertarget{ordered-all-pair-statistical-chords-p9-and-p10}{%
\section{Technical detail S9: ordered all-pair statistical chords}\label{ordered-all-pair-statistical-chords-p9-and-p10}}

\hypertarget{fixed-exchangeable-law}{%
\subsection{Fixed exchangeable law}\label{fixed-exchangeable-law}}

Index a latent component by its support \(I\subseteq[n]\). For a fixed exchangeable law, omit every zero-weight component and write

\[
w_I=\frac{\pi_{|I|}}{\binom n{|I|}},
\qquad
C_I(\theta)=\nu I_q+\lambda_{|I|}\sum_{j\in I}P_j,
\qquad
f_\theta=\sum_{I:w_I>0}w_I\gamma_{C_I(\theta)}.
\tag{S9.1}
\]

The code powers are bounded and \(\nu\ge\nu_->0\), so all component spectra belong to one fixed compact subset of \((0,\infty)\). In (S9.2)--(S9.4), write \(\theta_0=\theta\) and \(\theta_1=\theta'\).

Choose \(r_\nu>0\) and then \(s_0>0\) so that, whenever \(|\nu-\nu'|\le r_\nu\), every covariance on the aligned endpoint and radial paths obeys Lemma 5.1 relative to either endpoint covariance. The complete low-order table in S3 and the full composed remainder in S4 give

\[
\left\|
\frac{\mathcal P_{\le3}(\theta)-
\mathcal P_{\le3}(\theta')}{\sqrt{f_{\theta_e}}}
\right\|_2
\le C\delta_F(\theta,\theta'),
\qquad e\in\{0,1\},
\tag{S9.2}
\]

and

\[
\left\|
\frac{R_{\ge4}(\theta)-R_{\ge4}(\theta')}
{\sqrt{f_{\theta_e}}}
\right\|_2
\le Cs\delta_F(\theta,\theta'),
\qquad e\in\{0,1\}.
\tag{S9.3}
\]

Both bounds use the same endpoint in every denominator. The reverse direction is obtained by rerunning the identical argument with the endpoints exchanged, not by an appeal to symmetry of KL. Since \((a+b)^2\le2a^2+2b^2\),

\[
\max\left\{
\chi^2(f_\theta\|f_{\theta'}),
\chi^2(f_{\theta'}\|f_\theta)
\right\}
\le C\delta_F^2(\theta,\theta')
\quad\text{if }|\nu-\nu'|\le r_\nu.
\tag{S9.4}
\]

No inverse minimum mixture weight appears: the pointwise weighted inequality (S4.4) is applied only after subset aggregation has exposed the cancellations.

If \(|\nu-\nu'|>r_\nu\), align child labels and pair the corresponding supports. If \(a\) denotes the anchor label, the aligned action is

\[
\pi(I)=\{\pi(j):j\in I\cap[q]\}\cup(I\cap\{a\});
\]

thus the anchor is fixed. The log-sum inequality gives

\[
\operatorname{KL}(f_\theta\|f_{\theta'})
\le
\sum_{I:w_I>0}w_I
\operatorname{KL}\{N(0,C_I(\theta))
\|N(0,C_{\pi(I)}(\theta'))\}.
\tag{S9.5}
\]

The right side is at most a fixed \(K_{q,\mathcal L,\mathrm{sh}}\), which may depend on the fixed law and compact shell constants, on the compact covariance-pair class. Since \(\delta_F^2\ge|\nu-\nu'|^2>r_\nu^2\),

\[
\operatorname{KL}(f_\theta\|f_{\theta'})
\le K_{q,\mathcal L,\mathrm{sh}}r_\nu^{-2}
\delta_F^2(\theta,\theta').
\tag{S9.6}
\]

Exchanging endpoints gives the reverse directed statement.

For the lower chord, let

\[
\Psi(Y)=\{\operatorname{vech}(YY^\top),
\operatorname{symvec}(Y^{\otimes4})\}.
\tag{S9.7}
\]

Uniform eighth moments and Cauchy--Schwarz imply

\[
\|E_\theta\Psi-E_{\theta'}\Psi\|
\le C H(f_\theta,f_{\theta'}).
\tag{S9.8}
\]

The raw-to-cumulant map is Lipschitz on the compact moment image. Moreover, for the fixed law,

\[
\nu=\frac{\operatorname{tr}M_2-a_1n}{q},
\qquad
\Sigma_2=\frac{M_2-\nu I_q}{a_1},
\tag{S9.9}
\]

and

\[
\Sigma_4
=\frac{K_4/3-(a_{2o}-a_1^2)\Sigma_2^{\odot2}}
{\Delta_{\mathcal L}}.
\tag{S9.10}
\]

The assumptions \(a_1\ge a_->0\) and \(\Delta_{\mathcal L}\ge\Delta_->0\), followed by S8.16, therefore give

\[
\delta_F
\le C\|\Delta(\nu,\Sigma_2,\Sigma_4)\|
\le C\|\Delta(M_2,K_4)\|
\le C\|\Delta E\Psi\|
\le C H(f_\theta,f_{\theta'}).
\tag{S9.11}
\]

Combining (S9.4), (S9.6), (S9.11), \(\operatorname{KL}\le\chi^2\), and \(H^2\le\operatorname{KL}\) proves the full ordered fixed-law chord (4.1).

\hypertarget{bernoulli-branch}{%
\subsection{Bernoulli branch}\label{bernoulli-branch}}

For Bernoulli coding, the trace inverse (3.4)--(3.5) and \(1-p-p'\ge1-2p_+>0\) give

\[
|p-p'|
=\frac{|p(1-p)-p'(1-p')|}{|1-p-p'|}
\le\frac{|p(1-p)-p'(1-p')|}{1-2p_+}.
\tag{S9.11a}
\]

By (3.4), the numerator is a fixed contraction of \(\Delta K_4\). The remaining inverses in (3.5) are uniformly Lipschitz on the compact branch, so the raw-moment/Hellinger chain (S9.7)--(S9.11), now with \(|p-p'|\) included, gives the lower chord in \(\delta_{F,p}\). For the local upper chord define

\[
f_0=f_{p,\nu,D},
\qquad f_1=f_{p,\nu',D'},
\qquad f_2=f_{p',\nu',D'}.
\tag{S9.12}
\]

At the same dictionary/noise parameter, compactness of the Bernoulli branch implies \(cf_1\le f_2\le Cf_1\), because every ratio \(w_I(p)/w_I(p')\) is uniformly bounded. Consequently,

\[
\begin{aligned}
\chi^2(f_0\|f_2)
&\le2\int\frac{(f_0-f_1)^2}{f_2}
+2\int\frac{(f_1-f_2)^2}{f_2}\\
&\le C\chi^2(f_{p,\nu,D}\|f_{p,\nu',D'})
+C\sum_I\frac{\{w_I(p)-w_I(p')\}^2}{w_I(p')}\\
&\le C\{\delta_F^2+|p-p'|^2\}.
\end{aligned}
\tag{S9.13}
\]

The first term uses S9.4 for the fixed law induced by \(p\), uniformly over the compact branch; the second is an ordinary finite-dimensional smooth weight chord. Repeat with endpoints exchanged. In the far-noise region, first relabel the children of \(D'\) by the quotient alignment used in S9.1. Generalized log-sum then gives

\[
\begin{aligned}
\operatorname{KL}(f_{p,\theta}\|f_{p',\theta'})
\le{}&\operatorname{KL}\{w(p)\|w(p')\}\\
&+\sum_Iw_I(p)
\operatorname{KL}\{N(0,C_I(\theta))\|N(0,C_I(\theta'))\},
\end{aligned}
\tag{S9.14}
\]

which is bounded on the compact class and is absorbed by the fixed noise separation. Repeating the same argument with endpoints exchanged gives the reverse directed bound. This proves (4.2) without claiming pointwise comparability across different dictionaries.

\begin{center}\rule{0.5\linewidth}{0.5pt}\end{center}

\hypertarget{point-adapted-nuisance-coercivity-p13}{%
\section{Technical detail S10: point-adapted nuisance coercivity}\label{point-adapted-nuisance-coercivity-p13}}

Fix normalized inner and enlarged open tubes \(\Theta_{\rm in}(s)\Subset\Theta_{\rm out}(s)\). The eigenvalues of every normalized shape below lie in a fixed interval \([g_-,g_+]\Subset(0,\infty)\), and the \(p,\nu\), centroid, and anchor coordinates of the inner tube have a fixed normalized distance from the outer boundary. This formulation covers interior base points and, on constrained faces, only tangent directions admitting two-sided paths in the enlarged tube; it never differentiates through \(s=0\).

At a base point write

\[
L=sRG,
\qquad R\in O(U),
\qquad G=G^\top>0,
\tag{S10.1}
\]

and use the retraction

\[
L_t=R(sG+tS)e^{t\Omega},
\qquad
\dot L=R(S+sG\Omega),
\qquad S^\top=S,\qquad\Omega^\top=-\Omega.
\tag{S10.2}
\]

For the anchor use the feasible chart

\[
a(w)=\frac{a_0+w}{\|a_0+w\|},
\qquad P_a(w)=a(w)a(w)^\top,
\qquad w\in a_0^\perp,
\tag{S10.2a}
\]

with \(w_t=w+t\dot w\) and \(\dot P_a=DP_a(w)[\dot w]\). Together with straight local paths only in the Euclidean coordinates \((p,\nu,b)\), this defines

\[
h=(\dot p,\dot\nu,\dot b,\dot P_a,S,\Omega),
\tag{S10.3}
\]

where \(\dot p\) is omitted for a fixed known code law. For sufficiently small two-sided \(t\), the path remains in \(\Theta_{\rm out}(s)\).

\hypertarget{lemma-s10.1-invariant-differential-equivalence}{%
\subsection{Lemma S10.1 --- invariant differential equivalence}\label{lemma-s10.1-invariant-differential-equivalence}}

Let

\[
F_p(\theta)=(p,b,P_a,G_2,G_3,\nu),
\tag{S10.4}
\]

with the first coordinate omitted for a fixed law. Uniformly on the inner tube,

\[
\|DF_p(\theta)[h]\|^2
\asymp
|\dot p|^2+|\dot\nu|^2+\|\dot b\|^2+\|\dot P_a\|_F^2
+s^2\|S\|_F^2+s^6\|\Omega\|_F^2.
\tag{S10.5}
\]

\textbf{Proof.} Direct differentiation of \(G_2=LL^\top\) gives

\[
DG_2[h]=sR(SG+GS)R^\top.
\tag{S10.6}
\]

The Lyapunov map \(S\mapsto SG+GS\) has singular values bounded above and away from zero on the normalized spectral cone, so

\[
\|DG_2[h]\|_F\asymp s\|S\|_F.
\tag{S10.7}
\]

For the cubic invariant,

\[
DG_3[h]
=\mathcal S_{R,G,s}[S]
+(sRG)^{\otimes3}(\Omega\cdot T_q),
\qquad
\|\mathcal S_{R,G,s}[S]\|_F\le Cs^2\|S\|_F.
\tag{S10.8}
\]

The first term is the sum of the three slotwise derivatives in the \(RS\) direction. Invertibility of \(G^{\otimes3}\), orthogonality of \(R\), and (S5.4) imply

\[
s^3\|\Omega\|_F
\le C\{\|DG_3[h]\|_F+s^2\|S\|_F\}.
\tag{S10.9}
\]

The reverse bound follows directly from (S10.8). Adding the coordinates that appear linearly in \(F_p\) proves (S10.5). \(\square\)

The order of the factors in (S10.2) is essential. A generic expression with \(\Omega G\) in place of \(G\Omega\) is not the point-adapted right-body chart and need not separate shape and orientation uniformly.

\hypertarget{lemma-s10.2-observed-mixed-fisher-equivalence}{%
\subsection{Lemma S10.2 --- observed mixed Fisher equivalence}\label{lemma-s10.2-observed-mixed-fisher-equivalence}}

Let

\[
\ell_{\theta,h}(y)=\frac{Df_\theta[h](y)}{f_\theta(y)},
\qquad
I_\theta(h)=E_\theta\ell_{\theta,h}^2.
\tag{S10.10}
\]

Then

\[
c\|h\|_{\mathrm{mix},\theta}^2
\le I_\theta(h)
\le C\|h\|_{\mathrm{mix},\theta}^2,
\tag{S10.11}
\]

where the mixed norm is the right side of (S10.5).

\textbf{Proof.} For the upper bound, differentiate the finite aggregate polynomial from S3 and the remainder from S4 in the path (S10.2). The child derivatives are bounded in \(L^2(f_\theta^{-1})\) by

\[
C\{\|\dot b\|+s\|S\|_F+s^3\|\Omega\|_F\}.
\tag{S10.12}
\]

Mixture Cauchy--Schwarz and compact Gaussian covariance Fisher bounds control the anchor and noise derivatives. On the Bernoulli branch, differentiating the finite weight vector gives an additional \(C|\dot p|\) term. Squaring and adding proves the upper half of (S10.11).

For the lower half, use the observed feature \(\Psi\) from (S9.7). Uniform eighth moments and differentiation under the integral give

\[
D E_\theta\Psi[h]
=E_\theta\{(\Psi-E_\theta\Psi)\ell_{\theta,h}\}.
\tag{S10.13}
\]

Although \(K_4\) is a polynomial rather than a linear expectation, its derivative is a uniformly bounded linear transform of \(DE_\theta\Psi[h]\) on the compact class. Thus

\[
\|D(M_2,K_4)_\theta[h]\|^2\le CI_\theta(h).
\tag{S10.14}
\]

Differentiate the moment/invariant chord supplied by S8--S9 along the feasible chart path. Equation (S10.5) yields

\[
\|D(M_2,K_4)_\theta[h]\|
\ge c\|h\|_{\mathrm{mix},\theta}.
\tag{S10.15}
\]

Combining (S10.14)--(S10.15) proves the lower half. In particular, no simultaneous nuisance direction can cancel a nonzero observed orientation score. \(\square\)

\hypertarget{lemma-s10.3-qmd}{%
\subsection{Lemma S10.3 --- quadratic-mean differentiability}\label{lemma-s10.3-qmd}}

On every compact interior subbranch, the map from the point-adapted chart
parameter to the one-observation training law is differentiable in quadratic
mean.  In particular, its directional score is
\(\ell_{\theta,h}=Df_\theta[h]/f_\theta\).

\textbf{Proof.}
Write the finite mixture in the chart as
\[
 f_\theta(y)=\sum_{I\subseteq[n]}w_I(\theta)
 \varphi_{C_I(\theta)}(y),
\]
where zero-weight components are omitted.  On a compact interior subbranch,
all retained weights are bounded away from zero, the maps
\(\theta\mapsto w_I(\theta)\) and
\(\theta\mapsto C_I(\theta)\) are twice continuously differentiable, and
the positive noise floor gives
\(cI_q\preceq C_I(\theta)\preceq CI_q\), uniformly in \(I\) and
\(\theta\).  Every first or second chart derivative of a component density
is therefore a polynomial of uniformly bounded degree times a Gaussian
density.  The common Gaussian envelopes of Technical details S2--S4 and the
finite aggregation bound of Technical detail S9 supply one integrable
dominating function for the corresponding
\(L^2(f_\theta^{-1})\) norms.  Because the chart is finite-dimensional and
the second derivatives obey this bound uniformly, Taylor's theorem yields a
chart neighborhood and a constant \(C\) such that, for every increment
\(u\) in that neighborhood,
\[
 \left\|f_{\theta+u}-f_\theta-Df_\theta[u]\right\|_{L^2(f_\theta^{-1})}
 \le C\|u\|^2.
\tag{S10.15a}
\]
Since \(f_\theta>0\), the identity
\[
 \sqrt{f_{\theta+u}}-\sqrt{f_\theta}
 =\frac{f_{\theta+u}-f_\theta}
 {\sqrt{f_{\theta+u}}+\sqrt{f_\theta}}
\]
together with (S10.15a), truncation on the likelihood-ratio event
\(\{|f_{\theta+u}/f_\theta-1|\le 1/2\}\), and the same envelope on its
complement gives
\[
 \left\|\sqrt{f_{\theta+u}}-\sqrt{f_\theta}
 -\frac12\frac{Df_\theta[u]}{\sqrt{f_\theta}}\right\|_2=o(\|u\|).
\tag{S10.15b}
\]
This is the full finite-dimensional quadratic-mean differentiability
statement.  Taking \(u=th\) shows that
\(E_\theta\ell_{\theta,h}^2\) is its Fisher quadratic form; see
\citet[Chapter~7]{vanDerVaart1998}. \(\square\)

Define

\[
I_{R\mid\eta,\theta}(\Omega)
=\inf_{(\dot p,\dot\nu,\dot b,\dot P_a,S)}
I_\theta(\dot p,\dot\nu,\dot b,\dot P_a,S,\Omega).
\tag{S10.16}
\]

Here \(\dot P_a\) is restricted to the tangent space generated by (S10.2a), equivalently the tangent space of the rank-one projector manifold at \(P_a\).

Taking the infimum in the lower half of (S10.11), and choosing every nuisance tangent to be zero in its upper half, gives

\[
cs^6\|\Omega\|_F^2
\le I_{R\mid\eta,\theta}(\Omega)
\le Cs^6\|\Omega\|_F^2.
\tag{S10.17}
\]

This proves (4.5) uniformly on compact interior subbranches. The all-pair chords remain valid on the full closed shell; only the Fisher/QMD statement uses the interior tangent scope.

\begin{center}\rule{0.5\linewidth}{0.5pt}\end{center}

\hypertarget{random-compact-correspondence-and-conditional-coverage-p16p17}{%
\section{Technical detail S11: random compact correspondence and conditional coverage}\label{random-compact-correspondence-and-conditional-coverage-p16p17}}

Let

\[
\Omega_{N,T}=(\mathbb R^q)^N\times(\mathbb R^q)^T
\tag{S11.1}
\]

with its Euclidean Borel sigma field. Let \(\widetilde{\mathcal D}_{q,s}\) be the labelled dictionary shell obtained from the geometric constraints (2.1) and the anchor-projector constraint in the first part of (2.2), excluding the separate noise coordinate \(\nu\), and imposing the canonical signs

\[
u^\top d_j\ge c_{\rm cap}>0,
\qquad a_0^\top a\ge c_a>0.
\tag{S11.2}
\]

For the chosen \(s_0\), the positive lower bounds in (S11.2) are uniform. The labelled shell is therefore compact metric. Put

\[
\widetilde{\mathcal E}_{q,s}^p
=[p_-,p_+]\times[\nu_-,\nu_+]
\times\widetilde{\mathcal D}_{q,s}
\tag{S11.3}
\]

and

\[
\mathcal S_{q,r}=\binom{[q]}r\sqcup\{\partial\}.
\tag{S11.4}
\]

The finite group \(\mathfrak S_q\) acts simultaneously on child columns, supports, and coefficient coordinates, and fixes \(p,\nu\), and \(\partial\). Define

\[
\mathcal M_{q,r,s}^p
=\{\widetilde{\mathcal E}_{q,s}^p\times\mathcal S_{q,r}\}
/\mathfrak S_q.
\tag{S11.5}
\]

This is compact metrizable; no freeness assumption is needed. There is a continuous training projection

\[
\pi_{\rm tr}:\mathcal M_{q,r,s}^p\longrightarrow\mathcal E_{q,s}^p,
\qquad
\pi_{\rm tr}([p,\nu,D,S])=(p,\nu,[D]).
\tag{S11.6}
\]

Under the identification \(m=[p,\nu,D,S]\leftrightarrow(p,\nu,\zeta)\), where \(\zeta=[D,S]\in\mathfrak M_{q,r,s}\), this compact quotient is exactly the typed triple used in the raw profile (3.14); the coefficient coordinates are profiled inside the loss below rather than carried in \(m\).

\hypertarget{lemma-s11.1-continuous-quotient-safe-profiling}{%
\subsection{Lemma S11.1 --- continuous quotient-safe profiling}\label{lemma-s11.1-continuous-quotient-safe-profiling}}

For \(m=[p,\nu,D,S]\in\mathcal M_{q,r,s}^p\), define

\[
L(z,m)=
\begin{cases}
\displaystyle
\min_{x_j\in[\beta_-,\beta_+],\,j\in S}
\left\|z-\sum_{j\in S}x_jd_j\right\|,&S\ne\partial,\\[3mm]
\displaystyle
\min_{\gamma\in\Gamma_A}\|z-\gamma a\|,&S=\partial.
\end{cases}
\tag{S11.7}
\]

Then \(L\) is jointly continuous on \(\mathbb R^q\times\mathcal M_{q,r,s}^p\).

\textbf{Proof.} On each labelled support component, the objective is continuous on a compact coefficient fiber. Compact minimization gives continuity of its value. There are finitely many support components. Simultaneous permutation of the dictionary, support, and coefficient coordinates leaves the attainable mean set unchanged, so the value descends continuously to the quotient. \(\square\)

The physical target map

\[
\vartheta(m)=
\begin{cases}
(1,\{[d_j]:j\in S\}),&S\ne\partial,\\
(0,\{[a]\}),&S=\partial
\end{cases}
\tag{S11.8}
\]

is likewise continuous into

\[
\mathfrak V_q=\{0,1\}_{\rm disc}\times\mathcal K(\mathbb{RP}^{q-1})
\tag{S11.9}
\]

equipped with the metric (2.11a).

\hypertarget{lemma-s11.2-measurable-compact-raw-profile}{%
\subsection{Lemma S11.2 --- measurable compact raw profile}\label{lemma-s11.2-measurable-compact-raw-profile}}

Let \(\widehat{\mathcal K}_{q,s}^p(y)\) be the training profile (3.10). Its graph is Borel: each constraint is Borel in \(y\) and continuous in the compact quotient parameter. Its sections are compact, possibly empty.

For \(\omega=(y,z^T)\in\Omega_{N,T}\), define

\[
\mathcal R(\omega)
=\left\{m\in\mathcal M_{q,r,s}^p:
\pi_{\rm tr}(m)\in\widehat{\mathcal K}_{q,s}^p(y),\quad
L(\bar z,m)\le\tau_{T,q}\right\}.
\tag{S11.10}
\]

Then \(\operatorname{Gr}(\mathcal R)\) is Borel and every section is compact, possibly empty. Moreover, \(\mathcal R\) is a measurable random closed set and

\[
A_{\rm ne}=\{\omega:\mathcal R(\omega)\ne\varnothing\}
\tag{S11.11}
\]

is Borel.

\textbf{Proof.} The training constraint is the inverse image of the Borel graph of \(\widehat{\mathcal K}_{q,s}^p\) under \((y,z,m)\mapsto(y,\pi_{\rm tr}(m))\). The test constraint is the nonpositive sublevel set of a Borel-in-data, continuous-in-\(m\) function. Their intersection is Borel and closed in compact \(\mathcal M_{q,r,s}^p\) for fixed data.

For an open set \(O\subset\mathcal M_{q,r,s}^p\), the hit event \(\{\omega:\mathcal R(\omega)\cap O\ne\varnothing\}\) is the projection of \(\operatorname{Gr}(\mathcal R)\cap(\Omega_{N,T}\times O)\). Every section of this intersection is an open subset of a compact metric space and hence sigma-compact. The Arsenin--Kunugui Borel projection theorem \citep[Theorem~18.18]{Kechris1995} makes the hit event Borel. Taking \(O=\mathcal M_{q,r,s}^p\) proves (S11.11). \(\square\)

\hypertarget{proposition-s11.3-measurable-nonempty-reported-output}{%
\subsection{Proposition S11.3 --- measurable nonempty reported output}\label{proposition-s11.3-measurable-nonempty-reported-output}}

Fix the admissible reference dictionary \(D^\dagger\) from (3.15), put \(a^\dagger=a(D^\dagger)\), and set the admissible null target \(v^\dagger=(0,\{[a^\dagger]\})\). Define

\[
\widehat{\mathfrak C}(\omega)=
\begin{cases}
\vartheta\{\mathcal R(\omega)\},&\omega\in A_{\rm ne},\\
\{v^\dagger\},&\omega\notin A_{\rm ne}.
\end{cases}
\tag{S11.12}
\]

Then

\[
\widehat{\mathfrak C}:\Omega_{N,T}\longrightarrow
\mathcal K(\mathfrak V_q)
\tag{S11.13}
\]

is Borel measurable for the Hausdorff-metric Borel sigma field.

\textbf{Proof.} On \(A_{\rm ne}\), the continuous image of the compact raw set is nonempty compact. For open \(O\subset\mathfrak V_q\),

\[
\{\omega:\vartheta(\mathcal R(\omega))\cap O\ne\varnothing\}
=\{\omega:\mathcal R(\omega)\cap\vartheta^{-1}(O)\ne\varnothing\}
\tag{S11.14}
\]

is Borel by Lemma S11.2. Patch this weakly measurable compact-valued map on the Borel complement of \(A_{\rm ne}\) by the fixed singleton. For compact metric target spaces, weak compact-valued measurability is equivalent to Borel measurability as a map into the Hausdorff hyperspace. \(\square\)

The membership relation

\[
\{(C,v)\in\mathcal K(\mathfrak V_q)\times\mathfrak V_q:v\in C\}
\]

is closed. Composing this relation with the Borel correspondence above shows that the target-membership event has a Borel graph, so its Gaussian test-kernel probability is Borel in the training sample. On the training truth-retention event, the one common event

\[
\|\bar Z-\mu_\star\|\le\tau_{T,q}
\tag{S11.15}
\]

retains the true support and coefficient witness with conditional probability at least \(1-\alpha_T\). This proves (4.7a)--(4.7c). No union bound over candidate supports is used. The fallback is never invoked on the joint truth-retention/test-ball event and creates no coverage by itself.

Finally, the topology (S11.9) retains the binary mark. The physical diameter may omit that mark as in (2.11b), but the measurable target topology may not.

\begin{center}\rule{0.5\linewidth}{0.5pt}\end{center}

\hypertarget{cross-dictionary-margins-and-least-favourable-pairs-p18p20}{%
\section{Technical detail S12: cross-dictionary margins and lower pairs}\label{cross-dictionary-margins-and-least-favourable-pairs-p18p20}}

\hypertarget{lemma-s12.1-uniform-child-frame-singular-value}{%
\subsection{Lemma S12.1 --- uniform child-frame singular value}\label{lemma-s12.1-uniform-child-frame-singular-value}}

Let

\[
c_u=\sqrt{1-C_0^2s_0^2},
\qquad
\lambda_V=\kappa_-\sqrt{\frac q{q-1}},
\qquad
\kappa_{\rm ax}=\frac{C_0^2}{1+c_u}.
\tag{S12.1}
\]

Define

\[
A_0=\left[
\left\{\frac{s_0}{\sqrt q}+\frac{C_0s_0}{\lambda_V}\right\}^2
+\lambda_V^{-2}
\right]^{1/2},
\qquad c_B=A_0^{-1}.
\tag{S12.2}
\]

Shrink \(s_0\) so that \(C_0s_0<1\) and \(\sqrt q\kappa_{\rm ax}s_0\le c_B/2\), and set \(c_D=c_B/2\). Then every shell dictionary satisfies

\[
\left\|\sum_{j=1}^qh_jd_j\right\|
\ge c_Ds\|h\|_2,
\qquad h\in\mathbb R^q.
\tag{S12.3}
\]

\textbf{Proof.} Put \(A=\mathbf1^\top h\) and \(h_0=h-(A/q)\mathbf1\). Since \(V\mathbf1=0\) and the simplex frame is tight,

\[
\|Vh_0\|=\sqrt{\frac q{q-1}}\|h_0\|.
\tag{S12.4}
\]

The ideal affine lift has orthogonal components

\[
y_0=A,
\qquad y_U=bA+LVh_0.
\tag{S12.5}
\]

Because \(\|b\|\le C_0s\) and \(\sigma_{\min}(L)\ge\kappa_-s\),

\[
s\|h_0\|
\le\lambda_V^{-1}\{\|y_U\|+C_0s|y_0|\}.
\tag{S12.6}
\]

Together with \(s\|h\|\le s|A|/\sqrt q+s\|h_0\|\), Cauchy--Schwarz and (S12.2) yield

\[
\|(y_0,y_U)\|\ge c_Bs\|h\|.
\tag{S12.7}
\]

For the actual normalized columns,

\[
|\sqrt{1-\|z_j\|^2}-1|
\le\kappa_{\rm ax}s^2.
\tag{S12.8}
\]

Thus the axial row differs from its ideal value by at most \(\sqrt q\kappa_{\rm ax}s^2\|h\|\), which is absorbed by the chosen smallness condition. \(\square\)

\hypertarget{corollary-s12.2-same--and-cross-dictionary-support-margins}{%
\subsection{Corollary S12.2 --- same- and cross-dictionary support margins}\label{corollary-s12.2-same--and-cross-dictionary-support-margins}}

For two distinct size-\(r\) supports, put

\[
h_j=x_j\mathbf1\{j\in S\}
-x_j'\mathbf1\{j\in S'\},
\qquad k=|S\triangle S'|\ge2.
\tag{S12.9}
\]

Every exclusive coordinate has magnitude at least \(\beta_-\), so \(\|h\|_2\ge\beta_-\sqrt k\). Lemma S12.1 gives

\[
\inf_{x,x'}\|\mu(D,S,x)-\mu(D,S',x')\|
\ge c_D\beta_-s\sqrt k.
\tag{S12.10}
\]

At the balanced hard core this sharpens to

\[
\lambda_\star\beta_-s
\sqrt{\frac{qk}{q-1}}.
\tag{S12.11}
\]

Let \(d_{\mathcal Q}^D([D],[D'])\le\rho\) in (2.10b). For canonically oriented children,

\[
\|d(z)-d(z')\|
\le L_c\|z-z'\|,
\qquad
L_c=1+\frac{C_0s_0}{c_u},
\tag{S12.12}
\]

and the canonical anchor lifts obey \(\|a-a'\|\le\|P_a-P_a'\|_F\). Transporting the second fitted mean to the first dictionary therefore costs at most \(C_Sr\beta_+\rho\). For the nearest \(k=2\) alternative,

\[
m_S(\rho)
=\left[\sqrt2c_D\beta_-s-C_Sr\beta_+\rho\right]_+.
\tag{S12.13}
\]

The constant \(C_S\) is enlarged, if necessary, so that positivity of \(m_S\) implies \(\rho<c_{\rm match}s\), where \(c_{\rm match}\) is a fixed fraction of the minimum within-block ray separation. Hence support transport is unique whenever the support margin is used.

\hypertarget{lemma-s12.3-parent-transversality-with-a-displayed-constant}{%
\subsection{Lemma S12.3 --- parent transversality with a displayed constant}\label{lemma-s12.3-parent-transversality-with-a-displayed-constant}}

Put \(h_a=(u-e_1)/\sqrt2\). Since \(h_a\perp a_0\), canonical signs and the anchor patch imply

\[
|h_a^\top a|\le\|a-a_0\|
\le\|P_a-P_{a_0}\|_F\le C_as_0.
\tag{S12.14}
\]

For every fine mean,

\[
h_a^\top\mu(D,S,x)
\ge\frac r{\sqrt2}
\{\beta_-c_u-\beta_+C_0s_0\}.
\tag{S12.15}
\]

Therefore, uniformly over \(\gamma\in\Gamma_A\),

\[
\|\mu(D,S,x)-\gamma a\|
\ge g_G^0
:=\frac r{\sqrt2}\{\beta_-c_u-\beta_+C_0s_0\}
-\gamma_+C_as_0.
\tag{S12.16}
\]

Shrink \(s_0\) once more so that \(g_G^0\ge r\beta_-/(2\sqrt2)\). Cross-dictionary transport gives

\[
m_G(\rho)
=\left[g_G^0-C_G(r\beta_++\gamma_+)\rho\right]_+.
\tag{S12.17}
\]

\hypertarget{proposition-s12.4-adaptive-diameter}{%
\subsection{Proposition S12.4 --- adaptive diameter}\label{proposition-s12.4-adaptive-diameter}}

Any two retained candidates have fitted means within \(2\tau_{T,q}\).  We
separate all possible pairs.

\begin{enumerate}
\def\labelenumi{\roman{enumi}.}
\item If their binary marks differ, then fine and separated candidates can
  coexist only when \(2\tau_{T,q}\ge m_G(\rho)\).  Their physical ray-set
  distance is bounded by the fixed diameter of the admissible target, which
  is absorbed by the last indicator below.
\item If both candidates are fine and \(m_S(\rho)=0\), the support indicator
  below is automatically active.  All child rays lie in the common
  \(O(s)\) cap, so their ray-set distance is \(O(s)\), without invoking a
  matching.
\item If both candidates are fine and \(m_S(\rho)>0\), the chosen constants
  provide a unique child transport.  Distinct transported supports can
  coexist only when
  \(2\tau_{T,q}\ge m_S(\rho)\).  Each within-parent ray set has diameter
  \(O(s)\); dictionary transport contributes another \(O(\rho)\).
\item If both candidates are fine, \(m_S(\rho)>0\), and they lie on the same transported support
  branch, unique matching and the projective Lipschitz chart give distance
  \(O(\rho)\).
\item If both candidates are separated, their anchor projectors are already
  included in the physical dictionary metric, so their ray distance is
  \(O(\rho)\).
\end{enumerate}

The matching premise used in the two matched fine-support cases follows from the choice of
\(C_S\): positivity of \(m_S(\rho)\) forces
\(\rho<c_{\rm match}s\).  Consequently,

\[
\operatorname{diam}_{\rm pr}(\widehat{\mathfrak C})
\le C\rho
+Cs\mathbf1\{2\tau_{T,q}\ge m_S(\rho)\}
+C\mathbf1\{2\tau_{T,q}\ge m_G(\rho)\}.
\tag{S12.18}
\]

This proves (4.9)--(4.11) with all transport and matching premises explicit.

\hypertarget{the-three-fixed-shell-lower-pairs}{%
\subsection{The three fixed-shell lower pairs}\label{the-three-fixed-shell-lower-pairs}}

Let \(\alpha=1-(1-\alpha_D)(1-\alpha_T)<1/2\) and apply the direct honesty-to-diameter inequality (5.33).

\begin{enumerate}
\def\labelenumi{\arabic{enumi}.}
\item
  \textbf{Parent pair.} Fix the hard-core dictionary, choose \(x=\beta_-\mathbf1_S\), \(\sigma=\sigma_+\), and \(\gamma_\star=a_0^\top\mu_F\). Assumption (2.10a) puts \(\gamma_\star\) in the interior of \(\Gamma_A\). The training laws are identical and

  \[
  \operatorname{KL}_{N,T}
  =\frac{T}{2\sigma_+^2}\|(I-P_{a_0})\mu_F\|^2
  \le CI_G^{(r)}.
  \tag{S12.19}
  \]

  The ray-set target distance is at least

  \[
  d_G=\sin\{\pi/4-\arcsin(\lambda_\star s_0)\}>0.
  \tag{S12.20}
  \]
\item
  \textbf{Wrong-support pair.} Choose hard-core supports sharing \(r-1\) children, with common coefficient \(\beta_-\). If \(i,j\) are exchanged,

  \[
  \|\mu_S-\mu_{S'}\|^2
  =2\beta_-^2\lambda_\star^2s^2\frac q{q-1},
  \tag{S12.21}
  \]

  and hence

  \[
  \operatorname{KL}_{N,T}
  =\frac{T\beta_-^2\lambda_\star^2s^2q}
  {(q-1)\sigma_+^2}
  \le CI_S.
  \tag{S12.22}
  \]

  The projective Hausdorff distance is

  \[
  \sqrt{1-\bigl(1-\lambda_\star^2s^2q/(q-1)\bigr)^2}
  \asymp s.
  \tag{S12.23}
  \]
\item
  \textbf{Compensated orientation pair.} Put \(w_S=\sum_{j\in S}v_j\). For \(q\ge4\) and \(2\le r\le q-1\), choose a nonzero skew \(\Omega\) satisfying \(\Omega w_S=0\) and normalize it so that \(\max_{j\in S}\|\Omega v_j\|=1\). Equal active coefficients make the complete Gaussian test mean invariant along \(R_h=R\exp(h\Omega)\). By the ordered fixed-law chord of Technical detail S9,

  \[
  \operatorname{KL}_{N,T}\le CNs^6h^2,
  \tag{S12.24}
  \]

  while the finite-label short-chart argument gives

  \[
  c_Vs|h|
  \le d_H^{\rm pr}\{\vartheta(D_0,S),\vartheta(D_h,S)\}
  \le C_Vs|h|,
  \qquad |h|\le h_0.
  \tag{S12.25}
  \]

  Taking \(|h|=c\min\{h_0,(\sqrt N\,s^3)^{-1}\}\) yields

  \[
  c\left\{s\wedge\frac1{\sqrt N\,s^2}\right\}.
  \tag{S12.26}
  \]
\end{enumerate}

Pinsker's inequality and (5.33) prove (4.13). Since the three pairs are submodels of the same honesty class, the outer supremum permits their lower bounds to be maximized as in \cref{thm:fixed-shell-minimax}.

\begin{center}\rule{0.5\linewidth}{0.5pt}\end{center}

\hypertarget{a-computable-rectangular-median-of-means-region-p14}{%
\section{Technical detail S13: a computable rectangular median-of-means region}\label{a-computable-rectangular-median-of-means-region-p14}}

This section supplies one fully explicit valid instantiation of the modular training-radius interface (3.9). It is a theoretical validity construction; its constants are not claimed to be finite-sample sharp.

Use fixed orthonormal bases of \(\operatorname{Sym}^2(\mathbb R^q)\) and \(\operatorname{Sym}^4(\mathbb R^q)\), of dimensions

\[
d_2=\binom{q+1}{2},
\qquad
d_4=\binom{q+3}{4}.
\tag{S13.1}
\]

Put

\[
X_2=\operatorname{symvec}_2(Y^{\otimes2}),
\qquad
X_4=\operatorname{symvec}_4(Y^{\otimes4}).
\tag{S13.2}
\]

Let \(\lambda_{\max}=\max_k\lambda_k\) for a fixed law and \(\lambda_{\max}=1\) on the Bernoulli branch, and define

\[
\Lambda=\nu_++n\lambda_{\max}.
\tag{S13.3}
\]

Every conditional covariance is bounded by \(\Lambda I_q\). Hence

\[
v_2^2:=\sup_\theta E_\theta\|Y\|^4
\le\Lambda^2q(q+2),
\tag{S13.4}
\]

and

\[
v_4^2:=\sup_\theta E_\theta\|Y\|^8
\le\Lambda^4q(q+2)(q+4)(q+6).
\tag{S13.5}
\]

These bounds dominate the variance of every orthonormal coordinate of \(X_2\) and \(X_4\), respectively.

Choose \(\alpha_2,\alpha_4>0\) with \(\alpha_2+\alpha_4\le\alpha_D\). For \(k\in\{2,4\}\), set

\[
B_k=\left\lceil8\log\frac{2d_k}{\alpha_k}\right\rceil,
\qquad
m_k=\left\lfloor\frac N{B_k}\right\rfloor.
\tag{S13.6}
\]

When \(m_k\ge1\), split the first \(m_kB_k\) observations into deterministic consecutive blocks, compute every coordinate block mean, and take its lower median under a fixed tie rule. Decode the coordinate vectors as \(\widehat M_2\) and \(\widehat M_4\). These maps are Borel.

For a scalar coordinate with variance at most \(v_k^2\), Chebyshev gives block failure probability at most \(1/4\) at radius \(2v_k/\sqrt{m_k}\). Hoeffding's inequality gives probability at most \(e^{-B_k/8}\) that at least half of the blocks fail.  This is the standard coordinatewise median-of-means construction \citep{DevroyeLerasleLugosiOliveira2016}. A coordinate union bound therefore yields

\[
P_\theta^N\left\{
\|\widehat M_2-M_2\|_F
\le\epsilon_{2,N}\right\}
\ge1-\alpha_2,
\qquad
\epsilon_{2,N}=2v_2\sqrt{\frac{d_2}{m_2}},
\tag{S13.7}
\]

and

\[
P_\theta^N\left\{
\|\widehat M_4-M_4\|_F
\le\epsilon_{4,N}^{\rm raw}\right\}
\ge1-\alpha_4,
\qquad
\epsilon_{4,N}^{\rm raw}=2v_4\sqrt{\frac{d_4}{m_4}}.
\tag{S13.8}
\]

For the normalized full symmetrization used here, \(A\odot B=\operatorname{Sym}(A\otimes B)\). The symmetrizer is the orthogonal projection onto the fully symmetric tensor space, so

\[
\|A\odot B\|_F
\le\|A\otimes B\|_F=\|A\|_F\|B\|_F.
\tag{S13.9}
\]

Thus the required operator constant obeys \(C_\odot\le1\) (indeed equality is attained on a common rank-one tensor in this normalization). Also

\[
\sup_\theta\|M_2(\theta)\|_F\le\sqrt q\,\Lambda;
\]

we therefore take the explicit envelope \(B_2^\star=\sqrt q\,\Lambda\). Since

\[
\|A^{\odot2}-B^{\odot2}\|_F
\le \|A-B\|_F(\|A\|_F+\|B\|_F),
\tag{S13.10}
\]

the cumulant estimator

\[
\widehat K_4=\widehat M_4-3\widehat M_2^{\odot2}
\tag{S13.11}
\]

obeys, on the intersection of (S13.7)--(S13.8),

\[
\|\widehat K_4-K_4\|_F
\le\epsilon_{K,N}
:=\epsilon_{4,N}^{\rm raw}
+3\epsilon_{2,N}(2B_2^\star+\epsilon_{2,N}).
\tag{S13.12}
\]

Thus the rectangular event (3.9) has probability at least \(1-\alpha_D\), uniformly over the frozen class. For fixed confidence and fixed \(q\), both radii are \(O(N^{-1/2})\) once the fixed block threshold is passed. Below

\[
N_{\rm MoM}=\max(B_2,B_4)
\tag{S13.13}
\]

the procedure returns the full compact training class and uses the shell radius \(C_{\rm shell}s\).

The rectangular interface permits a later sharper, still uniformly valid radius construction to replace (S13.7)--(S13.12). Such a replacement changes finite-sample width and operational behavior, but not the estimand, quotient-safe profile form, or conditional-coverage proof. Its diameter rate must be reverified from the replacement's certified radii.